\documentclass[twoside]{article}
\usepackage[preprint]{aistats2027}

\usepackage{amsmath, amssymb, amsthm}
\usepackage{bm, bbm}
\usepackage{enumitem} 
\usepackage{mathtools}
\usepackage{graphicx}
\usepackage[ruled]{algorithm2e}
\usepackage{algorithmic}
\usepackage{accents}
\usepackage{xcolor}
\usepackage{color}
\usepackage{booktabs}
\usepackage[colorlinks=true, allcolors=blue]{hyperref}
\usepackage[textwidth=2.0cm, textsize=scriptsize]{todonotes}
\usepackage{comment}
\usepackage{subcaption}
\usepackage[nice]{nicefrac}
\usepackage{preamble}
\usepackage[round]{natbib}

\usepackage{accents}

\begin{document}

\twocolumn[

\aistatstitle{Hidden Disparities in Ranking:\\ Category-Aware Fairness via Distributional Stability}

\aistatsauthor{ Gayane Taturyan \quad Charlotte Laclau \quad  Stephan Clémençon }

\aistatsaddress{ LTCI, Télécom Paris\\Institut Polytechnique de Paris, France} ]

\begin{abstract}
We study the problem of fair ranking in recommender systems. We provide a
formal probabilistic framework in which ranking policies are distributions
over permutations, and the utility of a user is the exposure given to the
items they interact with, relative to that of an ideal ranking. We consider utility parity across user groups, as well as its category-aware refinement, conditional utility parity, which requires parity within each category of items. Following a recent view of fairness
as distributional stability, we show that both criteria are equivalent to
the stability of the utility under changes in the composition of user
groups. Optimizing over mixtures of ranking policies leads to a second-order
cone program, for which we provide generalization bounds. A numerical study
validates our theoretical findings.
\end{abstract}

\section{INTRODUCTION}
\label{sec:intro}

Recommender Systems (RS), often trained as learning-to-rank models, play an increasing role in access to opportunities. On job platforms, the offers displayed to a candidate are ordered according to their predicted relevance to the candidate's profile. When trained on historical interaction data, these models can reproduce disparities related to sensitive attributes such as gender. For example, women may see fewer high-paying or high-responsibility positions than men \citep{Basileal2021}. Since candidates primarily interact with the highest-ranked offers, what each user sees at the top of their ranking determines the quality of service they receive, and can ultimately affect their career path. 

A natural answer is to require comparable utility across user groups, where the utility of a ranking measures how much exposure it gives to the offers the user is interested in. We refer to this criterion as utility parity. Existing works on user-side fairness in recommendation compare a single utility, aggregated over all offers \citep{li2021-user-fairness, wang2021, yao2017}. Such aggregate parity can be misleading: a ranking model may satisfy it while favouring one group on some categories of offers, such as administrative positions, and the other group on others, such as managerial positions, so that the gaps cancel out in aggregate. A similar phenomenon is known in classification, where parity on a population can hide disparities within its subgroups \citep{pmlr-v80-kearns18a}. Recent work has pointed out this limitation and proposed item category-aware criteria \citep{kheya2025,wang2024}. 

Parity, whether aggregate or per category, also has a dynamic reading. On a job platform, the share of women among job seekers may change over time, for instance when the platform opens to new sectors or regions, and as long as a gap remains, the average quality of the recommendations rises or falls with this balance. We build on this reading, drawing inspiration from \citet{stable2026}, who view fairness in classification as the stability of a prediction rate under perturbations of group proportions, and learn fair classifiers as mixtures of reweighted predictors. Extending this view to ranking is far from straightforward, due to the \textit{global} nature of this learning task. For each observation, a classifier predicts a label, which can be modified without changing the other predicted labels. In contrast, a ranking policy organizes the entire catalogue: each item is assigned a specific position in the list, so that its exposure depends on the position of all other items. And changing the rank of one item can affect the rank of (possibly many) other items. We therefore represent ranking policies through the exposure they allocate. The utility of a user then depends linearly on this exposure, so that average utility plays the role of the prediction rate, and the utility of a mixture of policies is exactly the mixture of their utilities. The stability view thus carries over to ranking, and the resulting problem is solved exactly over mixtures of policies, whereas in classification the risk of a mixture is handled through a surrogate.


{The framework has several benefits. First,} since the utility of a ranking is additive over the offers it contains, it decomposes according to their category, and category-aware constraints arise naturally in this formulation, leading to what we call conditional utility parity. 
Second, utility parity is equivalent to the stability of utility under changes in the proportion of each user group, so that the guarantees hold uniformly over a range of group compositions.
{Third,} the constraints can include a 
{target} of utility level per category, which prevents parity from being reached by lowering the utility that both groups derive from a category. 

\textbf{Contributions} First, we provide a formal probabilistic framework for recommendation setup (Section~\ref{sec:prelim}). Second, we formalize utility parity and
conditional utility parity as stability properties, which yields a
constraint controlling both the group gap and the utility level
(Section~\ref{sec:fairness}).
Third, considering mixtures of policies, we write the empirical problem as a second-order cone program (Section~\ref{sec:methodo}) and discuss its feasibility. Fourth, we provide
finite-sample guarantees on fairness, deployment stability and utility,
with explicit exposure rates for Plackett--Luce policies (Section~\ref{sec:guarantees}). Finally,
we conduct numerical experiments that confirm our theory in practice (Section~\ref{sec:experiments}).

\textbf{Related Works} There is a large amount of literature on fair ranking, most of which focus on item-side fairness through allocated exposure. A seminal work of \citet{singh} formulates fairness of exposure as a linear program over doubly stochastic matrices, from which stochastic rankings are sampled
through a Birkhoff--von Neumann decomposition. \citet{wang2021} build on this approach to enforce user fairness, item fairness and diversity together: they optimize both notions of fairness
as a convex program, and derive the ranking policy through a new
decomposition algorithm. Exposure-based fairness has also been integrated in the training of ranking policies, by
backpropagating through constrained optimization problems~\citep{kotary22,dinh2024}, or through efficient gradient
estimators for Plackett--Luce policies~\citep{Oosterhuis2021}.

Closest to our work are the category-aware criteria of~\citet{kheya2025}
and~\citet{wang2024}. \citet{kheya2025} compare the share of each category
in the top-$K$ lists of men and women and use the resulting gap as a regularizer during training. \citet{wang2024} require a recall-based metric of relevant items to be similar across all pairs of user and item groups, and reweight the training losses with a variant of group distributionally robust optimization. Our criteria differ from both. Unlike the former, our utility only
rewards exposure to relevant offers. Unlike the latter, we compare
user groups within each category, without asking categories to be treated
equally. 

Overall, despite the rich literature, most fair ranking methods are algorithmic and empirical, and do not rely on a probabilistic framework that would allow for statistical guarantees. The existing guarantees are mostly of a different nature. For example, fair rank aggregation algorithms come with approximation guarantees which bound how far their solution can be from the optimum~\citep{NEURIPS2022_974309ef,chakraborty2026fairness, chakraborty2025improved}. Statistical guarantees have been provided in other setups, for instance in learning to rank, where a stochastic ranker built on a pretrained model is calibrated post hoc to guarantee a target level of utility,
while item-exposure fairness is improved but not guaranteed
\citep{guo2024inferencetimestochasticrankingrisk}, or in
recommendation, where an online bandit algorithm decides with high probability whether some user would prefer another user's
recommendations, without degrading the audited system's performance by more than a fixed fraction \citep{do2022}. In contrast, we provide finite-sample guarantees on both the user-side
category-aware fairness and the utility of a learned ranking policy, with
fairness gaps and utility levels controlled uniformly over a range of group compositions. As a result, we observe a clear trade-off curve between utility and fairness, which methods without
guarantees cannot ensure.


\textbf{Notation} Random variables are written in uppercase letters (\eg $G$), their realizations in corresponding lowercase letters (\eg $g$). Vectors are set in boldface (\eg $\bX$ and $\bx$). For an integer \(L>0\), we define \([L]\eqdef \{1,\cdots,L\}\) and write $\Delta_L$ for the probability simplex in $\bbR^L$. We denote by $\fS_L$ the set of all permutations of $[L]$.

\section{BACKGROUND AND PRELIMINARIES}
\label{sec:prelim}
In this section, we provide a formal probabilistic framework for the recommendation setup. 
\subsection{Users, items, relevance}
We consider a recommendation setting where the system ranks a fixed collection of items for a population of users. We denote the fixed finite \emph{item catalogue} by $\cI\eqdef[m]$. In addition, we assume that the item catalogue is partitioned into $A$ disjoint groups, $\cI=\bigcup_{a=1}^{A}\cI_a$, where each $\cI_a$ represents a collection of items sharing a common attribute of interest (\eg category). A \emph{user} is modeled as a random triplet $\bZ = (\bX, G, \bY)$, taking values in $\Z \eqdef \X \times \G \times \Y$, and defined on a probability space $(\Omega,\cF,\Prob)$ with joint law $\cD\in\cM_1(\Z)$. Here, $\bX\in\cX$ is the user--query feature vector, that may contain a static profile of user characteristics, as well as session-level information (\eg search terms, active filters, and other contextual features). A binary \emph{sensitive attribute} is denoted by $G\in\{0,1\}$ and represents a protected characteristic of a user, such as race or gender, with a marginal $p_g\eqdef\Prob(G=g)$. Finally, $\bY=(Y_i)_{i\in[m]}\in\{0,1\}^m$ denotes the relevance vector of the user, where $Y_i = 1$ if the user interacted with item $i$ (\eg clicked, applied), and $Y_i = 0$
otherwise.



\subsection{Ranking policies, exposure, utility}
\begin{definition}[Ranking policy] 
\label{def:policy}
A \emph{ranking policy} is a Markov kernel
$P:\X\to\cM_1(\fS_m)$, $\bx\mapsto P(\,\cdot\mid\bx)$. Given $\bX=\bx$ the system draws a ranking
$\pi\sim P(\,\cdot\mid\bx)$, and $\pi(j)$ is the item placed at position $j\in[m]$. 
\end{definition}
We operate in an \emph{unaware} setup, where the policy does not have access to sensitive attribute $G$ at prediction time. The policy can also be summarised by its position marginals.
\begin{definition}[Marginal rank matrix]
    \label{def:policy-matrix} 
    For a policy $P$ and $\bx\in\cX$, the \emph{marginal rank matrix} is $\bM^{P}(\bx)\in[0,1]^{m\times m}$ with
\(
  \bM^{P}_{ij}(\bx)\eqdef \Prob_{\pi\sim P(\cdot\mid\bx)}\!\big(\pi(j)=i\big) \enspace.
\)
\end{definition}
Each row and column of $\bM^P$ sums to one, making $\bM^P$ doubly stochastic, and thus a point of the Birkhoff polytope $\fB_m$. This formulation accommodates both deterministic and stochastic ranking policies.


\begin{example}[Deterministic sorting policy] A scoring function $s_\theta : \X \times \cI \to \bbR$ induces a deterministic sorting policy $P_\theta (\pi\mid\bx) = \ind{\pi=\pi_\theta(\bx)}\enspace,$ where $\pi_\theta(\bx) = \argsort_{i\in[m]} s_\theta (\bx,i) \enspace$ ranks the items by decreasing score.
\end{example}

\begin{example}[Plackett--Luce policy~\citep{Plackett1975}] A scoring function $s_\theta:\X\times\cI\to\bbR$ induces the Plackett-Luce policy 
\(
P_\theta(\pi\mid\bx)=\prod_{j=1}^m\frac{\exp(s_\theta(\bx,\pi(j))}{\sum_{i=j}^m\exp(s_\theta(\bx,\pi(i))} \enspace,
\) 
from which rankings are sampled by successive Gumbel-max draws. 
\end{example}

Considering that higher-ranked items receive more attention, let $\bv=(v_1,\dots,v_m)$ be a \emph{position-discount vector} with $v_1\geq\cdots\geq v_m\geq 0$. Assuming $\bv\neq\mathbf{0}$, without loss of generality, we normalize so that $\norm{\bv}_1=1$ 
. The position-discount vector allows us to quantify the exposure that each item receives under a given policy.

\begin{definition}[Expected exposure] The expected exposure of item $i\in\cI$ under policy $P$ for a user with features $\bx$ is
\(
\rho_i(P;\bx) \eqdef \Exp_{\pi\sim P(\cdot\mid\bx)}\!\Big[\sum_{j=1}^m v_j\,\ind{\pi(j)=i}\Big] \enspace.
\)
\end{definition}
Equivalently, using the marginal rank matrix notation, we have $\rho_i(P;\bx) =\sum_{j=1}^m v_j\,\bM^P_{ij}(\bx) = \big(\bM^P(\bx)\,\bv\big)_i$ and $\brho(P;\bx)\eqdef\bM^P(\bx)\,\bv\enspace.$ The notion of exposure leads to the utility of the ranking policy, which measures the quality of the policy by aligning the allocated exposure with user's relevant items. 
\begin{definition}[Utility]
The \emph{utility} of $P$ for a user $\bz$ is defined as 
\(
U (P; \bz) \eqdef \sum_{i\in[m]} \rho_i (P;\bx) y_i = \scalar{\brho(P;\bx)}{\by}\enspace.
\)
\end{definition}
However, the given notion of the utility is not comparable across users, as users with fewer interactions have lower overall utility.
\begin{definition}[Normalized utility] We define the normalized utility of $P$ for a user $\bz$ as 
\(
\overline{U}(P;\bz)\eqdef\frac{U(P;\bz)}{V(\bz)} \enspace, 
\)
where 
\(
V(\bz)\eqdef\max\limits_{\bM\in\cB}\scalar{\bM\bv}{\by}=\sum_{j=1}^{\norm{\by}_1}v_j\enspace.
\)
\end{definition}
\begin{assumption}
\label{ass:v-bd} It holds that $\|\bY\|_1\geq1$, hence $V(\bZ)\geq\ubar v>0$, almost surely.
\end{assumption}
The expected normalized utility is defined as 
\(
\cU(P) \eqdef \Exp[\overline{U} (P; \bZ)] \enspace,
\)
and the group-conditional normalized utility  as
\(
\cU_g(P) \eqdef \Exp[\overline{U} (P; \bZ) | G=g]\enspace.
\)
\begin{remark}[Connection to NDCG@$K$]\label{rem:ndcg} The normalized discounted cumulative gain at cutoff $K$ (NDCG@$K$) is one of the most popular metrics used for evaluating rankings. For a
ranking $\pi$ and labels $y$, it is defined as
\(
  \mathrm{NDCG@}K(\pi; \by)=
  \frac{\mathrm{DCG@}K(\pi; \by)}{\max_{\pi' \in \mathfrak{S}_m}
  \mathrm{DCG@}K(\pi'; \by)},
\)
where
\(
  \mathrm{DCG@}K(\pi; \by)= \sum_{j=1}^{K}
  \frac{\by_{\pi(j)}}{\log_2(1+j)}
\). Our normalized utility recovers its binary-relevance case with the truncated logarithmic discount $v_j=\frac{c\ind{j \leq K}}{\log_2(1+j)}$, where $c>0$ normalizes
$\|v\|_1=1$. 
It yields 
\(
\oU (P;\bz) = \Exp_{\pi\sim P(\cdot\mid\bx)} [\mathrm{NDCG@}K(\pi; \by)]\enspace.
\)
\end{remark}

\subsection{Fairness as stability}
In a recent work, \cite{stable2026} define fairness in classification as a property of distributional stability. A classifier $h:\X \to \{0,1\}$ is $\varepsilon$--fair \wrt class of perturbation operators $\mathfrak{P}$, iff
\(
 \sup_{(\mathcal{D},\bP)\in \mathcal{M}\times \mathfrak{P}}\Delta(h;\; (\mathcal{D},\bP)) \leq \varepsilon \enspace,
\)
where $\Delta(h;\; (\mathcal{D},\bP))$ is a statistical distance between the distributions of the prediction $h(X)$ under $\mathcal{D}\in \mathcal{M}$ and under $\bP(\mathcal{D})$. The authors show, that the individual and group fairness can both be expressed as stability under specific choices of $\mathfrak{P}$ and $\Delta$. For group fairness, the perturbations change
the composition of the sensitive groups.

\section{UTILITY PARITY AND DISTRIBUTIONAL STABILITY}
\label{sec:fairness}
In this section, we consider recommendation utility based fairness constraints and then express them as stability under distributional changes.
\subsection{Utility-based parities}
First, we introduce a rather natural notion of Utility Parity, that requires the expected recommendation quality to be the same across user groups.
\begin{definition}[Utility Parity]\label{def:up}
A policy $P$ satisfies \emph{Utility Parity} (UP) iff $\cU_0(P)=\cU_1(P)$.
\end{definition}
Next, we introduce Conditional Utility Parity, that requires parity within each category $\cI_a$. For $a\in[A]$, denote the normalized utility within the category by
\[
\oU_a(P,\bz)\eqdef\frac{\sum_{i\in\cI_a}\rho_i(P;\bx)y_i}{V(\bz)} \enspace, 
\]
and the respective expectations by $\cU^a (P) \eqdef \Exp [\oU_a(P,\bZ)]$ and $\cU^a_g (P) \eqdef \Exp [\oU_a(P,\bZ) \mid G=g]$.

\begin{definition}[Conditional Utility Parity]\label{def:cup}
A policy $P$ satisfies \emph{Conditional Utility Parity} (CUP) iff $\cU^a_0(P)=\cU^a_1(P)$ for every $a\in[A]$.
\end{definition}
Notice that, $\oU(P,\bz) = \sum_{a\in[A]}\oU_a(P,\bz)$ and $\cU_g(P) = \sum_{a\in[A]} \cU^a_g (P)$.
Therefore, CUP implies UP, and enforcing just CUP is enough to ensure both. 



\subsection{Group perturbations and utility rates}
We consider a distributional perturbation, where we change the sensitive group proportions, while keeping the conditionals the same. For a parameter $\xi\in[0,1]$, we define the following perturbed distribution,
\[
\cD_\xi \eqdef \xi \cD(\cdot \mid G=1) + (1-\xi)\cD(\cdot \mid G=0) \enspace.
\]
To ensure that both groups are always present in initial and perturbed distributions, we enforce the following natural assumption.
\begin{assumption}[Bounded proportions]\label{ass:groups}
$p_g\in[\ubar p,\,1-\ubar p]$ and $\xi\in[\ubar\xi,\,1-\ubar\xi]$ for some $\ubar \xi,\ubar p\in(0,\tfrac{1}{2})$ and $g\in\{0,1\}$.
\end{assumption}
Under Assumption~\ref{ass:groups}, we have $\cD_\xi\ll \cD$. Thus, computing Radon--Nikodym derivative $w_\xi(g)\eqdef\frac{\d \cD_\xi}{\d \cD}(g)$, we get the explicit importance weights
\begin{equation}\label{eq:weights}
w_\xi(g) = \frac{\xi}{p_1}\ind{g=1} + \frac{1-\xi}{p_0}\ind{g=0} \enspace.
\end{equation}
Moreover, for every integrable $f$, it holds that
\[
\Exp_{\cD_\xi}[f(\bZ)]=\Exp_\cD \big[w_\xi(G)\,f(\bZ)\big] \enspace.
\]

\begin{definition}[Utility rate functionals] For a ranking policy $P$ and a perturbation parameter $\xi\in[0,1]$, we define the following rate functionals
\begin{align*}
    \Phi(P;\xi)&\eqdef\Exp_{\cD_\xi}[\oU(P;\bZ)] = \Exp[w_\xi(G)\oU(P;\bZ)] \text{ and}\\
    \Phi_a(P;\xi)&\eqdef\Exp_{\cD_\xi}[\oU_a(P;\bZ)] \\
    &\,= \Exp[w_\xi(G)\oU_a(P;\bZ)] \text{ for all } a\in[A] \enspace.
\end{align*}
\end{definition}
We notice that $\Phi(\cdot\,;\,\cdot)=\sum_{a\in[A]}\Phi_a(\cdot\,;\,\cdot)$. Moreover, both functionals admit an affine decomposition through the law of total expectation and \eqref{eq:weights}. For $\Phi_{\smsq}\in\{\Phi;\Phi_a\}$ and $\oU_{\smsq}\in\{\oU;\oU_a\}$, it holds that
\begin{align}
    \label{eq:Phi-decomp}
    &\Phi_{\smsq}(P;\xi)=\xi\,\Exp[\oU_{\smsq} \mid G=1]+(1-\xi)\Exp[\oU_{\smsq}  \mid G=0] \nonumber \\
    &= \Exp[\oU_{\smsq}  \mid G=0] + \xi (\Exp[\oU_{\smsq}   \mid G=1] - \Exp[\oU_{\smsq}   \mid G=0]) \enspace.
\end{align}
This allows us to formulate the following proposition.
\begin{proposition}[Parity and stability]\label{prop:par-stab} Let $P$ be a ranking policy. Then

1. $P$ satisfies UP $\Leftrightarrow$ $\xi\mapsto\Phi(P;\xi)$ is constant on [0,1]. Moreover, the unfairness gap is 
\[
|\gamma(P)| \eqdef \sup_{\xi,\xi'\in[0,1]}\big|\Phi(P;\xi)-\Phi(P;\xi')\big| \enspace.
\]

2. $P$ satisfies CUP $\Leftrightarrow$ $\xi\mapsto\Phi_a(P;\xi)$ is constant on [0,1] for all $a\in[A]$. Moreover, the unfairness gap is 
\[
|\gamma_a(P)| \eqdef \sup_{\xi,\xi'\in[0,1]}\big|\Phi_a(P;\xi)-\Phi_a(P;\xi')\big| \enspace, \forall a\in[A]\enspace.
\]
\end{proposition}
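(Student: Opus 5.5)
The argument rests entirely on the affine decomposition \eqref{eq:Phi-decomp}. I treat the generic pair $(\Phi_{\smsq},\oU_{\smsq})$ once, so that both items follow at the same time. Set $c_{\smsq}(P)\eqdef\Exp[\oU_{\smsq}\mid G=0]$ and $\gamma_{\smsq}(P)\eqdef\Exp[\oU_{\smsq}\mid G=1]-\Exp[\oU_{\smsq}\mid G=0]$. For the choices $\oU_{\smsq}=\oU$ and $\oU_{\smsq}=\oU_a$ these are $\cU_1(P)-\cU_0(P)$ and $\cU^a_1(P)-\cU^a_0(P)$, respectively. With this notation \eqref{eq:Phi-decomp} reads
\[
\Phi_{\smsq}(P;\xi)=c_{\smsq}(P)+\xi\,\gamma_{\smsq}(P),\qquad \xi\in[0,1],
\]
so $\xi\mapsto\Phi_{\smsq}(P;\xi)$ is affine on $[0,1]$. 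Before using this, I will note that the conditional expectations are finite and well defined. This holds because $0\le\oU_{\smsq}\le 1$: the numerator is at most $\scalar{\brho}{\by}\le V(\bz)$ by definition of $V$ as a maximum over $\cB$, and the ratio is defined by Assumption~\ref{ass:v-bd}. Both groups have positive mass by Assumption~\ref{ass:groups}, so conditioning on $G=g$ is meaningful.

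For the equivalence, an affine function on $[0,1]$ is constant if and only if its slope vanishes. If $\gamma_{\smsq}(P)=0$, then $\Phi_{\smsq}(P;\cdot)\equiv c_{\smsq}(P)$. Conversely, if $\Phi_{\smsq}(P;\cdot)$ is constant, then comparing $\xi=1$ and $\xi=0$ gives $\gamma_{\smsq}(P)=\Phi_{\smsq}(P;1)-\Phi_{\smsq}(P;0)=0$. Taking $\oU_{\smsq}=\oU$, the condition $\gamma(P)=0$ is exactly $\cU_0(P)=\cU_1(P)$, that is, UP (Definition~\ref{def:up}). Taking $\oU_{\smsq}=\oU_a$ for each $a\in[A]$ gives CUP (Definition~\ref{def:cup}), where the quantifier over $a$ simply passes through.

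For the gap formula, I compute
\[
\sup_{\xi,\xi'\in[0,1]}\big|\Phi_{\smsq}(P;\xi)-\Phi_{\smsq}(P;\xi')\big|=\sup_{\xi,\xi'}|\xi-\xi'|\,|\gamma_{\smsq}(P)|=|\gamma_{\smsq}(P)|,
\]
since $\sup|\xi-\xi'|=1$ is attained at $\{\xi,\xi'\}=\{0,1\}$. This identifies the stated supremum with the absolute group difference $|\cU_1(P)-\cU_0(P)|$, respectively $|\cU^a_1(P)-\cU^a_0(P)|$, which justifies the notation $|\gamma(P)|$ and $|\gamma_a(P)|$.

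There is no genuine obstacle here. The only point requiring care is that the perturbation range in the proposition is all of $[0,1]$, whereas Assumption~\ref{ass:groups} restricts $\xi$ to $[\ubar\xi,1-\ubar\xi]$. The decomposition \eqref{eq:Phi-decomp} is obtained directly from the definition of $\cD_\xi$ as a mixture of the group conditionals, and so remains valid at the endpoints without the importance weights. I would state this explicitly. I would also remark that over the restricted range the same argument gives a supremum of $(1-2\ubar\xi)|\gamma_{\smsq}(P)|$, and the equivalence with parity still holds because $1-2\ubar\xi>0$.
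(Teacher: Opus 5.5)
Your proposal is correct and follows the paper's proof essentially step for step. Both use the affine identity $\Phi_{\smsq}(P;\xi)=\Exp[\oU_{\smsq}\mid G=0]+\xi\,\gamma_{\smsq}(P)$, then the fact that an affine map is constant exactly when its slope vanishes, and finally that the supremum equals $|\gamma_{\smsq}(P)|$, attained at $\{\xi,\xi'\}=\{0,1\}$. Your extra remarks are accurate points the paper leaves implicit: that $0\le\oU_{\smsq}\le 1$ with both groups having positive mass, and that over the restricted range $[\ubar\xi,1-\ubar\xi]$ the supremum becomes $(1-2\ubar\xi)|\gamma_{\smsq}(P)|$ while the equivalence still holds.
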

Both criteria are Lipschitz statements about the map $\xi\mapsto\Phi_{\smsq}(P;\xi)$. Since this map is affine, its Lipschitz constant is exactly the unfairness gap. Parity, however, is satisfied just as well by equally poor rankings as by equally good ones. Under UP, maximizing the expected utility $\cU(P)$ under the constraint solves this, however for $CUP$ it does not. Since
$\cU(P) = \sum_{a \in [A]} \cU^a(P)$, parity
within a category may be reached at the expense of both groups. Moreover, any residual
gap makes utility vary with the composition of users. We therefore introduce a constraint that keeps utility, within each category for
conditional utility parity, close to a target across group compositions.

\subsection{Stability to perturbations}
Let $\Xi=U[\ubar\xi,1-\ubar\xi]$ be the perturbation parameter distribution. We denote $\mu_\Xi \eqdef \Exp_{\Xi}[\xi]$ and $\sigma^2_\Xi \eqdef \Var_{\Xi}(\xi)$.
\begin{definition}[Perturbation variance and $t$-variance]\label{def:var}
For $t,t_a\in[0,1]$ and rate functionals $\Phi$ and $\Phi_a$, we define perturbation variance functionals as 
\begin{align*}
  &\cV_\Xi(P)\eqdef\Var_{\xi\sim\Xi}\big(\Phi(P;\xi)\big) \quad\text{and} \\  &\cV^a_\Xi(P)\eqdef\Var_{\xi\sim\Xi}\big(\Phi_a(P;\xi)\big) \quad \forall a\in[A]\enspace,
\end{align*}
and perturbation $t$-variance functionals as 
\begin{align*}
  & \cC_\Xi(P;t)\eqdef\Exp_{\xi\sim\Xi}\big[(\Phi(P;\xi)-t)^2\big]\quad\text{and} \\ &\cC^a_\Xi(P;t)\eqdef\Exp_{\xi\sim\Xi}\big[(\Phi_a(P;\xi)-t_a)^2\big] \quad \forall a\in[A] \enspace.
\end{align*}
\end{definition}

Since $\Phi_{\smsq}(P;\cdot)$ is affine, $\cV^{{\smsq}}_\Xi(P)=\gamma_{\smsq}(P)^2\sigma_\Xi^2$, so that
$|\gamma_{\smsq}(P)|\leq\varepsilon\iff \cV^{\smsq}_\Xi(P)\leq\varepsilon^2\sigma_\Xi^2$. Adding and subtracting $\Exp_\Xi[\Phi_{\smsq} (P;\xi)]$
gives the following bias-variance decomposition
\begin{equation}\label{eq:decomp}
  \cC^{{\smsq}}_\Xi(P;t)=\cV^{{\smsq}}_\Xi(P)+\big(\Exp_\Xi[\Phi_{\smsq}(P;\xi)]-t\big)^2 ,
\end{equation}
so that a bound $\cC^{\smsq}_\Xi(P;t)\leq\delta$ controls both the slope $\gamma_{{\smsq}}(P)$ (fairness) and the mean level of
$\Phi_{{\smsq}}(P;\cdot)$ around the target $t_{{\smsq}}$ (uniform deployment). Moreover, it holds that $|\gamma(P)|\leq\sum_{a\in[A]}|\gamma_a(P)|$ and 
\(
(\cC_\Xi(P;\sum_{a\in[A]} t_a))^{\nicefrac{1}{2}}  \leq  \sum_{a\in[A]}(\cC^a_\Xi(P;t_a))^{\nicefrac{1}{2}}\enspace.
\)

\begin{remark}[On normalization]\label{rem:normalization} Interestingly, under CUP, $\cU^a$ is normalized by $V(\bz)$ and is averaged over all users, so a gap
between $\cU^a_0(P)$ and $\cU^a_1(P)$ may partly reflect differences in
how often the two groups are interested in category $a$. A variant closer to Equal Opportunity would restrict each category to
the users interested in it. Writing $\by^a$ for the restriction of
$\by$ to $\cI_a$, let $S_a = \{z : \by^a \neq 0\}$, normalize by the ideal utility of the category
$V^a(z) = \sum_{j \le \|\by^a\|_1} v_j$, and, for $\bz \in S_a$, $\tilde U^a(P,\bz) \eqdef \sum_{i \in I_a} \rho_i(P;\bx)\, y_i / V_a(\bz)$. The matching perturbation mixes $\cD(\cdot \mid S_a, G=g)$, where $\xi_a$ is the share of group 1 among users in $S_a$. $\tilde\Phi_a(P;\xi_a)$ is then affine in $\xi_a$ with slope $\mathbb{E}[\tilde U^a \mid S_a, G=1] - \mathbb{E}[\tilde U^a \mid S_a, G=0]$, and the optimization problem keeps the same form with $\tilde \cU_a$ in place of $\cU_a$. However, these criteria no longer sum to $\cU$, and thus do not imply UP.
\end{remark}

\section{Methodology : mixture policies}
\label{sec:methodo}
We now propose a methodology for maximizing utility under the stability constraints, by optimizing over mixtures of ranking policies.
\subsection{Linearity in policies}
\begin{definition}[Policy mixture]
For policies $P_1,\cdots,P_L$ and $\balpha\in\Delta_L$ we define \textit{policy mixture} as 
\(
  P^{\balpha}(\,\cdot\mid\bx) \eqdef \sum_{\ell=1}^{L}\alpha_\ell\,P_\ell(\,\cdot\mid\bx) \enspace.
\)
\end{definition}
Note that policy mixture is again a policy, since it is a convex combination of probability measures on $\fS_m$. Moreover, the linearity transfers to the following terms.

\begin{lemma}[Policy linearity]\label{lem:pol-lin} For all $\balpha\in\Delta_L$, all $\bx\in\X$ and 
all $\bz\in\Z$, it holds that
\begin{align*}
&\bM^{P^{\balpha}}(\bx) = \sum_{\ell \in [L]} \alpha_\ell \bM^{P_\ell} \enspace,\  \brho(P^{\balpha};\bx)=\sum_{\ell\in[L]}\alpha_\ell\,\brho(P_\ell;\bx) \enspace, \\
&\oU_{\smsq}(P^{\balpha};\bz)=\sum_{\ell\in[L]}\alpha_\ell\,\oU_{\smsq}(P_\ell;\bz) \enspace, \text{ with } \oU_{\smsq}\in\{\oU;\oU_a\} \enspace.
\end{align*}
\end{lemma}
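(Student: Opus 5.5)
The plan is to establish the three identities in sequence, each following from the previous by linearity of a fixed operation. Throughout, fix $\bx\in\X$ (and $\bz=(\bx,g,\by)\in\Z$ for the last identity) and $\balpha\in\Delta_L$.

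\emph{Marginal rank matrices.} By Definition~\ref{def:policy-matrix}, $\bM^{P}_{ij}(\bx)=P(\{\pi:\pi(j)=i\}\mid\bx)$ is the measure that $P(\cdot\mid\bx)$ assigns to the fixed event $E_{ij}\eqdef\{\pi\in\fS_m:\pi(j)=i\}$. Since $\fS_m$ is finite, this equals $\sum_{\pi\in E_{ij}}P(\pi\mid\bx)$, which is linear in the measure. Substituting $P^{\balpha}(\cdot\mid\bx)=\sum_\ell\alpha_\ell P_\ell(\cdot\mid\bx)$ and exchanging the two finite sums gives $\bM^{P^{\balpha}}_{ij}(\bx)=\sum_\ell\alpha_\ell\bM^{P_\ell}_{ij}(\bx)$ entrywise. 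Note that the statement's $\bM^{P_\ell}$ is to be read as $\bM^{P_\ell}(\bx)$.

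\emph{Exposure.} Using the identity $\brho(P;\bx)=\bM^P(\bx)\bv$ recorded after the definition of expected exposure, the first identity gives
\[
\brho(P^{\balpha};\bx)=\Big(\sum_\ell\alpha_\ell\bM^{P_\ell}(\bx)\Big)\bv=\sum_\ell\alpha_\ell\,\bM^{P_\ell}(\bx)\bv=\sum_\ell\alpha_\ell\,\brho(P_\ell;\bx),
\]
by distributivity of matrix--vector multiplication. Equivalently, one could argue directly that the expectation of the bounded function $\pi\mapsto\sum_j v_j\ind{\pi(j)=i}$ is linear in the measure.

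\emph{Utilities.} The unnormalized utility $U(P;\bz)=\scalar{\brho(P;\bx)}{\by}$ is linear in $\brho$, and the category version $\sum_{i\in\cI_a}\rho_i(P;\bx)y_i$ is the same inner product with $\by$ replaced by $\by\odot\ind{\cI_a}$. The normalizer $V(\bz)=\sum_{j\le\norm{\by}_1}v_j$ depends only on $\bz$ and not on the policy; under Assumption~\ref{ass:v-bd} it is positive, so division by it is well defined (if $\by=\mathbf 0$, either interpret the result as holding almost surely or set both sides to $0$). Dividing the linear identity for $U$ (respectively its restriction to $\cI_a$) by $V(\bz)$ gives the claims for $\oU$ and $\oU_a$.

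No step is a real obstacle. The only care needed is to note that $V$ is policy-independent, and to handle the degenerate $\by=\mathbf 0$ case via Assumption~\ref{ass:v-bd}.
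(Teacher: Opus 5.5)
Your proof is correct and takes the same route as the paper: you use linearity of $P(\cdot\mid\bx)\mapsto\bM^{P}_{ij}(\bx)$ in the measure, then linearity of $\bM\mapsto\bM\bv$ and of the inner product with $\by$, and the policy-independent normalization $V(\bz)$ passes through unchanged. Your remarks on reading $\bM^{P_\ell}$ as $\bM^{P_\ell}(\bx)$, and on the degenerate case $\by=\mathbf 0$ (excluded almost surely by Assumption~\ref{ass:v-bd}), are sensible refinements that the paper leaves implicit.
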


Note, that $\Phi_{\smsq}$ is jointly bi-affine: it is affine in $\xi$ for fixed $P$ and linear in $P$ for fixed
$\xi$.

\subsection{Problem formulation}
To write in a unified manner, let us denote by $\cA$ the set of active constraints. We will consider $\cA=\{0\}$ for UP and $\cA=[A]$ for CUP. With this notation, we have $\gamma_0 \equiv \gamma,\, \cC^0_\Xi \equiv \cC_\Xi,\,t_0\equiv t,\, \delta_0\equiv \delta$. Given a policy dictionary $P_1,\cdots,P_L$, as well as $t_a \in [0,1]$ and $\delta_a > 0$ for all $a\in\cA$, our objective is to solve the following problem
\begin{equation}
    \label{eq:prob:pop}
    \tag{$\cP$}
    \balpha^\star \in \argmax_{\balpha\in\Delta_L} \{\cU(P^\balpha) : \cC^a_\Xi(P^\balpha;t_a) \leq \delta_a\,, \, \forall a \in \cA\} \enspace. 
\end{equation}
Enforcing UP requires one constraint, and enforcing CUP requires one constraint per category (\ie $A$ constraints). 

\textbf{Empirical problem} Let $\cD^{n'} = \{\bZ_1',\cdots,\bZ'_{n'}\}$ be an \iid training sample from $\cD$. Let $\hat P_1,\cdots,\hat P_L$ be policies trained on $\cD^{n'}$, \eg under the importance weights $w_{\xi_\ell}$, and denote by $P_\ell$ the population counterpart of $\hat P_\ell$. Let $\cD^n = \{\bZ_1,\cdots,\bZ_n\}$ be independent \iid sample from $\cD$, and $\Xi^N = \{\xi_1,\cdots,\xi_N\}$ be  perturbation parameters (stratified) sampled from $\Xi$. For simplicity, we assume $p_g$ to be known. For $a\in\cA$, define a matrix $\hat{\bT}^a\in\bbR^{N\times L}$ with elements $\hat{T}_{k,\ell}^a = \hat{\Phi}_{n}^a(\hat P_\ell,\xi_k)$, where $\hat{\Phi}_{n}^a(\hat P_\ell,\xi_k) = \frac{1}{n} \sum_{i\in[n]} w_{\xi_k}(G_i){\oU}_a(\hat P_\ell;\bZ_i)$. The empirical $t$-variance functional is $\hat{\cC}^a_{n,N}(\balpha;t_a) = \frac{1}{N}\sum_{k\in[N]}(\hat{\Phi}_n^a(\hat{P}^\balpha;\xi_k)-t_a)^2 = \frac{1}{N}\|{\hat{\bT}_a\balpha - t_a \mathbf{1}}\|_2^2$. Let us also define a vector $\hat{\bu}\in\bbR^L$, with elements $\hat{u}_\ell = \frac{1}{n}\sum_{i\in[n]}\oU(\hat{P}_\ell,\bZ_i)$.

With this notation, we can write the empirical problem as a second-order cone program (SOCP),
\begin{equation}
\label{eq:prob:emp}
\tag{$\hat{\cP}$}
\hat\balpha\in\argmax_{\balpha\in\Delta_L} \big\{\hat{\bu}^\top\balpha: 
\|\hat{\bT}_a\balpha - t_a \mathbf{1}\|_2 \leq \sqrt{N\delta_a}\enspace,\forall a \in \cA \big\}.
\end{equation}

\subsection{Existence and feasibility} 
\textbf{Existence} The feasible sets of both~\eqref{eq:prob:pop} and~\eqref{eq:prob:emp} are convex and compact, and whenever they are non-empty, an optimizer exists.

\textbf{Feasibility} In classical fairness problems the (strict) feasibility of the constraint is ensured by considering constant functions. When considering mixtures of classifiers, the feasibility can be ensured by adding constant function(s) to the dictionary and choosing appropriate weights. However, in exposure-based fairness problems feasibility isn't always guaranteed. For example, in the work of \cite{singh}, the authors present a specific conditions under which the constraint is attainable for their objective and explain how to handle the issue in practice, however they do not guarantee the feasibility in general. Similarly, feasibility cannot be guaranteed in general for our objective and must instead be characterized. These are conditions that can be checked in practice, and we discuss them in Appendix~\ref{app:feasibility}. From a theoretical perspective, we impose the following simplified assumption.
\begin{assumption}[Strict feasibility]\label{ass:feas} There exists $\balpha^0 \in \Delta_L \, \st \cC^a_\Xi(P^{\balpha^0};t_a) < \delta_a\,, \forall a \in \cA$. 
\end{assumption}

\section{Theoretical guarantees}
\label{sec:guarantees} 
We now study how the empirical solution $\hat P^{\hat\balpha}$ of
\eqref{eq:prob:emp} approaches its population counterpart
$P^{\balpha^\star}$ of \eqref{eq:prob:pop}. By Lemma~\ref{lem:pol-lin}, the rate
functionals and the utility depend on a policy only through its exposure, thus a natural requirement on the dictionary is a rate on exposures.
\begin{assumption}[Policy-exposure rates] \label{ass:pol-exp} For any $\eta_{\delta_{n'}}\in(0,1)$, with probability greater than $1-\eta_{\delta_{n'}}$ it holds that
\(
\max_{\ell\in[L]}\Exp \|{\brho(P_\ell;\bX) - \brho(\hat P_\ell;\bX)}\|_1 \leq \delta_{n'} \enspace.
\)
\end{assumption}
In general we want $\delta_{n'}\to0$, which can be computed for specific policies. For example, the following Lemma states the exposure rates for a Plackett-Luce policies.
\begin{lemma}[Plackett-Luce rates] 
\label{lem:rates:PL}
Let $P_\ell$ and $\hat{P}_\ell$ be Plackett-Luce policies with score functions $s_\ell$ and $\hat{s}_\ell$, for $\ell \in [L]$. For every $\bx\in\X$, it holds that
\[
\|\brho(P_\ell;\bx) - \brho(\hat{P}_\ell;\bx)\|_1 \leq \sqrt{m-1} \| \bs_{\ell}(\bx,\cdot) - \hat{\bs}_\ell(\bx,\cdot) \|_\infty \enspace.
\]
\end{lemma}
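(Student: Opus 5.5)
The plan is to go through the total variation distance between the two Plackett--Luce laws on $\fS_m$, and then bound that distance by Pinsker's inequality together with the chain rule for the Kullback--Leibler divergence of the sequential Plackett--Luce sampling. Fix $\bx$, write $P=P_\ell(\cdot\mid\bx)$, $\hat P=\hat P_\ell(\cdot\mid\bx)$ and $\bd=\bs_\ell(\bx,\cdot)-\hat\bs_\ell(\bx,\cdot)$.

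\emph{Step 1: reduction to total variation.} By definition, $\brho(P;\bx)=\Exp_{\pi\sim P}[\bff(\pi)]$, where $\bff(\pi)\in\bbR^m$ has entries $f_i(\pi)=v_{\pi^{-1}(i)}$. For any two permutations, $\|\bff(\pi)-\bff(\pi')\|_1\le 2\|\bv\|_1=2$, and this difference vanishes when $\pi=\pi'$. Take a maximal coupling $(\pi,\pi')$ of $(P,\hat P)$, so that $\Prob(\pi\neq\pi')=\mathrm{TV}(P,\hat P)$. Jensen's inequality then gives
\[
\|\brho(P;\bx)-\brho(\hat P;\bx)\|_1\le \Exp\|\bff(\pi)-\bff(\pi')\|_1\le 2\,\mathrm{TV}(P,\hat P).
\]
Pinsker's inequality then yields $\|\brho(P;\bx)-\brho(\hat P;\bx)\|_1\le\sqrt{2\,\mathrm{KL}(P\|\hat P)}$.

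\emph{Step 2: chain rule.} A Plackett--Luce ranking is generated sequentially. At step $j$, the item $\pi(j)$ is drawn from the softmax of the scores restricted to the remaining set $R_j$. The $m$-th step is deterministic. The chain rule for KL therefore gives
\[
\mathrm{KL}(P\|\hat P)=\sum_{j=1}^{m-1}\Exp_{\pi\sim P}\big[\mathrm{KL}\big(\mathrm{softmax}(\bs|_{R_j})\,\|\,\mathrm{softmax}(\hat\bs|_{R_j})\big)\big].
\]
It is therefore enough to show that each softmax term is at most $\|\bd\|_\infty^2/2$. This would give $\mathrm{KL}\le (m-1)\|\bd\|_\infty^2/2$, and plugging into Step 1 produces exactly $\sqrt{m-1}\,\|\bd\|_\infty$.

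\emph{Step 3: softmax KL bound (main step).} Let $\psi(\bs)=\log\sum_{i\in R}e^{s_i}$. The divergence $\mathrm{KL}(\mathrm{softmax}(\bs)\|\mathrm{softmax}(\hat\bs))$ is the Bregman divergence of $\psi$, namely $\psi(\hat\bs)-\psi(\bs)-\langle\nabla\psi(\bs),\hat\bs-\bs\rangle$. By Taylor's theorem with integral remainder, it equals $\tfrac12\,\bd^\top\nabla^2\psi(\tilde\bs)\,\bd$ for some point $\tilde\bs$ on the segment (or an average over the segment). The Hessian $\nabla^2\psi(\tilde\bs)$ is the covariance matrix of the one-hot item indicator under $\mathrm{softmax}(\tilde\bs)$. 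Hence $\bd^\top\nabla^2\psi\,\bd$ is the variance of the random variable $d_I$ with $I\sim\mathrm{softmax}(\tilde\bs)$. This variable lies in an interval of length at most $2\|\bd\|_\infty$, so its variance is at most $(2\|\bd\|_\infty)^2/4=\|\bd\|_\infty^2$. Each stage therefore contributes at most $\|\bd\|_\infty^2/2$.

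The delicate points are getting the constants exactly right, namely the factor $2$ in Step 1 against Pinsker's $\sqrt{\mathrm{KL}/2}$, and the Popoviciu-type variance bound. Restricting the differences to $R_j$ only decreases the sup norm, so it causes no difficulty.
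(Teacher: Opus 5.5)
Your proposal is correct and follows essentially the same route as the paper: bound the exposure gap by $2\,\mathrm{TV}$, apply Pinsker, decompose the KL over the $m-1$ non-trivial sequential draws, and bound each stage by $\|s-\hat s\|_\infty^2/2$. The paper gets the $2\,\mathrm{TV}$ bound directly from $\sum_\pi|P(\pi)-\hat P(\pi)|\,\|v\|_1$ rather than a coupling. For the per-stage bound, the paper writes the stage term as $\log \mathbb{E}[e^{-(d-\mathbb{E}d)}]$ and invokes Hoeffding's lemma; your Bregman--Taylor--Popoviciu argument is the standard proof of that lemma written out, since the softmaxes along the segment from $s$ to $\hat s$ are exactly the exponential tilts involved.
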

Under mild appropriate noise conditions, fast rates for the statistical recovery of optimal ranking/scoring policies have been obtained in \cite{pmlr-v83-clemencon18a}, extending the results for fast empirical ranking aggregation proved in \cite{pmlr-v54-korba17a} 
(see also \cite{pmlr-v108-vogel20a} for one-versus-one approaches).

The error comes from three independent samples, each with its own rate. The dictionary policies are trained on $\cD^{n'}$; by Lemma~\ref{lem:pol-lin}, the exposure error of Assumption~\ref{ass:pol-exp} transfers to the rate functionals of any mixture with an error $\delta'_{n'}\eqdef\nicefrac{\delta_{n'}}{(\ubar{p}\,\ubar{v})}$, where $\nicefrac{1}{\ubar{p}}$ bounds the importance weights and $\nicefrac{1}{\ubar{v}}$ comes from the normalization of the utility. Given the dictionary, the rate functionals and the utility are estimated on $\cD^n$, which contributes $\delta_n\asymp n^{-\frac{1}{2}}$ and $\delta_n(\cU)\asymp n^{-\frac{1}{2}}$, respectively. Finally, the perturbation law $\Xi$ is discretized by the stratified sample $\Xi^N$, and since the rate functionals are affine in $\xi$, this contributes $\delta_N\asymp N^{-\frac{3}{2}}$.

\begin{figure*}[!t]
    \centering
    \includegraphics[width=0.95\linewidth]{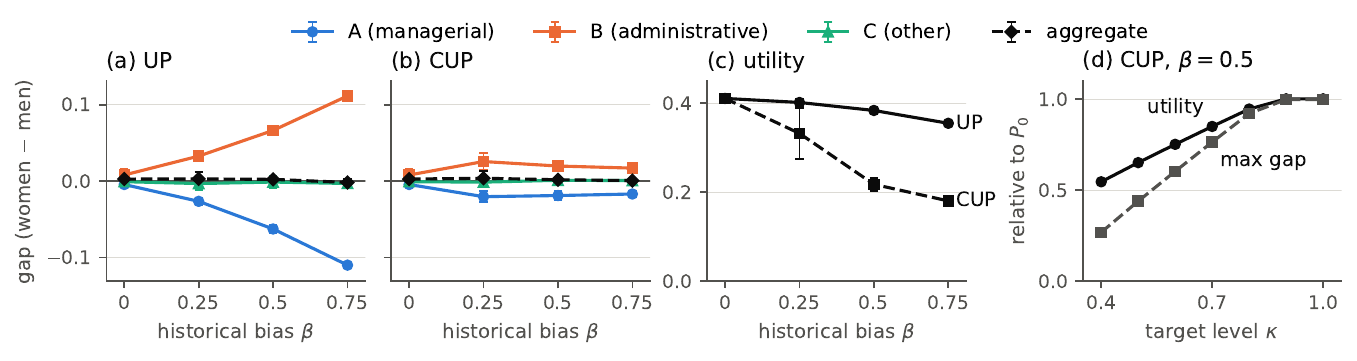}
    \caption{Synthetic data. (a, b) Utility gaps (women $-$ men) under UP and CUP
with free targets. (c) Utility. (d) CUP with targets $t_a = \kappa\,\Phi_a(P_0;\mu_\Xi)$ at $\beta = 0.5$, relative to $P_0$. Means and standard deviations over 3 seeds.}
    \label{fig:toy}
\end{figure*}

\begin{theorem}[Informal]
\label{thm:rates:main}
Suppose Assumptions~\ref{ass:v-bd}--\ref{ass:pol-exp} hold. Then, conditionally
on $\cD^{n'}$ and with ``high probability'' over $(\cD^n,\Xi^N)$, every
$\hat\balpha$ feasible for \eqref{eq:prob:emp} satisfies, for all
$a\in\cA$:

\textit{1. Stability.} $\cC^a_\Xi(\hat P^{\hat\balpha};t_a)\leq(\sqrt{\delta_a}+\delta_n)^2 +\delta_N$.

\textit{2. Unfairness.} $|\gamma_a(\hat P^{\hat\balpha})|\leq\nicefrac{(\sqrt{\delta_a}+\delta_n
      +\sqrt{\delta_N})}{\sigma_\Xi}$.

\textit{3. Uniform deployment.} For all
      $\xi'\in[\ubar{\xi},1-\ubar{\xi}]$, simultaneously, 
      \[
      |\Phi_a(\hat P^{\hat\balpha};\xi')-t_a| \leq (\sqrt{\delta_a}+\delta_n+\sqrt{\delta_N})
        (1+\nicefrac{|\xi'-\mu_\Xi|}{\sigma_\Xi}).
      \]
Moreover, with ``high probability'' over $(\cD^{n'},\cD^n,\Xi^N)$,
\eqref{eq:prob:emp} is feasible and, for some constant $0\leq C<\infty$,
every solution $\hat\balpha$ satisfies

\textit{4. Utility.} $\cU(P^{\balpha^\star})-\cU(\hat P^{\hat\balpha}) \leq  2\delta_n(\cU)+\nicefrac{\delta_{n'}}{\ubar{v}} \\ \text{ }\qquad + C\Big(2\max_{a\in\cA}\sqrt{\delta_a}\,(\delta_n
+\delta'_{n'})+\delta_N\Big) $.
\end{theorem}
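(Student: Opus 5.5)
The plan is to reduce everything to uniform deviation bounds on the bi-affine maps $(\balpha,\xi)\mapsto\Phi_a(P^\balpha;\xi)$, and then read off items 1--3 from the decomposition \eqref{eq:decomp} and item 4 from a perturbation argument on the SOCP.

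\textbf{Parts 1--3.} Condition on $\cD^{n'}$, so the dictionary $\hat P_1,\dots,\hat P_L$ is fixed. By Lemma~\ref{lem:pol-lin}, $\hat\Phi^a_n(\hat P^\balpha;\xi)-\Phi_a(\hat P^\balpha;\xi)$ is linear in $\balpha$ and affine in $\xi$. Its supremum over $\balpha\in\Delta_L$, $\xi\in[\ubar\xi,1-\ubar\xi]$ is therefore attained at vertices $\balpha=e_\ell$ and endpoints of $\xi$. The summands $w_\xi(G_i)\oU_a(\hat P_\ell;\bZ_i)$ lie in $[0,1/\ubar p]$, since $\oU_a\le 1$ because $\scalar{\brho}{\by}\le V(\bz)$. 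Hoeffding plus a union bound over $2L|\cA|$ pairs then gives a uniform deviation $\delta_n\asymp\sqrt{\log(L|\cA|/\eta)/n}/\ubar p$.

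The triangle inequality in $\ell_2(\Xi^N)$ gives
\[
\hat\cC^a_{n,N}(\hat P^\balpha)^{1/2}\ge\Big(\tfrac1N\textstyle\sum_k(\Phi_a(\hat P^\balpha;\xi_k)-t_a)^2\Big)^{1/2}-\delta_n .
\]
Feasibility $\hat\cC^a_{n,N}\le\delta_a$ then yields the same empirical-$\Xi$ quantity bounded by $(\sqrt{\delta_a}+\delta_n)^2$.

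Next I replace the stratified average over $\xi_k$ by $\Exp_\Xi$. The function $\xi\mapsto(\Phi_a-t_a)^2$ is a quadratic polynomial with coefficients bounded by $O(1/\ubar p^2)$. Stratified sampling of a smooth function gives error $O(N^{-3/2})$ with high probability: the within-stratum variance is $O(N^{-2})$ per stratum, averaged over $N$ strata. The bound must be uniform in $\balpha$. I would obtain uniformity by noting the quadratic is determined by three coefficients, each bilinear in $\balpha$, so the error is a quadratic form in $\balpha$ with $L^2$ entries, each concentrated separately. This gives part 1.

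For part 2, apply \eqref{eq:decomp}: $\gamma_a^2\sigma_\Xi^2=\cV^a_\Xi\le\cC^a_\Xi$, then take square roots using $\sqrt{x+y}\le\sqrt x+\sqrt y$. For part 3, write
\[
\Phi_a(\xi')-t_a=(\Exp_\Xi\Phi_a-t_a)+\gamma_a(\xi'-\mu_\Xi).
\]
The first term is at most $\sqrt{\cC^a_\Xi}$ by \eqref{eq:decomp}; combining with part 2 gives the stated factor $(1+|\xi'-\mu_\Xi|/\sigma_\Xi)$, and it holds simultaneously in $\xi'$ because the bound is deterministic given part 1.

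\textbf{Part 4.} Now add the dictionary error. By Assumption~\ref{ass:pol-exp} and Lemma~\ref{lem:pol-lin}, $|\Phi_a(\hat P^\balpha;\xi)-\Phi_a(P^\balpha;\xi)|\le\delta'_{n'}$ and $|\cU(\hat P^\balpha)-\cU(P^\balpha)|\le\delta_{n'}/\ubar v$, uniformly in $\balpha$ and $\xi$. The strategy is the standard one:
\begin{enumerate}
\item Build a feasible point of \eqref{eq:prob:emp} near $\balpha^\star$ by shrinking toward the strictly feasible $\balpha^0$ of Assumption~\ref{ass:feas}. Set $\balpha_\lambda=(1-\lambda)\balpha^\star+\lambda\balpha^0$. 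Convexity of $\balpha\mapsto\cC^a_\Xi(P^\balpha)^{1/2}$ (a norm of an affine map) gives slack proportional to $\lambda\,(\sqrt{\delta_a}-\sqrt{\cC^a(\balpha^0)})$.
\item Choose $\lambda$ to absorb the total constraint perturbation. That perturbation is $\approx 2\sqrt{\delta_a}(\delta_n+\delta'_{n'})+\delta_N$, since the squared constraint shifts by $2\sqrt{\delta_a}\cdot$(sup deviation) plus second-order terms. This also proves that \eqref{eq:prob:emp} is feasible.
\item Conclude with the chain
\[
\cU(P^{\balpha^\star})-\cU(\hat P^{\hat\balpha})\le\big[\cU(P^{\balpha^\star})-\cU(P^{\balpha_\lambda})\big]+\text{(estimation terms)} .
\]
The first bracket is at most $\lambda$, since utilities lie in $[0,1]$. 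The estimation terms are handled with optimality of $\hat\balpha$ for $\hat{\bu}$: $2\delta_n(\cU)$ comes from comparing $\hat{\bu}$ to $\cU$ twice, and $\delta_{n'}/\ubar v$ from the dictionary swap.
\end{enumerate}
The constant is $C=\max_a 1/(\sqrt{\delta_a}-\sqrt{\cC^a(\balpha^0)})$ up to factors.

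\textbf{Main obstacle.} I expect the hardest step to be the uniform-in-$\balpha$ $N^{-3/2}$ stratified bound inside a squared functional, together with tracking the squared versus square-root constraint perturbation so that the linear term $\sqrt{\delta_a}(\delta_n+\delta'_{n'})$ appears rather than $\delta_n$ alone. I would first try to work with the square-root (norm) form of the constraint throughout, and square only at the end.
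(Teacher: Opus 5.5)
Your proposal is correct and follows essentially the paper's route. The main steps coincide: a uniform $\delta_n$ deviation bound extended to mixtures by linearity, then the $\ell_2(\Xi^N)$ triangle inequality plus the stratified-sampling lemma for stability, and the decomposition \eqref{eq:decomp} for unfairness and deployment. For utility, both use a reverse-feasibility argument at the convex combination of $\balpha^\star$ and the strictly feasible $\balpha^0$. The paper packages this through the value function $\nu(\epsilon)$, and your endpoint/vertex reduction only removes the $\log N$ from $\delta_n$.

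The step you flag as the main obstacle is in fact immediate. The stratified error equals $2(\beta_a(P)-t_a)\gamma_a(P)\big(\mu_\Xi-\frac1N\sum_k\xi_k\big)+\gamma_a(P)^2\big(\Exp_\Xi[\xi^2]-\frac1N\sum_k\xi_k^2\big)$, with coefficients bounded by $2$ and $1$. Concentrating these two scalar averages therefore makes it uniform over all policies at once, so no $L^2$-entry union bound is needed.
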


All rates are within $\tilde\bigO(n^{{-\frac{1}{2}}}+N^{-\frac{3}{4}}+\delta_{n'})$, and the stability and utility bounds enjoy the faster $N^{-\frac{3}{2}}$ rates. The number of policies $L$ and of categories $|\mathcal{A}|$ only enter
$\delta_n$ and $\delta_n(\mathcal{U})$ through $\log{L}$ and
$\log{|\mathcal{A}|}$, which are absorbed in the $\widetilde{\mathcal{O}}$,
so that enlarging the dictionary or refining the categories does not affect the convergence rates.
Formal statements and proofs are presented in Appendix~\ref{app:guarantees}.

\section{Experiments}
Finally, we illustrate our framework on synthetic and real datasets.
\label{sec:experiments}
\subsection{Synthetic Data}
\label{exp:toy}
\textbf{Setup}
We simulate a simplified job platform with $8000$ users, and a catalogue of $300$ offers divided into three categories: managerial ($A$), administrative ($B$) and other ($C$). The true interests of users do not depend on their group, so an oracle policy has no gap in any category. The policies are however learned from biased historical interactions: with probability $\beta$, an interest of a woman in a managerial offer, or of a man in an administrative offer, was not recorded. The constraints are estimated, and the policies evaluated, on held-out users labelled with their true interests. To illustrate the deployment guarantee, we also evaluate them on new users drawn with a share of women between $0.1$ and $0.9$. The results, deferred to Appendix~\ref{app:deploy}, are in line with Theorem~\ref{thm:rates:main}.  The data generation is detailed in Appendix \ref{app:toy}.

\textbf{Policies and constraints}
The dictionary contains the uniform policy, an unweighted LambdaMART ranker $P_0$, and five LambdaMART rankers trained with the weights $w_\xi$, $\xi \in \{0.1, 0.3, 0.5, 0.7, 0.9\}$. Rankers are trained with LightGBM on
PureSVD embeddings of the history, and the sensitive attribute is never used as a feature. Policies rank deterministically, and the utility is the NDCG@20 (Remark~\ref{rem:ndcg}). We solve~\eqref{eq:prob:emp} with $\Xi$ uniform on $[0.1, 0.9]$. The budgets $\delta_a$ are set so that the tolerated gap in each category is $\lambda$ times the estimation noise of the gap of $P_0$, with $\lambda = 1$ unless stated otherwise. We first leave the targets free, which amounts to constraining the gaps only, and then set each target to a fraction $\kappa$ of the level of $P_0$, $t_a = \kappa\,\Phi_a(P_0;\mu_\Xi)$. More details are available in Appendix~\ref{app:impl}.

\textbf{Results.} Figure~\ref{fig:toy} reports, on the test users, the gaps and the utility of the mixture selected under each constraint. Under UP (a), the aggregate gap remains close to zero for every $\beta$, so the constraint is never active and the selected mixture reduces to the ranker with the highest estimated utility. Meanwhile, women are increasingly under-served on managerial offers and men on administrative ones. UP is therefore blind to disparities that cancel out across categories. 
Under CUP (b), the gaps remain close to zero in every category. We also analyse the price of this correction in (c) and note that with this dictionary, removing the category gaps requires moving towards uniform rankings, which lowers utility for both groups. Finally, we set the targets ($\beta = 0.5$, $\lambda = 5$). For $\kappa \geq 0.9$, the targets forbid lowering the utility of any category, and since this dictionary cannot reduce the gaps otherwise, the selected policy remains a single ranker, $P_0$ on two seeds out of three. Lowering $\kappa$ lets the utility decrease roughly in proportion, while the largest gap shrinks faster. The targets thus make the loss of utility explicit and leave its extent to the practitioner.

\begin{figure*}
    \centering
    \begin{subfigure}[b]{0.68\linewidth}
        \includegraphics[width=\linewidth]{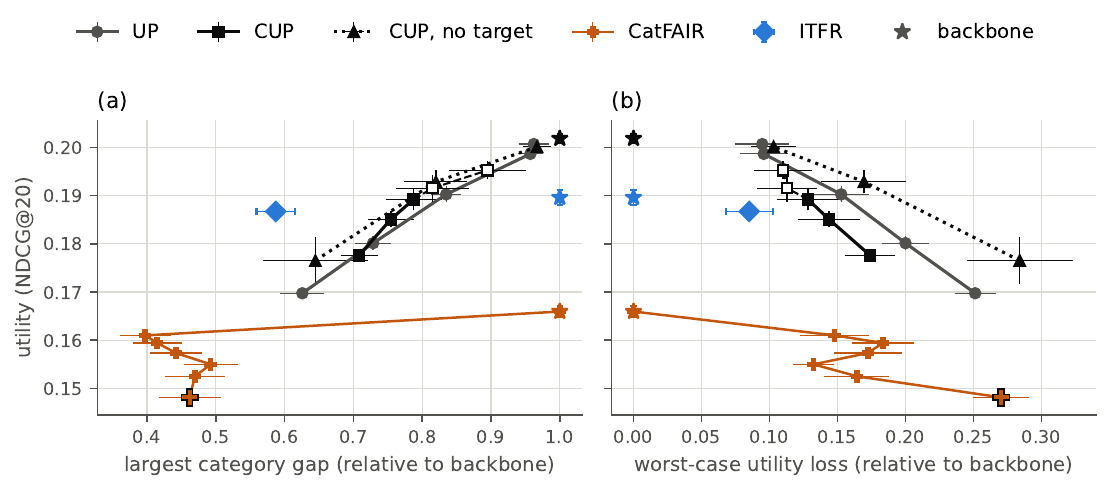}
    \end{subfigure}
    \begin{subfigure}[b]{0.28\linewidth}
        \includegraphics[width=\linewidth]{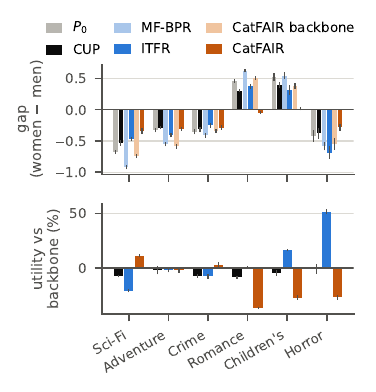}
    \end{subfigure}
    
\caption{MovieLens-1M. Each policy is compared with its backbone $B$. Utility against the largest gap between women and men
over genres (a), and against the worst-case utility loss $1-\min_a \min_\xi \Phi_a(P;\xi)/\min_\xi \Phi_a(B;\xi)$ (b), both relative
to $B$; the best policies lie in the upper left corner. UP and CUP vary the target level $\kappa$ from $1$ (top) to $0.8$ (bottom); hollow squares are values feasible on part of the splits only ($6/10$ for $\kappa=1$, $9/10$ for $\kappa=0.95$). CUP without target varies the budget $\lambda$, and CatFAIR the fairness regularisation parameter. (c) Per genre, for CUP ($\kappa=0.9$) and the other baselines next to their backbones: gap between women and men divided by the utility of the genre under $B$ (top), and change of this utility with respect to $B$ (bottom). Means and standard errors over 10 splits.}
    \label{fig:ml1m}
\end{figure*}

\subsection{MovieLens-1M}
\textbf{Set-up and baselines}
We use MovieLens-1M with gender as the sensitive attribute ($G=1$ for women). Item categories are the six movie genres (Sci-Fi, Adventure, Crime, Romance, Children's, Horror), and we keep the movies that belong to exactly one of them. The interactions of each user are sorted by time: the first 80\% form the history $\bx$ and the remaining 20\% are the relevant items. The candidates of a user are all movies outside their history. Users are split at random into training (50\%), estimation (25\%), on which the mixture weights are fitted, and test (25\%) users, and results are averaged over 10 random splits. Statistics are given in Appendix \ref{app:ml1m}. We compare with ITFR \citep{wang2024} and CatFAIR \citep{kheya2025}, which, like CUP, consider disparities between (user group, item category) pairs: ITFR through a recall-based measure of relevant items, CatFAIR through the share of each category in the recommendation lists, regardless of relevance. Both modify the training of a matrix factorization model with the BPR loss \citep{rendle2009}, see the same interactions as the rankers, and use $G$ during training only. Each is compared with its own backbone trained without the fairness component (MF-BPR for ITFR, $\alpha = 0$ for CatFAIR), as $P_0$ is for our approach. 

The dictionary of Section~\ref{exp:toy} is enriched with several rankers to improve feasibility. In the end, we consider $L = 19$, detailed in Table \ref{tab:ml1m-dict} (Appendix \ref{app:ml1m})). Targets are $t_a = \kappa\,\Phi_a(P_0;\mu_\Xi)$, and $\kappa$ varies from $1$ to $0.8$ to trace the trade-off between utility and gaps, with $\lambda = 2$, the smallest budget of our grid for which the targets with $\kappa \le 0.9$ can be met on every split. CUP without target uses $\lambda \in \{1.5, 2, 3\}$.

On the test users, we report the utility, the largest category gap $\max_a |\gamma_a|$, and the worst-case utility loss $1-\min_a \min_\xi \Phi_a(P;\xi)/\min_\xi \Phi_a(\text{Backbone};\xi)$, which measures how much the less-served population loses in the most affected category.  

\textbf{Utility and fairness.}
We see from Figure~\ref{fig:ml1m} that CUP, ITFR and CatFAIR all reduce the
gaps between women and men across genres with respect to their backbones,
which start from different levels of utility, $P_0$ being the most accurate.
The three approaches differ in how the trade-off is controlled. First, ITFR
gives a single operating point, as none of its hyperparameters sets the
trade-off \citep[App.~A.4]{wang2024}. CatFAIR reaches its smallest gap with
the smallest weight $\alpha$, and larger values only lower utility (a). CUP
instead traces a front: lowering $\kappa$ reduces the largest gap at the cost
of utility, and any point of this front is obtained in milliseconds from the
same dictionary. Second, the targets prevent levelling down (b). Without
targets, CUP reaches the same utility and largest gap, but the less-served
group loses much more utility in the most affected genre. So does CatFAIR,
which ignores relevance, with the weight used by its authors.

Now turning to the question of which movie genres are affected, we see from
Figure~\ref{fig:ml1m}(c) that the three methods reduce the gaps in different
ways. CUP narrows the gap of $P_0$ in every genre and changes the utility of
each genre by less than $10\%$. ITFR moves utility across genres, from $-21\%$
for Sci-Fi to $+51\%$ for Horror, where the gap widens. CatFAIR closes the
gaps in Romance and Children's, but by lowering the utility of women in these
genres, which is halved in Romance.

\textbf{Choosing the trade-off in practice.} For a practitioner, the level
of fairness is a choice that must be made after seeing its cost in utility,
and possibly revised: an explicit front is therefore as important as the
quality of a single operating point. Our guarantees are what make such a front reliable: the budgets and
targets set the gaps and utility levels directly, and
Theorem~\ref{thm:rates:main} ensures that these settings carry over to new
users, whereas ITFR and CatFAIR control fairness only through training
hyperparameters, with no guarantee on the resulting gaps. With CUP, the dictionary is trained once
(about $2$\,min per split), and each point of the front, set by interpretable
targets, is a second-order cone program solved in less than $10$\,ms. With
ITFR and CatFAIR, each point requires a full training (about $38$ and
2\,min on the same CPU), and finding the trade-off means
cross-validating hyperparameters whose effects on fairness are not monotone.

\section{CONCLUSION}
\label{sec:conclusion}
We studied user-side category-aware fair ranking through the lens of distributional stability. Representing ranking policies through exposure, we
showed that utility parity and its category-aware refinement can be viewed as the stability of utility under changes in the composition of the population, and that targets fix the level of service that each category keeps, whatever this composition. Considering mixtures of policies leads to a second-order cone program, for which we provided finite-sample guarantees on stability, fairness, deployment and utility. Since it only combines existing ranking policies, the approach is model-agnostic: it can be applied on top of the most accurate ranker available, and the trade-off between utility and fairness can be set, and revised, after training at negligible cost. Our experiments showed that aggregate parity can hide opposite disparities across categories, which conditional utility parity removes, illustrated the deployment guarantee on new populations, and showed that the targets give an explicit control of the trade-off while limiting levelling down. Our guarantees cover changes in the share of each group; changes in the behaviour of each group are left for future work.

\newpage
\subsection*{AI use statement}
In this work, we used generative AI tools to provide feedback on research and
assist with translation. We also use it for the experiment section, notably to clean and reformat the real-world dataset and to produce figures from the result files obtained.
We have not used generative AI tools to generate synthetic data sets, help develop theoretical models or conceptual frameworks, formulate mathematical claims, provide critical ingredients for proving mathematical claims, assist in the writing of proofs, propose or refine hypotheses, implement methods, support qualitative and thematic data analysis, interpret results.

We have reviewed all AI-assisted work. For the code, we checked the correctness of the pre-processing code by cross-comparison with statistics reported by published papers that rely on the same dataset. For the figure, we manually checked that the numbers displayed are coherent with the results from the files.
We take responsibility for the final content of this work, including text, claims, or artifacts produced with the aid of generative AI.


\bibliography{bibliography}

@InProceedings{pmlr-v83-clemencon18a,
  title = 	 {Ranking Median Regression: Learning to Order through Local Consensus},
  author = 	 {Clémençon, Stephan and Korba, Anna and Sibony, Eric},
  booktitle = 	 {Proceedings of Algorithmic Learning Theory},
  pages = 	 {212--245},
  year = 	 {2018},
  editor = 	 {Janoos, Firdaus and Mohri, Mehryar and Sridharan, Karthik},
  volume = 	 {83},
  series = 	 {Proceedings of Machine Learning Research},
  month = 	 {07--09 Apr},
  publisher =    {PMLR},
  url = 	 {https://proceedings.mlr.press/v83/clemencon18a.html}
}

@InProceedings{pmlr-v54-korba17a,
  title = 	 {{A Learning Theory of Ranking Aggregation}},
  author = 	 {Korba, Anna and Clemencon, Stéphan and Sibony, Eric},
  booktitle = 	 {Proceedings of the 20th International Conference on Artificial Intelligence and Statistics},
  pages = 	 {1001--1010},
  year = 	 {2017},
  editor = 	 {Singh, Aarti and Zhu, Jerry},
  volume = 	 {54},
  series = 	 {Proceedings of Machine Learning Research},
  month = 	 {20--22 Apr},
  publisher =    {PMLR},
  url = 	 {https://proceedings.mlr.press/v54/korba17a.html}
}

@InProceedings{pmlr-v108-vogel20a,
  title = 	 {A Multiclass Classification Approach to Label Ranking},
  author =       {Vogel, Robin and Cl\'emen\c{c}on, St\'ephan},
  booktitle = 	 {Proceedings of the Twenty Third International Conference on Artificial Intelligence and Statistics},
  pages = 	 {1421--1430},
  year = 	 {2020},
  editor = 	 {Chiappa, Silvia and Calandra, Roberto},
  volume = 	 {108},
  series = 	 {Proceedings of Machine Learning Research},
  month = 	 {26--28 Aug},
  publisher =    {PMLR},
  url = 	 {https://proceedings.mlr.press/v108/vogel20a.html}
}

@article{Plackett1975,
  title     = {The Analysis of Permutations},
  author    = {Plackett, Robert L.},
  journal   = {Applied Statistics},
  volume    = {24},
  number    = {2},
  pages     = {193--202},
  year      = {1975},
  doi       = {10.2307/2346567}
}

@inproceedings{Oosterhuis2021,
author = {Oosterhuis, Harrie},
title = {Computationally Efficient Optimization of Plackett-Luce Ranking Models for Relevance and Fairness},
year = {2021},
isbn = {9781450380379},
publisher = {Association for Computing Machinery},
address = {New York, NY, USA},
url = {https://doi.org/10.1145/3404835.3462830},
doi = {10.1145/3404835.3462830},
booktitle = {Proceedings of the 44th International ACM SIGIR Conference on Research and Development in Information Retrieval},
pages = {1023–1032},
numpages = {10},
location = {Virtual Event, Canada},
series = {SIGIR '21}
}

@inproceedings{wang2021,
author = {Wang, Lequn and Joachims, Thorsten},
title = {User Fairness, Item Fairness, and Diversity for Rankings in Two-Sided Markets},
year = {2021},
isbn = {9781450386111},
publisher = {Association for Computing Machinery},
address = {New York, NY, USA},
url = {https://doi.org/10.1145/3471158.3472260},
doi = {10.1145/3471158.3472260},
booktitle = {Proceedings of the 2021 ACM SIGIR International Conference on Theory of Information Retrieval},
pages = {23–41},
numpages = {19},
location = {Virtual Event, Canada},
series = {ICTIR '21}
}

@inproceedings{singh,
author = {Singh, Ashudeep and Joachims, Thorsten},
title = {Fairness of Exposure in Rankings},
year = {2018},
isbn = {9781450355520},
publisher = {Association for Computing Machinery},
address = {New York, NY, USA},
url = {https://doi.org/10.1145/3219819.3220088},
doi = {10.1145/3219819.3220088},
booktitle = {Proceedings of the 24th ACM SIGKDD International Conference on Knowledge Discovery \& Data Mining},
pages = {2219–2228},
numpages = {10},
location = {London, United Kingdom},
series = {KDD '18}
}

@inproceedings{li2021-user-fairness,
author = {Li, Yunqi and Chen, Hanxiong and Fu, Zuohui and Ge, Yingqiang and Zhang, Yongfeng},
title = {User-oriented Fairness in Recommendation},
year = {2021},
isbn = {9781450383127},
publisher = {Association for Computing Machinery},
address = {New York, NY, USA},
url = {https://doi.org/10.1145/3442381.3449866},
doi = {10.1145/3442381.3449866},
booktitle = {Proceedings of the Web Conference 2021},
pages = {624–632},
numpages = {9},
location = {Ljubljana, Slovenia},
series = {WWW '21}
}

@misc{stable2026,
      title={A Comprehensive View of Fairness through Distributional Stability}, 
      author={Gayane Taturyan and Charlotte Laclau and Stephan Clémencon},
      year={2026},
      eprint={2609.37061},
      archivePrefix={arXiv},
      primaryClass={cs.LG},
      url={https://arxiv.org/abs/2609.37061}, 
      note={arXiv preprint arXiv:2609.37061}
}

@inproceedings{dinh2024,
author = {Dinh, My H and Kotary, James and Fioretto, Ferdinando},
title = {Learning Fair Ranking Policies via Differentiable Optimization of Ordered Weighted Averages},
year = {2024},
isbn = {9798400704505},
publisher = {Association for Computing Machinery},
address = {New York, NY, USA},
url = {https://doi.org/10.1145/3630106.3661932},
doi = {10.1145/3630106.3661932},
booktitle = {Proceedings of the 2024 ACM Conference on Fairness, Accountability, and Transparency},
pages = {2508–2517},
numpages = {10},
location = {Rio de Janeiro, Brazil},
series = {FAccT '24}
}

@inproceedings{kotary22,
author = {Kotary, James and Fioretto, Ferdinando and Van Hentenryck, Pascal and Zhu, Ziwei},
title = {End-to-End Learning for Fair Ranking Systems},
year = {2022},
isbn = {9781450390965},
publisher = {Association for Computing Machinery},
address = {New York, NY, USA},
url = {https://doi.org/10.1145/3485447.3512247},
doi = {10.1145/3485447.3512247},
booktitle = {Proceedings of the ACM Web Conference 2022},
pages = {3520–3530},
numpages = {11},
location = {Virtual Event, Lyon, France},
series = {WWW '22}
}

@inproceedings{
chakraborty2026fairness,
title={Fairness in Aggregation: Optimal Top-\$k\$ and Improved Full Ranking},
author={Diptarka Chakraborty and Arya Mazumdar and Barna Saha and Alvin Hong Yao Yan},
booktitle={Forty-third International Conference on Machine Learning},
year={2026},
url={https://openreview.net/forum?id=CeyNBFDCdB}
}

@inproceedings{Basileal2021,
author = {Imana, Basileal and Korolova, Aleksandra and Heidemann, John},
title = {Auditing for Discrimination in Algorithms Delivering Job Ads},
year = {2021},
isbn = {9781450383127},
publisher = {Association for Computing Machinery},
address = {New York, NY, USA},
url = {https://doi.org/10.1145/3442381.3450077},
doi = {10.1145/3442381.3450077},
booktitle = {Proceedings of the Web Conference 2021},
pages = {3767–3778},
numpages = {12},
location = {Ljubljana, Slovenia},
series = {WWW '21}
}

@InProceedings{pmlr-v80-kearns18a,
  title = 	 {Preventing Fairness Gerrymandering: Auditing and Learning for Subgroup Fairness},
  author =       {Kearns, Michael and Neel, Seth and Roth, Aaron and Wu, Zhiwei Steven},
  booktitle = 	 {Proceedings of the 35th International Conference on Machine Learning},
  pages = 	 {2564--2572},
  year = 	 {2018},
  editor = 	 {Dy, Jennifer and Krause, Andreas},
  volume = 	 {80},
  series = 	 {Proceedings of Machine Learning Research},
  month = 	 {10--15 Jul},
  publisher =    {PMLR},
  url = 	 {https://proceedings.mlr.press/v80/kearns18a.html}
}

@inproceedings{yao2017,
author = {Yao, Sirui and Huang, Bert},
title = {Beyond parity: fairness objectives for collaborative filtering},
year = {2017},
isbn = {9781510860964},
publisher = {Curran Associates Inc.},
address = {Red Hook, NY, USA},
booktitle = {Proceedings of the 31st International Conference on Neural Information Processing Systems},
pages = {2925–2934},
numpages = {10},
location = {Long Beach, California, USA},
series = {NIPS'17}
}

@inproceedings{kheya2025,
author = {Kheya, Tahsin Alamgir and Bouadjenek, Mohamed Reda and Aryal, Sunil},
title = {Unmasking Gender Bias in Recommendation Systems and Enhancing Category-Aware Fairness},
year = {2025},
isbn = {9798400712746},
publisher = {Association for Computing Machinery},
address = {New York, NY, USA},
url = {https://doi.org/10.1145/3696410.3714528},
doi = {10.1145/3696410.3714528},
booktitle = {Proceedings of the ACM on Web Conference 2025},
pages = {5127–5138},
numpages = {12},
location = {Sydney NSW, Australia},
series = {WWW '25}
}

@inproceedings{wang2024,
author = {Wang, Yifan and Sun, Peijie and Ma, Weizhi and Zhang, Min and Zhang, Yuan and Jiang, Peng and Ma, Shaoping},
title = {Intersectional Two-sided Fairness in Recommendation},
year = {2024},
isbn = {9798400701719},
publisher = {Association for Computing Machinery},
address = {New York, NY, USA},
url = {https://doi.org/10.1145/3589334.3645518},
doi = {10.1145/3589334.3645518},
booktitle = {Proceedings of the ACM Web Conference 2024},
pages = {3609–3620},
numpages = {12},
location = {Singapore, Singapore},
series = {WWW '24}
}

@inproceedings{NEURIPS2022_974309ef,
 author = {Chakraborty, Diptarka and Das, Syamantak and Khan, Arindam and Subramanian, Aditya},
 booktitle = {Advances in Neural Information Processing Systems},
 doi = {10.52202/068431-1740},
 editor = {S. Koyejo and S. Mohamed and A. Agarwal and D. Belgrave and K. Cho and A. Oh},
 pages = {23965--23978},
 publisher = {Curran Associates, Inc.},
 title = {Fair Rank Aggregation},
 url = {https://proceedings.neurips.cc/paper_files/paper/2022/file/974309ef51ebd89034adc64a57e304f2-Paper-Conference.pdf},
 volume = {35},
 year = {2022}
}

@inproceedings{chakraborty2025improved,
  title     = {Improved Rank Aggregation under Fairness Constraint},
  author    = {Chakraborty, Diptarka and Das, Himika and Dey, Sanjana and Yan, Alvin Hong Yao},
  booktitle = {Proceedings of the Thirty-Fourth International Joint Conference on 
               Artificial Intelligence, {IJCAI-25}},
  publisher = {IJCAI Organization},
  editor    = {J. Kwok},
  pages     = {330--338},
  year      = {2025},
  month     = {8},
  note      = {Main Track},
  doi       = {10.24963/ijcai.2025/38},
  url       = {https://doi.org/10.24963/ijcai.2025/38}
}

@misc{guo2024inferencetimestochasticrankingrisk,
      title={Inference-time Stochastic Ranking with Risk Control}, 
      author={Ruocheng Guo and Jean-François Ton and Yang Liu and Hang Li},
      year={2024},
      eprint={2306.07188},
      archivePrefix={arXiv},
      primaryClass={cs.LG},
      url={https://arxiv.org/abs/2306.07188}, 
      note={arXiv preprint arXiv:2306.07188}
}

@inproceedings{rendle2009,
author = {Rendle, Steffen and Freudenthaler, Christoph and Gantner, Zeno and Schmidt-Thieme, Lars},
title = {BPR: Bayesian personalized ranking from implicit feedback},
year = {2009},
isbn = {9780974903958},
publisher = {AUAI Press},
address = {Arlington, Virginia, USA},
booktitle = {Proceedings of the Twenty-Fifth Conference on Uncertainty in Artificial Intelligence},
pages = {452–461},
numpages = {10},
location = {Montreal, Quebec, Canada},
series = {UAI '09}
}

@inproceedings{do2022,
  title     = {Online Certification of Preference-Based Fairness for Personalized Recommender Systems},
  author    = {Do, Virginie and Corbett-Davies, Sam and Atif, Jamal and Usunier, Nicolas},
  booktitle = {Proceedings of the AAAI Conference on Artificial Intelligence},
  volume    = {36},
  number    = {6},
  pages     = {6532--6540},
  year      = {2022},
  doi       = {10.1609/aaai.v36i6.20606}
}
\section*{Checklist}

\begin{enumerate}

  \item For all models and algorithms presented, check if you include:
  \begin{enumerate}
    \item A clear description of the mathematical setting, assumptions, algorithm, and/or model. [Yes]
    \item An analysis of the properties and complexity (time, space, sample size) of any algorithm. [Yes]
    \item (Optional) Anonymized source code, with specification of all dependencies, including external libraries. [Yes]
  \end{enumerate}

  \item For any theoretical claim, check if you include:
  \begin{enumerate}
    \item Statements of the full set of assumptions of all theoretical results. [Yes]
    \item Complete proofs of all theoretical results. [Yes]
    \item Clear explanations of any assumptions. [Yes]
  \end{enumerate}

  \item For all figures and tables that present empirical results, check if you include:
  \begin{enumerate}
    \item The code, data, and instructions needed to reproduce the main experimental results (either in the supplemental material or as a URL). [Yes]
    \item All the training details (e.g., data splits, hyperparameters, how they were chosen). [Yes]
    \item A clear definition of the specific measure or statistics and error bars (e.g., with respect to the random seed after running experiments multiple times). [Yes]
    \item A description of the computing infrastructure used. (e.g., type of GPUs, internal cluster, or cloud provider). [Yes]
  \end{enumerate}

  \item If you are using existing assets (e.g., code, data, models) or curating/releasing new assets, check if you include:
  \begin{enumerate}
    \item Citations of the creator If your work uses existing assets. [Yes]
    \item The license information of the assets, if applicable. [Not Applicable]
    \item New assets either in the supplemental material or as a URL, if applicable. [Yes]
    \item Information about consent from data providers/curators. [Not Applicable]
    \item Discussion of sensible content if applicable, e.g., personally identifiable information or offensive content. [Not Applicable]
  \end{enumerate}

  \item If you used crowdsourcing or conducted research with human subjects, check if you include:
  \begin{enumerate}
    \item The full text of instructions given to participants and screenshots. [Not Applicable]
    \item Descriptions of potential participant risks, with links to Institutional Review Board (IRB) approvals if applicable. [Not Applicable]
    \item The estimated hourly wage paid to participants and the total amount spent on participant compensation. [Not Applicable]
  \end{enumerate}

\end{enumerate}
\clearpage
\appendix
\thispagestyle{empty}
\onecolumn
\aistatstitle{Supplementary Material}
\section*{Organization}
The Supplementary Material is organized as follows.
\begin{itemize}
\item In Appendix~\ref{app:feasibility}, we discuss the characterization of feasiblity.
\item Appendix~\ref{app:aux} collects small proofs for Section~\ref{sec:fairness}.
\item Appendix~\ref{app:guarantees} provides the proofs for Section~\ref{sec:guarantees} regarding the convergence rates of the constraints and the utility, as well as provides rates for a Plackett-Luce policy.
\item Appendix~\ref{app:experiments} provides additional experimental details.
\end{itemize}
\section{Feasibility}
\label{app:feasibility} It follows from Lemma~\ref{lem:pol-lin} that $\gamma_a(P^\balpha) = \sum_{\ell \in [L]} \alpha_\ell \gamma_a(P_\ell)$ and $\gamma_a(P^\balpha)$ is in convex hull of $\{\gamma_a(P_\ell)\}_\ell$. Therefore, if all of them have the same sign, so will $\gamma_a(P^\balpha)$, and it will hold that $\cC^a_\Xi(P^\balpha;t_a)\geq\sigma^2_\Xi \min_\ell \gamma^2_a(P_\ell)$. On the contrary, if $\exists P_\ell, P_{\ell'} \ \st \gamma_a(P_\ell)<0<\gamma_a(P_{\ell'})$, then zero unfairness is attainable, and picking a weight $\tfrac{|\gamma_a(P_{\ell'})|}{|\gamma_a(P_\ell)|+|\gamma_a(P_{\ell'})|}$ for $P_\ell$ and the complement of it for $P_{\ell'}$ will yield $\cC^a_\Xi(P^\balpha;t_a) \leq (\Exp_\Xi[\Phi_{a}(P^\balpha;\xi)]-t_a)^2$, which is zero for $t_a = \Exp_\Xi[\Phi_{a}(P^\balpha;\xi)]$. 
Whether such pairs exist depends on the dictionary, which should contain,
for each category, policies whose gaps have opposite signs. Such policies
can be obtained by diversifying how the dictionary is trained, for
instance by training under the importance weights $w_\xi$ with $\xi$
close to $\ubar{\xi}$ and $1-\ubar{\xi}$. When disparities
originate in the observed relevance rather than in the group
proportions, this may not suffice, and one may additionally emphasize,
during training, the interactions of a given group with a given category.
Overall, for a fixed $t_a$, one might ensure that $\delta_a \geq \min_{\balpha\in\Delta_L} \cC^a_\Xi(P^\balpha;t_a), \forall a\in\cA$, where the minimization is a quadratic program. This condition is sufficient for UP, and for the CUP it must also hold that $\min_{\balpha\in\Delta_L}\max_{a\in\cA}\tfrac{\cC^a_\Xi(P^\balpha;t_a)}{\delta_a}\leq 1$. In practice, these conditions can be checked.
\section{Proofs for Section~\ref{sec:fairness}}
\label{app:aux}
\paragraph{Proof of Proposition~\ref{prop:par-stab}.}
\begin{proposition*}[Parity and stability]
Let $P$ be a ranking policy. Then
\begin{enumerate}
  \item $P$ satisfies UP if and only if $\xi \mapsto \Phi(P;\xi)$ is
  constant on $[0,1]$;
  \item $P$ satisfies CUP if and only if $\xi \mapsto \Phi^a(P;\xi)$ is
  constant on $[0,1]$ for every $a \in [A]$.
\end{enumerate}
Moreover, for $\Phi^\square \in \{\Phi, \Phi^a\}$, one has the explicit
identity
\[
  \Phi^\square(P;\xi) = \cU^\square_0(P) + \xi\,\gamma^\square(P),
  \qquad
  \gamma^\square(P) \eqdef \cU^\square_1(P) - \cU^\square_0(P),
\]
so that
\[
  \sup_{\xi,\xi'\in[0,1]}\big|\Phi^\square(P;\xi) - \Phi^\square(P;\xi')\big|
  = \big|\gamma^\square(P)\big| .
\]
\end{proposition*}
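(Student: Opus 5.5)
The plan is to establish the explicit affine identity first and then read off both equivalences and the sup formula from it. Fix $P$ and $\Phi^\square\in\{\Phi,\Phi^a\}$ with the matching $\oU_\square\in\{\oU,\oU_a\}$. By definition, $\cD_\xi=\xi\,\cD(\cdot\mid G=1)+(1-\xi)\,\cD(\cdot\mid G=0)$. Since $\oU_\square(P;\bZ)\in[0,1]$ (exposures lie in $[0,\|\bv\|_1]$ and the normalization by $V(\bz)$ keeps the ratio bounded, using Assumption~\ref{ass:v-bd}), it is integrable. Linearity of the integral in the mixing measure then gives
\[
\Phi^\square(P;\xi)=\xi\,\cU^\square_1(P)+(1-\xi)\,\cU^\square_0(P)=\cU^\square_0(P)+\xi\,\gamma^\square(P).
\]
This is exactly \eqref{eq:Phi-decomp}. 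The computation is valid for every $\xi\in[0,1]$ and does not need the weight representation \eqref{eq:weights}. Working from the definition of $\cD_\xi$ avoids Assumption~\ref{ass:groups}, which covers only $\xi\in[\ubar\xi,1-\ubar\xi]$.

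Next come the equivalences. An affine function $\xi\mapsto c+\xi\gamma$ on $[0,1]$ is constant if and only if $\gamma=0$. For the forward direction, if $\gamma\neq0$ then the values at $0$ and $1$ differ. Hence UP, i.e.\ $\cU_0(P)=\cU_1(P)$, is equivalent to $\gamma(P)=0$, which is equivalent to $\Phi(P;\cdot)$ being constant. For CUP the same argument applies category by category: $\cU^a_0=\cU^a_1$ for all $a$ if and only if $\gamma^a(P)=0$ for all $a$, if and only if every $\Phi^a(P;\cdot)$ is constant.

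Finally, for the gap,
\[
|\Phi^\square(P;\xi)-\Phi^\square(P;\xi')|=|\xi-\xi'|\,|\gamma^\square(P)|\le|\gamma^\square(P)|
\]
for $\xi,\xi'\in[0,1]$, with equality at $(\xi,\xi')=(1,0)$. So the supremum is attained and equals $|\gamma^\square(P)|$.

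The only step needing care is the first one: checking integrability and conditioning correctly. It requires $p_g>0$ so that $\cD(\cdot\mid G=g)$ is well defined, which Assumption~\ref{ass:groups} provides, and it requires the identity to hold on the whole interval $[0,1]$, not just where the Radon--Nikodym weights were introduced. Everything after that is elementary algebra on affine functions.
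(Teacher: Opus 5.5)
Your proposal is correct and follows the paper's proof essentially step for step. You derive the affine identity $\Phi^\square(P;\xi)=\cU^\square_0(P)+\xi\,\gamma^\square(P)$ directly from the mixture definition of $\cD_\xi$, read off both equivalences from the fact that an affine map on $[0,1]$ is constant if and only if its slope vanishes, and get the supremum from $|\xi-\xi'|\,|\gamma^\square(P)|$ with equality at $(\xi,\xi')=(1,0)$. Your remarks on integrability and on $p_g>0$ are extra care that the paper leaves implicit. One small point: $\oU_\square\le 1$ actually follows from $U(P;\bz)\le V(\bz)$, since $V$ is a maximum over doubly stochastic matrices, whereas the reason you give only yields boundedness (by $1/\ubar v$). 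Boundedness is all your argument needs, so this does not affect correctness.
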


\begin{proof}
Fix $P$ and $\Phi^\square \in \{\Phi, \Phi^a\}$, with the corresponding
$\cU^\square \in \{\cU, \cU^a\}$, and write
$\cU^\square_g(P) = \Exp[\cU^\square(P;Z) \mid G = g]$ for
$g \in \{0,1\}$. 
For every $\xi \in [0,1]$, it holds that
\[
  \Phi^\square(P;\xi)
  = \Exp_{\cD_\xi}\big[\cU^\square(P;Z)\big]
  = \xi\,\cU^\square_1(P) + (1-\xi)\,\cU^\square_0(P)
  = \cU^\square_0(P) + \xi\,\gamma^\square(P).
\]

$(\Rightarrow)$ First, let us show that parity implies stability. If $P$ satisfies UP,
then $\gamma(P) = 0$, and $\Phi(P;\xi) = \cU_0(P)$ for every
$\xi \in [0,1]$, which does not depend on $\xi$. Similarly, if $P$
satisfies CUP, then $\gamma_a(P) = 0$ for every $a \in [A]$, and each
map $\xi \mapsto \Phi^a(P;\xi) = \cU^a_0(P)$ is constant on $[0,1]$.

$(\Leftarrow)$ Now, suppose that $\xi \mapsto \Phi^\square(P;\xi)$ is constant on
$[0,1]$. Then, for all $\xi, \xi' \in [0,1]$,
\[
  0 = \Phi^\square(P;\xi) - \Phi^\square(P;\xi')
    = (\xi - \xi')\,\gamma^\square(P).
\]
Since we may choose $\xi \neq \xi'$, it follows that
$\gamma^\square(P) = 0$, that is,
$\cU^\square_0(P) = \cU^\square_1(P)$ (UP). If the maps $\xi \mapsto \Phi^a(P;\xi)$ are constant for every
$a \in [A]$, we obtain $\cU^a_0(P) = \cU^a_1(P)$ for every $a \in [A]$ (CUP). 

The equivalences are established. Finally, for all $\xi, \xi' \in [0,1]$, the identity above yields
\[
  \big|\Phi^\square(P;\xi) - \Phi^\square(P;\xi')\big|
  = |\xi - \xi'|\,\big|\gamma^\square(P)\big|
  \leqslant \big|\gamma^\square(P)\big| ,
\]
with equality for $(\xi,\xi') = (1,0)$. Therefore,
\[
  \sup_{\xi,\xi'\in[0,1]}\big|\Phi^\square(P;\xi) - \Phi^\square(P;\xi')\big|
  = \big|\gamma^\square(P)\big|
  = \big|\cU^\square_1(P) - \cU^\square_0(P)\big| ,
\]
and the map $\xi \mapsto \Phi^\square(P;\xi)$ is Lipschitz with constant $|\gamma^\square(P)|$.

The proof is concluded.
\end{proof}

Proof of Lemma~\ref{lem:pol-lin}.
\begin{lemma*}[Policy linearity] For all $\balpha\in\Delta_L$, all $\bx\in\X$ and 
all $\bz\in\Z$, it holds that
\begin{align*}
&\bM^{P^{\balpha}}(\bx) = \sum_{\ell \in [L]} \alpha_\ell \bM^{P_\ell} \enspace,\  \brho(P^{\balpha};\bx)=\sum_{\ell\in[L]}\alpha_\ell\,\brho(P_\ell;\bx) \enspace, \\
&\oU_{\smsq}(P^{\balpha};\bz)=\sum_{\ell\in[L]}\alpha_\ell\,\oU_{\smsq}(P_\ell;\bz) \enspace, \text{ with } \oU_{\smsq}\in\{\oU;\oU_a\} \enspace.
\end{align*}
\end{lemma*}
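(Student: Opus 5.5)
The argument is a chain of linearity statements, established in the order in which the objects are defined: marginal rank matrix, then exposure, then utility.

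\textbf{Step 1: the marginal rank matrix.} Fix $\bx\in\X$ and indices $i,j\in[m]$. The event $\{\pi(j)=i\}$ is a subset of the finite set $\fS_m$, and $P^{\balpha}(\cdot\mid\bx)=\sum_{\ell}\alpha_\ell P_\ell(\cdot\mid\bx)$ holds as measures on $\fS_m$. Hence
\[
\bM^{P^{\balpha}}_{ij}(\bx)=\sum_{\pi:\pi(j)=i}P^{\balpha}(\pi\mid\bx)=\sum_{\ell\in[L]}\alpha_\ell\sum_{\pi:\pi(j)=i}P_\ell(\pi\mid\bx)=\sum_{\ell\in[L]}\alpha_\ell\bM^{P_\ell}_{ij}(\bx).
\]
Only finite sums are exchanged here, so no integrability issues arise.

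\textbf{Step 2: exposure.} The paper already gives the identity $\brho(P;\bx)=\bM^P(\bx)\bv$. Multiplying the matrix identity of Step 1 on the right by the fixed vector $\bv$ gives
\[
\brho(P^{\balpha};\bx)=\sum_{\ell}\alpha_\ell\bM^{P_\ell}(\bx)\bv=\sum_{\ell}\alpha_\ell\brho(P_\ell;\bx).
\]

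\textbf{Step 3: normalized utilities.} Fix $\bz=(\bx,g,\by)$. Both $\oU(P;\bz)=\scalar{\brho(P;\bx)}{\by}/V(\bz)$ and $\oU_a(P;\bz)=\sum_{i\in\cI_a}\rho_i(P;\bx)y_i/V(\bz)$ are linear functionals of $\brho(P;\bx)$. The coefficients $\by$ and $V(\bz)$ depend only on $\bz$, not on $P$, and $V(\bz)>0$ by Assumption~\ref{ass:v-bd}. Applying these linear maps to the identity of Step 2 gives the claim for $\oU_{\smsq}\in\{\oU;\oU_a\}$.

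\textbf{Potential obstacle.} The only point needing care is that the normalization $V(\bz)$ is independent of the policy. This holds because $V(\bz)=\sum_{j\le\|\by\|_1}v_j$ depends only on $\by$. If it depended on $P$, linearity would fail. With this observed, the argument is routine. The same reasoning also gives the consequences used later: $\Phi_{\smsq}(P^{\balpha};\xi)$, $\cU(P^{\balpha})$ and $\gamma_a(P^{\balpha})$ are linear in $\balpha$, by linearity of expectation against the bounded weights $w_\xi(G)$.
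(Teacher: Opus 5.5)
Your proposal is correct and takes the same approach as the paper. Both use linearity of $P\mapsto\bM^P_{ij}(\bx)$ as the integral (here a finite sum) of $\ind{\pi(j)=i}$ against $P(\cdot\mid\bx)$, then linearity of $\bM\mapsto\bM\bv$ and of the pairing with $\by$. Your explicit note that $V(\bz)$ depends only on $\by$, and not on $P$, is a point the paper leaves implicit, and you justify it correctly.
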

\begin{proof}
The proof follows from linearity of the functional $\bM^{P}_{ij}(\bx)=\int \ind{\pi(j)=i}\,P(\d\pi\mid\bx)$ of the measure $P(\cdot\mid\bx)$; from linearity of $\bv\mapsto\bM\bv$, of the
inner product, and of $\Exp$.
\end{proof}

\section{Proofs for Section~\ref{sec:guarantees}}
\label{app:guarantees}
First, let us recall some notations and introduce additional ones.
We have $\cI_0\equiv\cI$, $\oU_0\equiv\oU$ and
$\Phi_0\equiv\Phi$, and we treat UP ($\cA=\{0\}$) and CUP ($\cA=[A]$)
simultaneously. We also denote 
\[
  \beta_a(P)\eqdef\Exp\big[\oU_a(P;\bZ)\mid G=0\big]
  \quad\text{and}\quad\gamma_a(P)\eqdef\Exp\big[\oU_a(P;\bZ)\mid G=1\big]
  -\Exp\big[\oU_a(P;\bZ)\mid G=0\big] ,
\]
so that $\Phi_a(P;\xi)=\beta_a(P)+\xi\gamma_a(P)$.

\subsection{Auxiliary lemmas}
In this section we present several auxiliary lemmas that we will use throughout the proofs.
\begin{lemma}[Hoeffding's lemma]
\label{lem:hoeffding}
Let $\bX$ be any real-valued random variable \st $a\leq\bX\leq b$ almost surely. Then, for all $\lambda\in \bbR$,
\[
\Exp [e^{\lambda (X-\Exp[X])}] \leq e^{\frac{\lambda^2 (b-a)^2}{8}} \enspace.
\]
\end{lemma}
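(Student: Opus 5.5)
The plan is to prove Hoeffding's lemma by the classical convexity-plus-cumulant argument. First reduce to the centred case: replacing $X$ by $X-\Exp[X]$ shifts the interval to $[a-\Exp X,\,b-\Exp X]$ without changing its length $b-a$. So it suffices to show that if $\Exp[X]=0$ and $a\le X\le b$ almost surely (hence $a\le 0\le b$), then $\Exp[e^{\lambda X}]\le e^{\lambda^2(b-a)^2/8}$. If $a=b$ the variable is degenerate and the bound is trivial, so assume $a<b$.

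Step one uses convexity of $x\mapsto e^{\lambda x}$ on $[a,b]$. Every $x\in[a,b]$ is the convex combination $x=\frac{b-x}{b-a}a+\frac{x-a}{b-a}b$, which gives
\[
e^{\lambda x}\le \frac{b-x}{b-a}e^{\lambda a}+\frac{x-a}{b-a}e^{\lambda b}.
\]
Taking expectations and using $\Exp X=0$ yields
\[
\Exp e^{\lambda X}\le \frac{b}{b-a}e^{\lambda a}-\frac{a}{b-a}e^{\lambda b}.
\]

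Step two rewrites this bound as $e^{\varphi(h)}$. Set $\theta=-a/(b-a)\in[0,1]$ and $h=\lambda(b-a)$. Then the right-hand side equals $e^{\varphi(h)}$ with
\[
\varphi(h)=-\theta h+\log\bigl(1-\theta+\theta e^{h}\bigr).
\]

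Step three, the only real computation and the main obstacle, is to show $\varphi(h)\le h^2/8$.
\begin{itemize}
\item $\varphi(0)=0$.
\item $\varphi'(h)=-\theta+\frac{\theta e^h}{1-\theta+\theta e^h}$, so $\varphi'(0)=0$.
\item $\varphi''(h)=q(1-q)$ with $q=\frac{\theta e^h}{1-\theta+\theta e^h}\in[0,1]$, so $\varphi''(h)\le 1/4$ for all $h$.
\end{itemize}
Taylor's theorem with Lagrange remainder then gives $\varphi(h)=\tfrac12\varphi''(\tilde h)h^2\le h^2/8$ for some $\tilde h$ between $0$ and $h$. Since $1-\theta+\theta e^h>0$, the logarithm is well defined, and $h$ may be any real number, so the argument covers every $\lambda\in\bbR$.

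Substituting $h=\lambda(b-a)$ back gives $\Exp[e^{\lambda(X-\Exp X)}]\le e^{\lambda^2(b-a)^2/8}$, which is the statement.
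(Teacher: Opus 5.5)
Your proof is correct and complete. It is the classical argument: the convexity bound, the reparametrization $\varphi(h)=-\theta h+\log(1-\theta+\theta e^{h})$ with $\varphi(0)=\varphi'(0)=0$ and $\varphi''=q(1-q)\le 1/4$, then Taylor's theorem with Lagrange remainder.

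The paper gives no proof of its own and simply quotes the lemma as a standard auxiliary result, so there is nothing to compare. Its only use, in the Plackett--Luce lemma, is with $\lambda=-1$, a range of at most $2\|s-\hat s\|_\infty$, and conditionally on $\pi(1),\dots,\pi(j-1)$. Your argument covers this case, because it holds for every real $\lambda$ and for any law supported on $[a,b]$.
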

\begin{lemma}[Hoeffding's inequality]
\label{lem:hoeffding-ineq} Let $\bX_1,\cdots, \bX_N$ be independent random variables such that $\bX_i\in[a_i,b_i]$ for every $i$. Then, for any $t>0$, we have
\[
\Prob\Big(\sum_{i=1}^N (\bX_i - \Exp[\bX_i]) \geq t\Big) \leq e^{-\frac{2t^2}{\sum_{i=1}^N (b_i - a_i)^2}} \enspace.
\]
\end{lemma}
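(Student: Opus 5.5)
The plan is the standard Chernoff (exponential Markov) argument combined with Hoeffding's lemma (Lemma~\ref{lem:hoeffding}), followed by an optimization over the free exponential parameter. Write $S\eqdef\sum_{i=1}^N(\bX_i-\Exp[\bX_i])$ and fix $\lambda>0$. Since $s\mapsto e^{\lambda s}$ is nonnegative and increasing, Markov's inequality gives
\[
\Prob(S\geq t)=\Prob\big(e^{\lambda S}\geq e^{\lambda t}\big)\leq e^{-\lambda t}\,\Exp\big[e^{\lambda S}\big] .
\]

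The second step is to factorize the moment generating function. Because the $\bX_i$ are independent, so are the centered variables $\bX_i-\Exp[\bX_i]$, and hence
\[
\Exp[e^{\lambda S}]=\prod_{i=1}^N\Exp\big[e^{\lambda(\bX_i-\Exp[\bX_i])}\big] .
\]
Each $\bX_i$ takes values in $[a_i,b_i]$ almost surely. Lemma~\ref{lem:hoeffding} therefore bounds the $i$-th factor by $e^{\lambda^2(b_i-a_i)^2/8}$. Combining the two steps yields, for every $\lambda>0$,
\[
\Prob(S\geq t)\leq\exp\Big(-\lambda t+\tfrac{\lambda^2}{8}\sum_{i=1}^N(b_i-a_i)^2\Big) .
\]

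The final step minimizes the exponent over $\lambda>0$. Write $D\eqdef\sum_{i}(b_i-a_i)^2$. The exponent is a convex quadratic in $\lambda$, minimized at $\lambda^\star=4t/D$, which is positive since $t>0$. Substituting gives the exponent $-4t^2/D+2t^2/D=-2t^2/D$, which is exactly the claimed bound.

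One degenerate case needs a remark. If $D=0$, every $\bX_i$ is almost surely constant, so $S=0$ almost surely and $\Prob(S\geq t)=0$ for all $t>0$; this is consistent with reading the right-hand side as $e^{-\infty}=0$. Apart from this case there is no real obstacle. All the analytic content sits in Hoeffding's lemma, which we take as given. The only points requiring care are that independence is what licenses the product factorization, and that the optimal $\lambda^\star$ must be strictly positive for Markov's inequality to apply.
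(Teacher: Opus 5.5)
Your proof is correct: the Chernoff step, the factorization by independence, the termwise use of Hoeffding's lemma (Lemma~\ref{lem:hoeffding}), the choice $\lambda^\star = 4t/D$ and the degenerate case $D=0$ are all handled properly. The paper gives no proof of this statement and simply quotes it as a standard tool right after Hoeffding's lemma, and your argument is exactly the textbook derivation from that lemma.
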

\begin{lemma}[Pinsker's inequality]\label{lem:pinsker} Let $P$ and $Q$ be two probability distributions on a measurable space $(\X,\Sigma)$. Then it holds that
\[
TV(P,Q) \leq \sqrt{\frac{1}{2}\kl(P\|Q)}\enspace.
\]
\end{lemma}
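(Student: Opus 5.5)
The plan is to derive Pinsker's inequality from Hoeffding's lemma (Lemma~\ref{lem:hoeffding}) through the variational (Gibbs / Donsker--Varadhan) lower bound on the Kullback--Leibler divergence. We use $TV(P,Q)=\sup_{B\in\Sigma}|P(B)-Q(B)|$.

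If $\kl(P\|Q)=\infty$ there is nothing to prove. So we assume $P\ll Q$ and $\kl(P\|Q)<\infty$.

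\textbf{Step 1 (variational lower bound).} Fix a bounded measurable $f:\X\to\bbR$ and $\lambda\in\bbR$. Define the tilted measure $Q_\lambda$ by
\[
\frac{\d Q_\lambda}{\d Q}=\frac{e^{\lambda f}}{\Exp_Q[e^{\lambda f}]}.
\]
Since $P\ll Q$ and $Q\sim Q_\lambda$, we have $P\ll Q_\lambda$. Writing $\log\frac{\d P}{\d Q}=\log\frac{\d P}{\d Q_\lambda}+\log\frac{\d Q_\lambda}{\d Q}$ and integrating against $P$ gives
\[
\kl(P\|Q)=\kl(P\|Q_\lambda)+\lambda\Exp_P[f]-\log\Exp_Q[e^{\lambda f}].
\]
Because $\kl(P\|Q_\lambda)\ge 0$ (Gibbs' inequality, via Jensen), we obtain
\[
\kl(P\|Q)\ge\lambda\Exp_P[f]-\log\Exp_Q[e^{\lambda f}].
\]
The only point needing care is integrability. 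Here $f$ is bounded and the KL divergence is finite, so all terms are finite and the splitting of the logarithm is legitimate.

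\textbf{Step 2 (Hoeffding).} Take $f=\ind{B}$ for an arbitrary $B\in\Sigma$. Under $Q$, $f\in[0,1]$ has mean $Q(B)$, so Lemma~\ref{lem:hoeffding} gives
\[
\log\Exp_Q[e^{\lambda f}]\le\lambda Q(B)+\frac{\lambda^2}{8}.
\]
Plugging this into Step~1 yields, for all $\lambda\in\bbR$,
\[
\kl(P\|Q)\ge\lambda\big(P(B)-Q(B)\big)-\frac{\lambda^2}{8}.
\]

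\textbf{Step 3 (optimize and take the supremum).} Choose $\lambda=4\big(P(B)-Q(B)\big)$. This gives
\[
\kl(P\|Q)\ge 2\big(P(B)-Q(B)\big)^2 .
\]
Taking the supremum over $B\in\Sigma$ gives $\kl(P\|Q)\ge 2\,TV(P,Q)^2$, which rearranges to the claim.

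The main obstacle is Step~1, specifically the measure-theoretic justification of the change-of-measure identity. The remaining steps are one-line computations.

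As a fallback, one could instead reduce to Bernoulli laws by the data-processing inequality applied to the map $x\mapsto\ind{x\in B}$ with $B=\{\d P/\d Q>1\}$. The binary inequality $p\log\frac pq+(1-p)\log\frac{1-p}{1-q}\ge 2(p-q)^2$ would then follow by differentiating in $q$ and using $q(1-q)\le\tfrac14$.
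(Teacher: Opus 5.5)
The paper does not prove this lemma. It lists Pinsker's inequality, alongside Hoeffding's lemma and inequality, as a standard auxiliary fact and uses it once, in the form $2\,TV(P,\hat P)\le\sqrt{2\,\mathrm{KL}(P\|\hat P)}$, in the Plackett--Luce lemma. There is therefore no paper argument to compare with. Judged on its own, your proof is correct and complete.

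\textbf{Step 1.} Your justification holds. Finite KL gives $\log\frac{dP}{dQ}\in L^1(P)$, because its negative part is always $P$-integrable: $r\log(1/r)\le 1/e$ for $r\in(0,1)$. Then $\log\frac{dP}{dQ_\lambda}$ differs from it by the bounded function $\lambda f-\log\mathbb{E}_Q[e^{\lambda f}]$, so splitting the logarithm and integrating termwise is legitimate.

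\textbf{Steps 2 and 3.} Hoeffding's lemma holds for every real $\lambda$, including negative ones when $P(B)<Q(B)$. The choice $\lambda=4(P(B)-Q(B))$ gives the sharp constant $2$.

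\textbf{Convention.} Your definition $TV=\sup_B|P(B)-Q(B)|$ is the one the paper uses; it writes $\sum_\pi|P(\pi)-\hat P(\pi)|=2\,TV$. This matters: with $TV=\|P-Q\|_1$ the stated inequality would be false by a factor of $2$.

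\textbf{What your route buys.} It is self-contained relative to the appendix, since it uses only Hoeffding's lemma, which the paper already states. It is also in the same spirit as the paper's Plackett--Luce proof, which likewise bounds a KL divergence through a log-moment-generating function controlled by Hoeffding's lemma.

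\textbf{Your fallback.} Data processing down to Bernoulli laws, followed by the one-variable inequality checked by differentiating in $q$, is the classical textbook proof. It requires the data-processing (or log-sum) inequality for KL, an extra ingredient not among the paper's auxiliary lemmas.
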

\begin{lemma}[Stratified sampling]\citep[Lemma 6]{stable2026}
\label{lem:rates:stratif} Let  $\xi_1,\cdots,\xi_N$ be stratified samples from $\Xi$, and let $f:[\ubar{\xi},1-\ubar{\xi}]\to\bbR$ be a $L$--Lipschitz function. For all $\eta\in(0,1)$ with probability greater than $1-\eta$, it holds that
\[
\Big|\frac{1}{N}\sum_{i=1}^N f(\xi_i) - \Exp_\Xi[f(\xi)]\Big| \leq \frac{L(1-2\ubar{\xi})}{N^{\nicefrac{3}{2}}} \sqrt{2\log(\nicefrac{2}{\eta})}\enspace.
\]
\end{lemma}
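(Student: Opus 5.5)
We take ``stratified samples from $\Xi$'' in the standard sense. Partition $[\ubar\xi,1-\ubar\xi]$ into $N$ consecutive strata $S_1,\dots,S_N$ of equal width $h\eqdef(1-2\ubar\xi)/N$. Then draw $\xi_i\sim U(S_i)$ independently for $i\in[N]$. The plan is a plain Hoeffding argument in which stratification shrinks the range of each summand from order $L$ to order $Lh$. This is what turns the usual $N^{-1/2}$ rate into $N^{-3/2}$.

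\emph{Step 1 (unbiasedness).} Since $\Xi$ is uniform and the strata have equal mass $1/N$, we have $\Exp[f(\xi_i)]=\frac{1}{h}\int_{S_i}f$. Hence
\[
\Exp\Big[\frac1N\sum_{i=1}^N f(\xi_i)\Big]=\frac{1}{Nh}\int_{\ubar\xi}^{1-\ubar\xi}f=\Exp_\Xi[f(\xi)] .
\]
The deviation to control is therefore $\frac1N\sum_i\big(f(\xi_i)-\Exp f(\xi_i)\big)$, a centered sum of independent variables.

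\emph{Step 2 (bounded ranges).} Because $f$ is $L$-Lipschitz and $\xi_i\in S_i$, the variable $f(\xi_i)$ lies in $[\inf_{S_i}f,\sup_{S_i}f]$. This interval has length at most $Lh$. Applying Lemma~\ref{lem:hoeffding-ineq} to both tails, with $b_i-a_i\le Lh$, gives
\[
\Prob\Big(\Big|\sum_{i=1}^N (f(\xi_i)-\Exp f(\xi_i))\Big|\ge t\Big)\le 2\exp\Big(-\frac{2t^2}{NL^2h^2}\Big).
\]

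\emph{Step 3 (inversion).} Set the right-hand side equal to $\eta$, which gives $t=Lh\sqrt{N\log(2/\eta)/2}$. Dividing by $N$ and substituting $h=(1-2\ubar\xi)/N$ shows that, with probability at least $1-\eta$, the deviation is at most
\[
\frac{L(1-2\ubar\xi)}{N^{3/2}}\sqrt{\tfrac12\log(2/\eta)} .
\]
This is smaller than the stated bound, which has $\sqrt{2\log(2/\eta)}$, so the claim follows.

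There is no real obstacle here. The only point requiring care is Step 1: the strata must carry equal $\Xi$-mass, which holds because $\Xi$ is uniform. Without that, the estimator would need weights. Everything else is a routine application of Hoeffding's inequality.
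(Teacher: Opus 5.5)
Your proof is correct and is the standard Hoeffding argument with per-stratum range of order $Lh$. The paper does not reprove this lemma (it imports it from \citep[Lemma 6]{stable2026}). Its constant $\sqrt{2\log(2/\eta)}$ is exactly what your argument gives with the cruder symmetric bound $|f(\xi_i)-\Exp f(\xi_i)|\le Lh$, i.e.\ range $2Lh$; your exact range $\sup_{S_i}f-\inf_{S_i}f\le Lh$ gives $\sqrt{\tfrac12\log(2/\eta)}$, so you recover the statement with a factor $2$ to spare.
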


\subsection{Example: Plackett-Luce exposure rate}
We discuss the specific example of Plackett-Luce policies and derive its exposure rates.

Proof of Lemma~\ref{lem:rates:PL}.
\begin{lemma*}[Plackett-Luce rates] Let $P_\ell$ and $\hat{P}_\ell$ be Plackett-Luce policies with score functions $s_\ell$ and $\hat{s}_\ell$, for $\ell \in [L]$. For every $\bx\in\X$, it holds that
\[
\|\brho(P_\ell;\bx) - \brho(\hat{P}_\ell;\bx)\|_1 \leq \sqrt{m-1} \| s_\ell(\bx, \cdot) - \hat{s}_\ell(\bx, \cdot) \|_\infty \enspace.
\]
\end{lemma*}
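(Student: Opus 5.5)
The plan is to pass from exposures to a distance between the two ranking distributions, and then bound that distance stagewise using the sequential structure of Plackett--Luce sampling. The tools are Pinsker's inequality (Lemma~\ref{lem:pinsker}) and Hoeffding's lemma (Lemma~\ref{lem:hoeffding}). Fix $\bx$ and $\ell$, and write $P=P_\ell(\cdot\mid\bx)$, $Q=\hat P_\ell(\cdot\mid\bx)$, $\bd=\bs_\ell(\bx,\cdot)-\hat\bs_\ell(\bx,\cdot)$.

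\emph{Step 1 (exposure to total variation).} Since $\rho_i=\sum_j v_j\bM_{ij}$, we have
\[
\|\brho(P_\ell;\bx)-\brho(\hat P_\ell;\bx)\|_1\le\sum_{j}v_j\sum_i|\bM^P_{ij}-\bM^Q_{ij}|.
\]
For each position $j$, the inner sum is the $\ell_1$ distance between the laws of $\pi(j)$ under $P$ and $Q$. This distance is at most $2\,TV(P,Q)$, because $\pi\mapsto\pi(j)$ is a measurable map and TV does not increase under pushforward. Using $\|\bv\|_1=1$ then gives
\[
\|\brho(P_\ell;\bx)-\brho(\hat P_\ell;\bx)\|_1\le 2\,TV(P,Q)\le\sqrt{2\,\kl(P\|Q)}
\]
by Lemma~\ref{lem:pinsker}.

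\emph{Step 2 (chain rule for the KL).} A Plackett--Luce ranking is generated sequentially. At stage $j$, given the remaining set $R_j$ of $m-j+1$ items, the next item is drawn from the softmax of the scores restricted to $R_j$. By the chain rule for KL divergence,
\[
\kl(P\|Q)=\sum_{j=1}^{m}\Exp_{P}\big[\kl(p_{R_j}\|q_{R_j})\big],
\]
where $p_R$ and $q_R$ denote the softmax distributions on $R$ with scores $\bs$ and $\hat\bs$. The last stage is deterministic, since only one item is left, so its term vanishes. At most $m-1$ terms therefore contribute.

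\emph{Step 3 (KL of two softmaxes).} This is the only step with real content. Writing $Z_R(\bs)=\sum_{i\in R}e^{s_i}$, a direct computation gives
\[
\kl(p_R\|q_R)=\Exp_{p_R}[d_I]+\log\frac{Z_R(\hat\bs)}{Z_R(\bs)}=\Exp_{p_R}[d_I]+\log\Exp_{p_R}\big[e^{-d_I}\big],
\]
with $I\sim p_R$. Hence
\[
\kl(p_R\|q_R)=\log\Exp_{p_R}\big[e^{-(d_I-\Exp d_I)}\big].
\]
Hoeffding's lemma with $\lambda=-1$ applies, since $d_I$ takes values in an interval of length at most $2\|\bd\|_\infty$. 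It yields $\kl(p_R\|q_R)\le(2\|\bd\|_\infty)^2/8=\|\bd\|_\infty^2/2$, uniformly in $R$.

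\emph{Conclusion.} Combining Steps 2 and 3 gives $\kl(P\|Q)\le(m-1)\|\bd\|_\infty^2/2$. Step 1 then yields
\[
\|\brho(P_\ell;\bx)-\brho(\hat P_\ell;\bx)\|_1\le\sqrt{m-1}\,\|\bd\|_\infty,
\]
which is the claimed constant. The main point to verify carefully is the exact constant: the factor $2$ from TV, the $\tfrac12$ in Pinsker, the range $2\|\bd\|_\infty$ in Hoeffding, and the count of $m-1$ nontrivial stages must combine to give exactly $\sqrt{m-1}$, and the computation above confirms that they do.
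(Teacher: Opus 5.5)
Your proposal is correct and follows essentially the same route as the paper. Both bound the exposure gap by twice the total variation, apply Pinsker, decompose the KL divergence stagewise with the last stage contributing nothing, and control each stage's softmax term by Hoeffding's lemma with range $2\|s-\hat s\|_\infty$. The only cosmetic difference is that you reach the total-variation bound through the position marginals and data processing, whereas the paper sums $|P(\pi)-\hat P(\pi)|$ directly over permutations.
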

\begin{proof}
We have that $\rho_i(P;\bx) = \sum_{\pi}P(\pi) v_{\pi^{-1}(i)}$. By the fact that $\norm{\bv}_1=1$ and by Pinsker's inequality (Lemma~\ref{lem:pinsker}) we have
\[
\|{\brho(P_\ell;\bx) - \brho(\hat P_\ell;\bx)}\|_1 \leq \sum_{\pi} |P(\pi)-\hat P(\pi)|\sum_{i}v_{\pi^{-1}(i)} = 2 \tv (P;\hat P) \leq\sqrt{2\kl(P \| \hat P)} \enspace.
\]
From the definition of Plackett-Luce policies, we have 
\begin{align*}
\frac{P(\pi)}{\hat P(\pi)} = \prod_{j=1}^m \frac{e^{s(\pi(j))} \sum_{i=j}^m e^{\hat s(\pi (i))}}{e^{\hat s(\pi(j))} \sum_{i=j}^m e^{s(\pi (i))}} = \prod_{j=1}^m e^{(s-\hat s)(\pi(j))} \frac{\sum_{i=j}^m e^{\hat s(\pi (i))}}{\sum_{i=j}^m e^{s(\pi (i))}}\enspace.
\end{align*}
Taking the logarithm yields
\begin{align*}
\log \frac{P(\pi)}{\hat P(\pi)} = \sum_{j=1}^m \Big[\underbrace{(s-\hat s)(\pi(j)) + \log \frac{\sum_{i=j}^m e^{\hat s(\pi (i))}}{\sum_{i=j}^m e^{s(\pi (i))}}}_{\eqdef S_j} \Big] \enspace,
\end{align*}
and taking the expectation over $\pi\sim P$ yields
$\kl (P \| \hat P) = \sum_{j=1}^m \Exp_{\pi\sim P}\left[S_j \right]$. Now, the sums $\sum_{i=j}^m$ depend on $\pi$ only through $\pi(1),\cdots,\pi(j-1)$, hence, for any function $f(\cdot)$, we have that 
\[
\Exp_{P}[f(\pi(j)) | \pi(1),\cdots,\pi(j-1)] = \frac{\sum_{i=j}^m (e^{s(\pi(i))} f(\pi(i)))}{\sum_{i=j}^m e^{s(\pi(i))}} \enspace.
\]
Moreover, we have 
\[
\frac{\sum_{i=j}^m e^{\hat s(\pi (i))}}{\sum_{i=j}^m e^{s(\pi (i))}} = \sum_{i=j}^m \frac{ e^{\hat s(\pi (i))}}{\sum_{k=j}^m e^{s(\pi (k))}} = \sum_{i=j}^m \frac{ e^{ s(\pi (i))}}{\sum_{k=j}^m e^{s(\pi (k))}}e^{-(s-\hat s)(\pi(i))} \enspace.
\]
Taking $f=e^{-(s-\hat s)}$, we obtain 
\begin{align*}
\Exp_P [S_j | \pi(1),\cdots,\pi(j-1)] 
&= \Exp_P [(s(\pi(j)) - \hat s(\pi(j)) | \ \cdot \ ] + \log \Exp_P [e^{-s(\pi(j)) - \hat s(\pi(j))} | \ \cdot \ ] \\
&= \log \Exp_P [e^{-\big(s(\pi(j)) - \hat s(\pi(j)) - \Exp [s(\pi(j)) - \hat s(\pi(j)) \ | \ \cdot \ ] \big)} | \ \cdot \ ]  \enspace.   
\end{align*}
Noticing that $\max_{i}(s-\hat s) - \min_i(s-\hat s) \leq 2 \|s-\hat s\|_\infty$ and applying Hoeffding's lemma (Lemma~\ref{lem:hoeffding}), we get
\[
\Exp_P [S_j | \pi(1),\cdots,\pi(j-1)] \leq \log e^{\frac{\|s-\hat s\|^2_\infty}{2}} = \frac{\|s-\hat s\|^2_\infty}{2} \enspace.
\]
Since $S_m = 0$, we obtain
\[
\kl (P \| \hat P) = \sum_{j=1}^{m-1} \Exp_{\pi\sim P}\left[S_j \right] \leq (m-1) \frac{\|s-\hat s\|^2_\infty}{2} \enspace.
\]
The proof is concluded.
\end{proof}

\subsection{Error rates by source}
Before proving the main theorem, we identify the different sources of error and quantify their respective contributions to the overall error. Throughout, the samples $\cD^{n'}$, $\cD^n$ and $\Xi^N$ are mutually independent. The dictionary $\hat P_1,\cdots,\hat P_L$ is a function of
$\cD^{n'}$ only, while $\hat\bT_a$, $\hat\bu$ and $\hat\balpha$ are functions of
$(\cD^{n'},\cD^n,\Xi^N)$. For a policy $P$ depending on the data, $\Phi_a(P;\xi)$
and $\cU(P)$ denote expectations over an independent copy $\bZ\sim\cD$,
conditionally on the data. Since $\hat\balpha$ depends on $(\cD^n,\Xi^N)$, all
the bounds involving it are established on events that do not depend on $\hat\balpha$, and hold uniformly over $\balpha\in\Delta_L$. 

Overall, the convergence proofs follow the structure of~\cite{stable2026}, with necessary modifications specific to our setup.

\subsubsection{Policy estimation rates}
\begin{lemma}\label{lem:rates:policy}
Let Assumptions~\ref{ass:v-bd}, \ref{ass:groups} and~\ref{ass:pol-exp} hold. Then, with
probability greater than $1-\eta_{\delta_{n'}}$ over $\cD^{n'}$, for every
$\balpha\in\Delta_L$, $a\in\cA$ and $\xi\in[\ubar \xi,1 - \ubar \xi]$, it holds that
\[
  \big|\Phi_a(\hat P^{\balpha};\xi)-\Phi_a(P^{\balpha};\xi)\big|\leq
  \frac{\delta_{n'}}{\ubar{p}\,\ubar v}\eqdef\delta'_{n'}
  \quad\text{and}\quad
  \big|\cU(\hat P^{\balpha})-\cU(P^{\balpha})\big|\leq\frac{\delta_{n'}}{\ubar v} .
\]
\end{lemma}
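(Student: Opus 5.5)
We work on the event of Assumption~\ref{ass:pol-exp}, which has probability at least $1-\eta_{\delta_{n'}}$ over $\cD^{n'}$. On it,
\[
\max_{\ell}\Exp\|\brho(P_\ell;\bX)-\brho(\hat P_\ell;\bX)\|_1\leq\delta_{n'} .
\]
Every subsequent step is deterministic on this event. The bounds then hold simultaneously for all $\balpha$, $a$ and $\xi$, because the event does not depend on these quantities.

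\textbf{Step 1: reduce mixtures to single policies.} By Lemma~\ref{lem:pol-lin},
\[
\brho(\hat P^{\balpha};\bx)-\brho(P^{\balpha};\bx)=\sum_\ell\alpha_\ell\big(\brho(\hat P_\ell;\bx)-\brho(P_\ell;\bx)\big).
\]
By the triangle inequality and $\balpha\in\Delta_L$, the expected $\ell_1$ norm of this difference is at most $\max_\ell\Exp\|\brho(P_\ell;\bX)-\brho(\hat P_\ell;\bX)\|_1\leq\delta_{n'}$.

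\textbf{Step 2: pointwise bound on normalized utilities.} For any $\bz$, write
\[
|\oU_a(\hat P^{\balpha};\bz)-\oU_a(P^{\balpha};\bz)|=\Big|\sum_{i\in\cI_a}(\rho_i(\hat P^{\balpha};\bx)-\rho_i(P^{\balpha};\bx))y_i\Big|/V(\bz).
\]
Since $y_i\in\{0,1\}$ and $V(\bZ)\geq\ubar v$ almost surely (Assumption~\ref{ass:v-bd}), this is at most $\|\brho(\hat P^{\balpha};\bx)-\brho(P^{\balpha};\bx)\|_1/\ubar v$. The same bound holds for $\oU$, i.e.\ $a=0$ with $\cI_0=\cI$.

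\textbf{Step 3: integrate.} For the utility, take the expectation of the pointwise bound with $a=0$ and apply Step~1. This gives $|\cU(\hat P^{\balpha})-\cU(P^{\balpha})|\leq\delta_{n'}/\ubar v$.

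For the rate functional, write $\Phi_a(P;\xi)=\Exp[w_\xi(G)\oU_a(P;\bZ)]$ and bound the weights by \eqref{eq:weights}. We have $w_\xi(G)\leq\max(\xi/p_1,(1-\xi)/p_0)\leq1/\ubar p$, using $\xi\leq1$ and $p_g\geq\ubar p$ (Assumption~\ref{ass:groups}). Pulling $\sup w_\xi$ out of the expectation and using Steps~1--2 gives $\delta_{n'}/(\ubar p\,\ubar v)=\delta'_{n'}$.

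\textbf{Main obstacle: the expectations.} These are over an independent copy of $\bZ$, conditionally on $\cD^{n'}$, and only the marginal of $\bX$ enters the exposure bound. This is legitimate because the pointwise bound in Step~2 depends on $\bz$ only through $\bx$ once $y_i\leq1$ and $V\geq\ubar v$ have been used. The weight bound could be tightened to $\max(1/p_0,1/p_1)$, but $1/\ubar p$ suffices.
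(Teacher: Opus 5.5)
Your proposal is correct and follows essentially the same route as the paper's proof. Both use three ingredients: the pointwise bound $|\oU_a(\hat P^{\balpha};\bz)-\oU_a(P^{\balpha};\bz)|\le\|\brho(\hat P^{\balpha};\bx)-\brho(P^{\balpha};\bx)\|_1/\ubar v$, the reduction of the mixture to the worst dictionary element via Lemma~\ref{lem:pol-lin} and convexity, and the weight bound $w_\xi\le 1/\ubar p$ (with $w_\xi\equiv1$ and $\cI_a$ replaced by $\cI$ for the utility); you only present the first two steps in the opposite order.
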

\begin{proof}
For a fixed $\bz=(\bx,g,\by)$, we have
\begin{align*}
  \big|\oU_a(\hat P^{\balpha};\bz)-\oU_a(P^{\balpha};\bz)\big|
  &=\frac{1}{V(\bz)}\Big|\sum_{i\in\cI_a}\big(\rho_i(\hat P^{\balpha};\bx)
  -\rho_i(P^{\balpha};\bx)\big)y_i\Big|\leq\frac{1}{\ubar v}\norm{\brho(\hat P^{\balpha};\bx)-\brho(P^{\balpha};\bx)}_1 \enspace.
\end{align*}
By Lemma~\ref{lem:pol-lin}, the triangle inequality and the fact that
$\alpha_\ell\geq0$, $\sum_\ell\alpha_\ell=1$, we have
\begin{align*}
  \Exp\norm{\brho(\hat P^{\balpha};\bX)-\brho(P^{\balpha};\bX)}_1
  &=\Exp\Big\|\sum_{\ell\in[L]}\alpha_\ell\big(\brho(\hat P_\ell;\bX)
  -\brho(P_\ell;\bX)\big)\Big\|_1 \leq\max_{\ell\in[L]}\Exp\norm{\brho(\hat P_\ell;\bX)-\brho(P_\ell;\bX)}_1 \enspace.
\end{align*}
Finally, from Assumptions~\ref{ass:groups} and~\ref{ass:pol-exp}, we get
\begin{align*}
  \big|\Phi_a(\hat P^{\balpha};\xi)-\Phi_a(P^{\balpha};\xi)\big|
  &=\Big|\Exp\Big[w_\xi(G)\big(\oU_a(\hat P^{\balpha};\bZ)
  -\oU_a(P^{\balpha};\bZ)\big)\Big]\Big| \leq\frac{1}{\ubar{p}\ubar v}
  \Exp\norm{\brho(\hat P^{\balpha};\bX)-\brho(P^{\balpha};\bX)}_1
  \leq\frac{\delta_{n'}}{\ubar{p} \ubar v} \enspace.
\end{align*}
The second inequality follows from the same arguments, with $w_\xi\equiv1$
and $\cI_a$ replaced by $\cI$. 

The proof is concluded.
\end{proof}

\subsubsection{Perturbation error rates}
\begin{lemma}\label{lem:rates:pert}
For all $\eta_{\delta_N}\in(0,1)$, with probability greater than
$1-\eta_{\delta_N}$ over $\Xi^N$, for every $P$, $a\in\cA$ and $t_a\in[0,1]$, it holds that
\[
  \Big|\cC^a_\Xi(P;t_a)-\frac{1}{N}\sum_{k=1}^N\big(\Phi_a(P;\xi_k)-t_a\big)^2
  \Big| \leq  \big(2+2(1-\ubar{\xi})\big)\frac{1-2\ubar{\xi}}{N^{\nicefrac{3}{2}}}
  \sqrt{2\log(\nicefrac{4}{\eta_{\delta_N}})} \eqdef\delta_N \enspace.
\]
\end{lemma}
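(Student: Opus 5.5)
The plan is to apply the stratified sampling bound of Lemma~\ref{lem:rates:stratif} to the function $f_{P,a}(\xi)\eqdef(\Phi_a(P;\xi)-t_a)^2$. By definition $\cC^a_\Xi(P;t_a)=\Exp_\Xi[f_{P,a}(\xi)]$, so the quantity to control is exactly the stratified estimation error of $f_{P,a}$.

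We first compute a Lipschitz constant for $f_{P,a}$. Write $\Phi_a(P;\xi)=\beta_a(P)+\xi\gamma_a(P)$. Since $\oU_a\in[0,1]$, we have $\beta_a,\gamma_a+\beta_a\in[0,1]$, hence $|\gamma_a|\le1$ and $\Phi_a(P;\xi)\in[0,1]$. Then
\[
f_{P,a}'(\xi)=2\gamma_a(P)\big(\Phi_a(P;\xi)-t_a\big),
\]
and with $t_a\in[0,1]$ this gives $|f_{P,a}'|\le 2$, which is within the constant $2+2(1-\ubar\xi)$. A careless route would bound $|\Phi_a-t_a|$ by $1+(1-\ubar\xi)$, using $|\beta_a|\le1$ and $|\xi\gamma_a|\le 1-\ubar\xi$, together with $|\gamma_a|\le1$. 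That produces a constant of the stated form $2(1+(1-\ubar\xi))$, so the lemma's constant is simply a safe Lipschitz bound.

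\textbf{Main obstacle: uniformity.} The statement must hold ``for every $P$'' on a single event over $\Xi^N$, but Lemma~\ref{lem:rates:stratif} is stated for one fixed $f$. I will remove the dependence on $P$ using the fact that $f_{P,a}$ is a quadratic polynomial in $\xi$:
\[
f_{P,a}(\xi)=c_0+c_1\xi+c_2\xi^2,
\]
with $c_0=(\beta_a-t_a)^2$, $c_1=2\gamma_a(\beta_a-t_a)$ and $c_2=\gamma_a^2$. Consequently
\[
\frac1N\sum_k f_{P,a}(\xi_k)-\Exp_\Xi f_{P,a}=c_1 E_1+c_2E_2,
\]
where $E_1$ and $E_2$ are the stratified errors of the two fixed functions $\xi\mapsto\xi$ and $\xi\mapsto\xi^2$. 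These are $1$- and $2(1-\ubar\xi)$-Lipschitz on $[\ubar\xi,1-\ubar\xi]$, respectively.

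Apply Lemma~\ref{lem:rates:stratif} to each of them with confidence $\eta_{\delta_N}/2$ and take a union bound; this is where $\log(4/\eta_{\delta_N})$ comes from. On the resulting event, which depends neither on $P$, nor on $a$, nor on $t_a$,
\[
|c_1E_1+c_2E_2|\le |c_1|\,|E_1|+|c_2|\,|E_2|.
\]
With $|c_1|\le 2$ and $|c_2|\le 1$, this is at most
\[
\big(2+2(1-\ubar\xi)\big)\frac{1-2\ubar\xi}{N^{3/2}}\sqrt{2\log(4/\eta_{\delta_N})}.
\]
This is exactly $\delta_N$, and it explains the precise constant in the lemma. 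The bound holds simultaneously for all $P$, $a\in\cA$ and $t_a\in[0,1]$, which gives the claim.
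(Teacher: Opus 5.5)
Your proposal is correct and follows the paper's proof: expand $(\Phi_a(P;\xi)-t_a)^2$ as a quadratic in $\xi$, bound $|2(\beta_a-t_a)\gamma_a|\le 2$ and $\gamma_a^2\le 1$, and apply Lemma~\ref{lem:rates:stratif} to the two fixed functions $\xi\mapsto\xi$ and $\xi\mapsto\xi^2$ at confidence $\eta_{\delta_N}/2$ each, which gives a single event uniform in $P$, $a$ and $t_a$. The opening Lipschitz computation for $f_{P,a}$ and the remark about a ``careless route'' are not needed; the constant comes exactly from $|c_1|\cdot 1+|c_2|\cdot 2(1-\ubar\xi)$, as your second part shows.
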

\begin{proof}
For fixed $P$, $a\in\cA$ and $t_a\in[0,1]$, we have
\[
  \big(\Phi_a(P;\xi)-t_a\big)^2=\big(\beta_a(P)-t_a\big)^2
  +2\big(\beta_a(P)-t_a\big)\gamma_a(P) \xi+\gamma_a(P)^2 \xi^2 \enspace.
\]
Therefore,
\begin{align*}
  \cC^a_\Xi(P;t_a)-\frac1N\sum_{k=1}^N\big(\Phi_a(P;\xi_k)-t_a\big)^2
  &=2\big(\beta_a(P)-t_a\big)\gamma_a(P)\Big\{\Exp_\Xi[\xi]-\frac{1}{N}
  \sum_{k=1}^N\xi_k\Big\}+\gamma_a(P)^2\Big\{\Exp_\Xi[\xi^2]-\frac1N
  \sum_{k=1}^N\xi_k^2\Big\} \enspace.
\end{align*}
We have $|\beta_a(P)-t_a|\leq 1$ and $|\gamma_a(P)|\leq 1$. Since the maps $\xi\mapsto\xi$
and $\xi\mapsto\xi^2$ are respectively $1$--Lipschitz and
$2(1-\ubar{\xi})$--Lipschitz on $[\ubar{\xi},1-\ubar{\xi}]$, applying
Lemma~\ref{lem:rates:stratif} to each of them with $\eta=\eta_{\delta_N}/2$, and a union bound, with probability greater than $1-\eta_{\delta_N}$ we obtain,
\[
  \Big|\cC^a_\Xi(P;t_a)-\frac1N\sum_{k=1}^N\big(\Phi_a(P;\xi_k)-t_a\big)^2
  \Big| \leq \big(2+2(1-\ubar{\xi})\big)\frac{1-2\ubar{\xi}}{N^{\nicefrac{3}{2}}}
  \sqrt{2\log(\nicefrac{4}{\eta_{\delta_N}})} \enspace.
\]
The proof is concluded.
\end{proof}

\subsubsection{Sample generalization rates}
\begin{lemma}
\label{lem:rates:Phi-gen} Conditionally on $(\cD^{n'},\Xi^N)$, for all
$\eta_{\delta_n}\in(0,1)$, with probability greater than $1-\eta_{\delta_n}$
over $\cD^n$, it holds that
\[
  \max_{a\in\cA,\ell\in[L],k\in[N]}\big|\hat\Phi^a_n(\hat P_\ell;\xi_k)
  -\Phi_a(\hat P_\ell;\xi_k)\big| \leq \frac{1}{\ubar{p}}\sqrt{\frac{\log(\nicefrac{2|\cA|LN}{\eta_{\delta_n}})}
{2n}}\eqdef\delta_n \enspace.
\]
Moreover, for all $\balpha\in\Delta_L$, $a\in\cA$ and
$k\in[N]$, it holds that $|\hat\Phi^a_n(\hat P^{\balpha};\xi_k)
-\Phi_a(\hat P^{\balpha};\xi_k)|\leq\delta_n$.

Similarly, for all
$\eta_{\delta_n(\cU)}\in(0,1)$, with probability greater than
$1-\eta_{\delta_n(\cU)}$, it holds that
\[
\sup_{\balpha\in\Delta_L}\big|\hat\cU_n(\hat P^{\balpha})
  -\cU(\hat P^{\balpha})\big| \leq \sqrt{\frac{\log(\nicefrac{2L}{\eta_{\delta_n(\cU)}})}
{2n}}\eqdef\delta_n(\cU) \enspace.
\]
\end{lemma}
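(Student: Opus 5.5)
The argument is conditional: we fix $(\cD^{n'},\Xi^N)$, so that the dictionary $\hat P_1,\dots,\hat P_L$ and the grid $\xi_1,\dots,\xi_N$ are deterministic. The only randomness left is the independent sample $\cD^n$. For each triple $(a,\ell,k)\in\cA\times[L]\times[N]$, the summands $w_{\xi_k}(G_i)\,\oU_a(\hat P_\ell;\bZ_i)$, $i\in[n]$, are i.i.d. Their mean is $\Phi_a(\hat P_\ell;\xi_k)$, because $\Phi_a(\hat P_\ell;\xi)$ is by convention an expectation over an independent copy of $\bZ$. They are also bounded.

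\textbf{Bounding the summands.} By Assumption~\ref{ass:groups}, $w_{\xi}(g)\le \max(\xi/p_1,(1-\xi)/p_0)\le 1/\ubar p$. We also have $0\le \oU_a\le \oU\le 1$, since $\sum_{i\in\cI_a}\rho_i y_i\le \scalar{\brho}{\by}\le V(\bz)$ by the definition of $V$ as a maximum over the Birkhoff polytope. Each summand therefore lies in $[0,1/\ubar p]$. The two-sided version of Hoeffding's inequality (Lemma~\ref{lem:hoeffding-ineq}) then gives
\[
\Prob\big(|\hat\Phi^a_n(\hat P_\ell;\xi_k)-\Phi_a(\hat P_\ell;\xi_k)|>\varepsilon\big)\le 2e^{-2n\ubar p^{2}\varepsilon^{2}}.
\]
A union bound over the $|\cA|LN$ triples, with $\varepsilon=\delta_n$, makes the failure probability at most $\eta_{\delta_n}$. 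This proves the first display.

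\textbf{Extending to mixtures.} By Lemma~\ref{lem:pol-lin}, both $\hat\Phi^a_n(\hat P^{\balpha};\xi_k)$ and $\Phi_a(\hat P^{\balpha};\xi_k)$ are the same convex combination $\sum_\ell\alpha_\ell(\cdot)$ of the dictionary quantities. The linearity of $\oU_a$ in the policy passes through the empirical mean and through the expectation. The triangle inequality with $\alpha_\ell\ge0$ and $\sum_\ell\alpha_\ell=1$ then bounds the mixture deviation by the maximum over $\ell$. The bound holds on the same event, so it is uniform in $\balpha$ with no further union bound. This uniformity is what allows the data-dependent $\hat\balpha$ to be plugged in later.

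\textbf{Utility bound.} The utility case is the same argument with $w\equiv1$, summands $\oU(\hat P_\ell;\bZ_i)\in[0,1]$, and a union bound over $\ell\in[L]$ only, which yields $\delta_n(\cU)$. Convexity again turns the maximum over $\ell$ into a supremum over $\Delta_L$.

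\textbf{Main subtlety.} The summands must really be i.i.d. with the claimed mean. This depends on the independence of $\cD^n$ from $\cD^{n'}$ and $\Xi^N$, and on the convention that $\Phi_a$ and $\cU$ of data-dependent policies are conditional expectations. Beyond that, the proof is routine.
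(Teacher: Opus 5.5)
Your proposal is correct and follows the paper's proof. Both apply Hoeffding to summands in $[0,1/\ubar p]$ with a union bound over the $|\cA|LN$ triples, extend to every $\balpha\in\Delta_L$ on the same event by linearity of mixtures, and treat the utility with $w\equiv1$ and a union bound over $L$. You also spell out why $\oU_a\le 1$ (via the maximality defining $V(\bz)$), which the paper leaves implicit.
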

\begin{proof}
Let us fix $a\in\cA$, $\ell\in[L]$ and $k\in[N]$, and the policy $\hat P_\ell$ is
fixed conditionally on $\cD^{n'}$. Let us define $\zeta_i\eqdef w_{\xi_k}(G_i) \oU_a(\hat P_\ell;\bZ_i)$, for
$\bZ_i\simiid\cD$. From Assumption~\ref{ass:groups} we have $\zeta_i\in[0,1/\ubar{p}]$, and it holds that $\Exp[\zeta_i]=\Phi_a(\hat P_\ell;\xi_k)$. Applying Hoeffding's inequality (Lemma~\ref{lem:hoeffding-ineq}), for any
$\epsilon>0$, it holds that
\[
  \Prob_{\cD^n}\Big(\Big|\frac{1}{n}\sum_{i=1}^n\zeta_i-\Phi_a(\hat P_\ell;\xi_k)
  \Big|\geq\epsilon \ \Big| \ \cD^{n'},\xi_k \Big) \leq2\exp\big(-2n\ubar{p}^{2}\epsilon^2\big) \enspace.
\]
Setting $\frac{\eta_{\delta_n}}{|\cA|LN}=2\exp(-2n\ubar{p}^{\,2}\epsilon^2)$,
obtaining $\epsilon=\frac{1}{\ubar{p}}\sqrt{\frac{\log(\nicefrac{2|\cA|LN}{\eta_{\delta_n}})}
{2n}}$, and applying a union bound over all triples
$(a,\ell,k)\in\cA\times[L]\times[N]$ concludes the proof of the first
inequality.

Moreover, by Lemma~\ref{lem:pol-lin}, the triangle inequality
and the fact that $\alpha_\ell\geq0$, $\sum_\ell\alpha_\ell=1$, we obtain
\[
  \big|\hat\Phi^a_n(\hat P^{\balpha};\xi_k)-\Phi_a(\hat P^{\balpha};\xi_k)\big|
  =\Big|\sum_{\ell\in[L]}\alpha_\ell\big(\hat\Phi^a_n(\hat P_\ell;\xi_k)
  -\Phi_a(\hat P_\ell;\xi_k)\big)\Big|
  \leq\max_{\ell\in[L]}\big|\hat\Phi^a_n(\hat P_\ell;\xi_k)
  -\Phi_a(\hat P_\ell;\xi_k)\big|\leq\delta_n \enspace.
\]

Finally,  for each fixed $\ell\in[L]$, the random variables
$\oU(\hat P_\ell;\bZ_i)$ are \iid conditionally on $\cD^{n'}$ and take values
in $[0,1]$. Hoeffding's inequality (Lemma~\ref{lem:hoeffding-ineq}) and a union bound over $\ell\in[L]$ yield
$\max_{\ell}|\hat\cU_n(\hat P_\ell)-\cU(\hat P_\ell)|\leq\delta_n(\cU)$ with
probability greater than $1-\eta_{\delta_n(\cU)}$. Linearity transfers the bound to the mixtures.

The proof is concluded.
\end{proof}

\subsection{Proof of Theorem~\ref{thm:rates:main}}
Finally, we are in position to prove the main theorem.
\subsubsection{Constraint rates: stability, unfairness, deployment}
Formal version and the proof of the first statement of Theorem~\ref{thm:rates:main}.
\begin{theorem}[Stability]
\label{thm:rates:stab}
Let $\hat\balpha\in\Delta_L$ be a vector \st $\hat\cC_{n,N}(P^{\hat\balpha};t_a)\leq\delta_a$ for all $a\in\cA$. For every $\eta_{\delta_n},\eta_{\delta_N}\in(0,1)$ and conditionally on $\cD^{n'}$, with probability greater than $1-\eta_{\delta_n}-\eta_{\delta_N}$ over $(\cD^n,\Xi^N)$, it holds that
\[
  \cC^a_\Xi(\hat P^{\hat\balpha};t_a) 
  \leq \big(\sqrt{\delta_a}+\delta_n\big)^2+\delta_N \enspace, \forall a\in\cA\enspace,
\]
where
\[
\delta_n \eqdef \frac{1}  {\ubar{p}}\sqrt{\frac{\log(\nicefrac{2|\cA|LN}{\eta_{\delta_n}})}{2n}} \quad \text{and} \quad \delta_N \eqdef \big(2+2(1-\ubar{\xi})\big)\frac{1-2\ubar{\xi}}{N^{\nicefrac{3}{2}}}
\sqrt{2\log(\nicefrac{4}{\eta_{\delta_N}})} \enspace.
\]
\end{theorem}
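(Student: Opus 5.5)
We bridge the empirical constraint and the population $t$-variance in two steps, using Lemma~\ref{lem:rates:Phi-gen} for the sample error and Lemma~\ref{lem:rates:pert} for the discretization error. We work on the intersection of the two events of these lemmas. Conditionally on $\cD^{n'}$, Lemma~\ref{lem:rates:Phi-gen} holds with probability at least $1-\eta_{\delta_n}$ over $\cD^n$ (given $\Xi^N$). Lemma~\ref{lem:rates:pert} holds with probability at least $1-\eta_{\delta_N}$ over $\Xi^N$. A union bound then gives probability at least $1-\eta_{\delta_n}-\eta_{\delta_N}$. Both events hold uniformly over $\balpha\in\Delta_L$, which is essential since $\hat\balpha$ depends on $(\cD^n,\Xi^N)$.

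\textbf{Step 1 (sample error, via the triangle inequality in $\ell_2$).} Fix $a\in\cA$ and introduce the vectors
\[
\hat{\bphi}\eqdef(\hat\Phi^a_n(\hat P^{\hat\balpha};\xi_k))_{k\in[N]},\qquad \bphi\eqdef(\Phi_a(\hat P^{\hat\balpha};\xi_k))_{k\in[N]}.
\]
The empirical constraint reads $\frac1N\|\hat{\bphi}-t_a\mathbf 1\|_2^2=\hat\cC^a_{n,N}(\hat\balpha;t_a)\le\delta_a$. By the second claim of Lemma~\ref{lem:rates:Phi-gen}, which extends to mixtures by linearity, every coordinate satisfies $|\hat\phi_k-\phi_k|\le\delta_n$. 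Hence $\frac{1}{\sqrt N}\|\hat{\bphi}-\bphi\|_2\le\delta_n$. Minkowski's inequality for the normalized norm $N^{-1/2}\|\cdot\|_2$ then yields
\[
\Big(\frac1N\sum_{k}(\Phi_a(\hat P^{\hat\balpha};\xi_k)-t_a)^2\Big)^{1/2}\le\sqrt{\delta_a}+\delta_n .
\]
Squaring this bounds the discretized population $t$-variance by $(\sqrt{\delta_a}+\delta_n)^2$. Working with square roots is what produces the clean squared form rather than a cross term.

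\textbf{Step 2 (perturbation discretization).} Apply Lemma~\ref{lem:rates:pert} with $P=\hat P^{\hat\balpha}$. That lemma holds uniformly over $P$, because its proof only uses $|\beta_a(P)-t_a|\le1$, $|\gamma_a(P)|\le1$, and stratified-sampling deviations of $\xi\mapsto\xi$ and $\xi\mapsto\xi^2$, neither of which depends on $P$. This replaces the empirical average over $\Xi^N$ by $\cC^a_\Xi(\hat P^{\hat\balpha};t_a)$ at cost $\delta_N$. Adding Steps 1 and 2 gives the claim for every $a\in\cA$ simultaneously, since both events were already uniform in $a$.

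\textbf{Main obstacle.} The delicate point is the dependence of $\hat\balpha$ on the very samples used for estimation. We handle it by insisting that each event be stated uniformly. For Lemma~\ref{lem:rates:Phi-gen}, uniformity over all mixtures follows from the max over the finitely many $(a,\ell,k)$ together with convexity. For Lemma~\ref{lem:rates:pert}, the randomness enters only through the two scalar averages $\frac1N\sum_k\xi_k$ and $\frac1N\sum_k\xi_k^2$, independently of the policy.

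We also need the bounds $|\beta_a(P)-t_a|\le1$ and $|\gamma_a|\le1$ used inside Lemma~\ref{lem:rates:pert}. These require $\oU_a\in[0,1]$, which holds because $\sum_{i\in\cI_a}\rho_i y_i\le V(\bz)$.

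Finally, the conditioning order must be handled with care. Lemma~\ref{lem:rates:Phi-gen} is stated conditionally on $(\cD^{n'},\Xi^N)$, so we integrate over $\Xi^N$ to obtain an unconditional probability over $(\cD^n,\Xi^N)$ given $\cD^{n'}$. This is valid by independence of the samples.
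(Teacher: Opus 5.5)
Your proof is correct and follows the paper's argument. It uses the same two events, Lemma~\ref{lem:rates:Phi-gen} uniformly over mixtures and Lemma~\ref{lem:rates:pert} uniformly over policies, combined by a union bound: first the discretized bound $(\sqrt{\delta_a}+\delta_n)^2$, then the $\delta_N$ correction. The only difference is cosmetic: you apply Minkowski's inequality in the normalized $\ell_2$ norm directly, whereas the paper applies the triangle inequality coordinatewise, squares, and controls the cross term $2\delta_n\frac{1}{N}\sum_k|\hat\Phi^a_n(\hat P^{\hat\balpha};\xi_k)-t_a|$ by Cauchy--Schwarz and feasibility, arriving at the identical bound.
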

\begin{proof}
For a fixed $k\in[N]$, by the triangle inequality and Lemma~\ref{lem:rates:Phi-gen}, we have
\[
  \big|\Phi_a(\hat P^{\hat\balpha};\xi_k)-t_a\big|\leq
  \big|\hat\Phi^a_n(\hat P^{\hat\balpha};\xi_k)-t_a\big|
  +\big|\Phi_a(\hat P^{\hat\balpha};\xi_k)
  -\hat\Phi^a_n(\hat P^{\hat\balpha};\xi_k)\big|
  \leq\big|\hat\Phi^a_n(\hat P^{\hat\balpha};\xi_k)-t_a\big|+\delta_n \enspace.
\]
Squaring both sides and averaging over $k=1,\cdots,N$, we get
\[
  \frac{1}{N}\sum_{k=1}^N\big(\Phi_a(\hat P^{\hat\balpha};\xi_k)-t_a\big)^2
  \leq\hat\cC^a_{n,N}(\hat\balpha;t_a)+2\delta_n\,\frac{1}{N}\sum_{k=1}^N
  \big|\hat\Phi^a_n(\hat P^{\hat\balpha};\xi_k)-t_a\big|+\delta_n^2 \enspace.
\]
By the Cauchy--Schwarz inequality and the feasibility of $\hat\balpha$, i.e.,
$\hat\cC^a_{n,N}(\hat\balpha;t_a)\leq\delta_a$, we have
\[
  \frac{1}{N}\sum_{k=1}^N\big|\hat\Phi^a_n(\hat P^{\hat\balpha};\xi_k)-t_a\big|
  \leq\Big(\frac{1}{N}\sum_{k=1}^N\big(\hat\Phi^a_n(\hat P^{\hat\balpha};\xi_k)
  -t_a\big)^2\Big)^{1/2}=\hat\cC^a_{n,N}(\hat\balpha;t_a)^{1/2}
  \leq\sqrt{\delta_a} \enspace.
\]
Combining the two inequalities above, we get
\begin{equation}\label{eq:grid-bound}
  \frac{1}{N}\sum_{k=1}^N\big(\Phi_a(\hat P^{\hat\balpha};\xi_k)-t_a\big)^2
  \leq\delta_a+2\delta_n\sqrt{\delta_a}+\delta_n^2
  =\big(\sqrt{\delta_a}+\delta_n\big)^2 \enspace.
\end{equation}
Finally, applying Lemma~\ref{lem:rates:pert}
for $P=\hat P^{\hat\balpha}$, combined with \eqref{eq:grid-bound}, we get
\[
  \cC^a_\Xi(\hat P^{\hat\balpha};t_a)\leq\frac{1}{N}\sum_{k=1}^N
  \big(\Phi_a(\hat P^{\hat\balpha};\xi_k)-t_a\big)^2+\delta_N
  \leq\big(\sqrt{\delta_a}+\delta_n\big)^2+\delta_N \enspace.
\]
The proof is concluded.
\end{proof}

The second statement of Theorem~\ref{thm:rates:main} follows from Theorem~\ref{thm:rates:stab}.
\begin{corollary}[Unfairness] 
\label{cor:rate:unf} Under the conditions of Theorem~\ref{thm:rates:stab}, it holds that
\[
  \big|\gamma_a(\hat P^{\hat\balpha})\big|\leq \frac{\sqrt{\delta_a}+\delta_n+\sqrt{\delta_N}}{\sigma_\Xi} \enspace.
\]
\end{corollary}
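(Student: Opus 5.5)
The plan is to combine the bias--variance decomposition \eqref{eq:decomp} with the bound of Theorem~\ref{thm:rates:stab}. Since $\Phi_a(\hat P^{\hat\balpha};\cdot)$ is affine in $\xi$ with slope $\gamma_a(\hat P^{\hat\balpha})$, its perturbation variance is exactly $\cV^a_\Xi(\hat P^{\hat\balpha})=\gamma_a(\hat P^{\hat\balpha})^2\sigma_\Xi^2$. By \eqref{eq:decomp}, the squared bias term is nonnegative, so
\[
\gamma_a(\hat P^{\hat\balpha})^2\sigma_\Xi^2=\cV^a_\Xi(\hat P^{\hat\balpha})\leq \cC^a_\Xi(\hat P^{\hat\balpha};t_a)\enspace.
\]

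Next, on the same event of probability at least $1-\eta_{\delta_n}-\eta_{\delta_N}$ over $(\cD^n,\Xi^N)$, conditionally on $\cD^{n'}$, Theorem~\ref{thm:rates:stab} gives
\[
\cC^a_\Xi(\hat P^{\hat\balpha};t_a)\leq(\sqrt{\delta_a}+\delta_n)^2+\delta_N
\]
for every $a\in\cA$. Taking square roots yields $|\gamma_a|\sigma_\Xi\leq\sqrt{(\sqrt{\delta_a}+\delta_n)^2+\delta_N}$.

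Finally, subadditivity of the square root, $\sqrt{x+y}\leq\sqrt{x}+\sqrt{y}$ for $x,y\geq0$, gives $|\gamma_a|\sigma_\Xi\leq \sqrt{\delta_a}+\delta_n+\sqrt{\delta_N}$. Dividing by $\sigma_\Xi>0$ concludes the proof; positivity holds because $\Xi$ is uniform on a nondegenerate interval, since $\ubar\xi<1/2$ under Assumption~\ref{ass:groups}.

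There is no real obstacle. The only point requiring care is that the identity $\cV^a_\Xi=\gamma_a^2\sigma_\Xi^2$ is applied to the data-dependent policy $\hat P^{\hat\balpha}$. Here $\Phi_a$ is understood as an expectation over an independent copy of $\bZ$, conditionally on the data, so affineness in $\xi$ still holds pointwise on the event, and no extra union bound is needed.
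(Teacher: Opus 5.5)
Your proposal is correct and matches the paper's proof essentially step for step. You bound $\sigma_\Xi^2\gamma_a^2$ (the perturbation variance) by the $t$-variance via the bias--variance decomposition, invoke Theorem~\ref{thm:rates:stab}, and conclude with $\sqrt{u^2+w}\le u+\sqrt{w}$; your remark that $\sigma_\Xi>0$ because the support of $\Xi$ is a nondegenerate interval is a detail the paper leaves implicit.
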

\begin{proof}
From Theorem~\ref{thm:rates:stab} and from Definition~\ref{def:var} we have
\[
  \sigma_\Xi^2 \gamma_a(\hat P^{\hat\balpha})^2\leq
  \cC^a_\Xi(\hat P^{\hat\balpha};t_a)\leq\big(\sqrt{\delta_a}+\delta_n\big)^2
  +\delta_N \enspace.
\]
Taking the square root and using $\sqrt{u^2+w}\leq u+\sqrt w$ for
$u,w\geq0$, we obtain
\[
  \big|\gamma_a(\hat P^{\hat\balpha})\big|\leq
  \frac{\sqrt{(\sqrt{\delta_a}+\delta_n)^2+\delta_N}}{\sigma_\Xi}\leq
  \frac{\sqrt{\delta_a}+\delta_n+\sqrt{\delta_N}}{\sigma_\Xi} \enspace.
\]
The proof is concluded.
\end{proof}

Formal version and the proof of the third statement of Theorem~\ref{thm:rates:main}.
\begin{lemma}[Deployment]
\label{lem:rates:depl} Under the conditions of Theorem~\ref{thm:rates:stab}, simultaneously for all $a\in\cA$ and $\xi' \in [\ubar \xi, 1 - \ubar \xi]$, it holds that
\[
  \big|\Phi_a(\hat P^{\hat\balpha};\xi')-t_a\big|\leq \Big(1+\frac{|\xi'-\mu_\Xi|}{\sigma_\Xi}\Big)\big(\sqrt{\delta_a}+\delta_n
  +\sqrt{\delta_N}\big) \enspace.
\]
\end{lemma}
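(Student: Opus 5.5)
The plan is to use the affine structure $\Phi_a(P;\xi)=\beta_a(P)+\xi\gamma_a(P)$ together with the bias--variance decomposition \eqref{eq:decomp}, applied to $P=\hat P^{\hat\balpha}$, on the same event as Theorem~\ref{thm:rates:stab}. On that event, write $B\eqdef(\sqrt{\delta_a}+\delta_n)^2+\delta_N$, so that $\cC^a_\Xi(\hat P^{\hat\balpha};t_a)\leq B$ for every $a\in\cA$ simultaneously. Uniformity in $\xi'$ will come for free, because nothing further is random.

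\textbf{Step 1: split at the mean.} For any $\xi'\in[\ubar\xi,1-\ubar\xi]$, since $\Phi_a(P;\cdot)$ is affine,
\[
\Phi_a(P;\xi')-t_a=\big(\Exp_\Xi[\Phi_a(P;\xi)]-t_a\big)+(\xi'-\mu_\Xi)\,\gamma_a(P).
\]
By the triangle inequality, this reduces the claim to bounding the two pieces separately.

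\textbf{Step 2: bound the bias term.} By \eqref{eq:decomp}, $(\Exp_\Xi[\Phi_a(P;\xi)]-t_a)^2\leq\cC^a_\Xi(P;t_a)\leq B$. Hence $|\Exp_\Xi[\Phi_a(P;\xi)]-t_a|\leq\sqrt{B}$.

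\textbf{Step 3: bound the slope term.} Again by \eqref{eq:decomp}, $\sigma_\Xi^2\gamma_a(P)^2=\cV^a_\Xi(P)\leq\cC^a_\Xi(P;t_a)\leq B$. Hence $|\xi'-\mu_\Xi|\,|\gamma_a(P)|\leq\sqrt{B}\,|\xi'-\mu_\Xi|/\sigma_\Xi$. This is the same computation as in Corollary~\ref{cor:rate:unf}.

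\textbf{Step 4: combine.} Adding Steps 2 and 3 gives $|\Phi_a(P;\xi')-t_a|\leq\sqrt{B}\,(1+|\xi'-\mu_\Xi|/\sigma_\Xi)$. To conclude, use $\sqrt{u^2+w}\leq u+\sqrt{w}$ to get $\sqrt{B}\leq\sqrt{\delta_a}+\delta_n+\sqrt{\delta_N}$.

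There is no real obstacle. The only point to watch is that the bias and the slope share the single budget $B$. One could sharpen the result via Cauchy--Schwarz, $|b|+c|g|\leq\sqrt{1+c^2/\sigma_\Xi^2}\sqrt{b^2+\sigma_\Xi^2g^2}$, but the cruder sum of the two bounds is exactly what the statement asks for.
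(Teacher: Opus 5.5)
Your proposal is correct and follows the paper's proof essentially step for step. Like the paper, you split $\Phi_a(\hat P^{\hat\balpha};\xi')-t_a$ into the bias term $\Exp_\Xi[\Phi_a(\hat P^{\hat\balpha};\xi)]-t_a$ and the fluctuation $(\xi'-\mu_\Xi)\gamma_a(\hat P^{\hat\balpha})$, and bound both on the event of Theorem~\ref{thm:rates:stab} via the bias--variance decomposition; the slope bound is exactly Corollary~\ref{cor:rate:unf}. The only differences are cosmetic: you apply $\sqrt{u^2+w}\leq u+\sqrt{w}$ once at the end rather than in each piece. Your Cauchy--Schwarz remark is also correct and gives the slightly sharper factor $\sqrt{1+|\xi'-\mu_\Xi|^2/\sigma_\Xi^2}$.
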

\begin{proof}
It holds that
\[
  \big(\Exp_\Xi[\Phi_a(\hat P^{\hat\balpha};\xi)]-t_a\big)^2\leq
  \cC^a_\Xi(\hat P^{\hat\balpha};t_a)\leq\big(\sqrt{\delta_a}+\delta_n\big)^2
  +\delta_N \enspace.
\]
Taking the square root, we get
$|\Exp_\Xi[\Phi_a(\hat P^{\hat\balpha};\xi)]-t_a|\leq\sqrt{\delta_a}+\delta_n
+\sqrt{\delta_N}$. For a fixed $\xi'\in[\ubar{\xi},1-\ubar{\xi}]$, we have
\[
  \Phi_a(\hat P^{\hat\balpha};\xi')-t_a
  =\underbrace{\Phi_a(\hat P^{\hat\balpha};\xi')
  -\Exp_\Xi[\Phi_a(\hat P^{\hat\balpha};\xi)]}_{\text{fluctuation term}}
  +\underbrace{\Exp_\Xi[\Phi_a(\hat P^{\hat\balpha};\xi)]-t_a}_{\text{bias term}} \enspace.
\]
Since $\Phi_a(P;\xi)=\beta_a(P)+\xi\gamma_a(P)$ and
$\Exp_\Xi[\Phi_a(P;\xi)]=\beta_a(P)+\mu_\Xi\gamma_a(P)$, the fluctuation term
equals $(\xi'-\mu_\Xi)\gamma_a(\hat P^{\hat\balpha})$. Applying the triangle
inequality and Corollary~\ref{cor:rate:unf}, we obtain
\[
  \big|\Phi_a(\hat P^{\hat\balpha};\xi')-t_a\big|\leq|\xi'-\mu_\Xi|\,
  \frac{\sqrt{\delta_a}+\delta_n+\sqrt{\delta_N}}{\sigma_\Xi}
  +\sqrt{\delta_a}+\delta_n+\sqrt{\delta_N}
  =\Big(1+\frac{|\xi'-\mu_\Xi|}{\sigma_\Xi}\Big)\big(\sqrt{\delta_a}+\delta_n
  +\sqrt{\delta_N}\big) \enspace.
\]
The proof is concluded.
\end{proof}

\subsubsection{Utility rates}
First, let us define $\epsilon_0\eqdef\min_{a\in\cA}\ens{\delta_a-\cC^a_\Xi(P^{\balpha^0};t_a)}$. Now, for $0\leq\epsilon\leq\epsilon_0$, let us define
\[
  \nu(\epsilon)\eqdef\max_{\balpha\in\Delta_L}\enscond{\cU(P^{\balpha})}
  {\cC^a_\Xi(P^{\balpha};t_a)\leq\delta_a-\epsilon,\ \forall a\in\cA} \enspace.
\]
Under Assumption~\ref{ass:feas}, $\balpha^0$ is feasible for every
$\epsilon\in[0,\epsilon_0]$, and in particular, $\nu(0)=\cU(P^{\balpha^\star})$.

Next, let us prove the following two lemmas that we will need later.
\begin{lemma}[Reverse feasibility]
\label{lem:feas-reverse}
Let $a\in\cA$ be such that
$\delta_n+\delta'_{n'}\leq\sqrt{\delta_a}$. If $\balpha\in\Delta_L$
satisfies
\(
\cC^a_\Xi(P^{\balpha};t_a)\leq\big(\sqrt{\delta_a}-\delta_n-\delta'_{n'}\big)^2-\delta_N ,
\)
then $\hat\cC^a_{n,N}(\balpha;t_a)\leq\delta_a$.
\end{lemma}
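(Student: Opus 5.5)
The argument runs the proof of Theorem~\ref{thm:rates:stab} backwards: we go from the population constraint to the empirical one, passing through the three error sources in reverse order. We work on the intersection of the events of Lemmas~\ref{lem:rates:policy}, \ref{lem:rates:pert} and~\ref{lem:rates:Phi-gen}. These events do not depend on $\balpha$, so all bounds hold uniformly over $\Delta_L$.

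\emph{Step 1 (discretization).} We apply Lemma~\ref{lem:rates:pert} to the population mixture $P^{\balpha}$. Combined with the hypothesis, this gives
\[
\frac1N\sum_{k=1}^N\big(\Phi_a(P^{\balpha};\xi_k)-t_a\big)^2\leq \cC^a_\Xi(P^{\balpha};t_a)+\delta_N\leq\big(\sqrt{\delta_a}-\delta_n-\delta'_{n'}\big)^2 .
\]
We then take square roots, which is legitimate because $\sqrt{\delta_a}-\delta_n-\delta'_{n'}\geq0$ by assumption. This bounds the empirical $L^2(\Xi^N)$ norm of the vector $(\Phi_a(P^{\balpha};\xi_k)-t_a)_{k}$ by $\sqrt{\delta_a}-\delta_n-\delta'_{n'}$.

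\emph{Step 2 (policy and sample errors).} For each $k$ we use the triangle inequality:
\[
|\hat\Phi^a_n(\hat P^{\balpha};\xi_k)-t_a|\leq|\Phi_a(P^{\balpha};\xi_k)-t_a|+|\Phi_a(\hat P^{\balpha};\xi_k)-\Phi_a(P^{\balpha};\xi_k)|+|\hat\Phi^a_n(\hat P^{\balpha};\xi_k)-\Phi_a(\hat P^{\balpha};\xi_k)| .
\]
The second term is bounded by $\delta'_{n'}$ via Lemma~\ref{lem:rates:policy}; this applies since $\xi_k\in[\ubar\xi,1-\ubar\xi]$. The third term is bounded by $\delta_n$ via the mixture part of Lemma~\ref{lem:rates:Phi-gen}.

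We then use Minkowski's inequality in $\ell^2$ over $k$, normalized by $1/N$. Constant vectors have norm equal to the constant, so
\[
\hat\cC^a_{n,N}(\balpha;t_a)^{1/2}\leq\big(\sqrt{\delta_a}-\delta_n-\delta'_{n'}\big)+\delta'_{n'}+\delta_n=\sqrt{\delta_a}.
\]
Squaring gives the claim.

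The main obstacle is bookkeeping rather than analysis. Lemma~\ref{lem:rates:pert} must be applied to $P^{\balpha}$ for a family of $\balpha$, and we need the stratified-sampling event to hold simultaneously for all of them. This is fine because the lemma's event controls only the sample averages of $\xi$ and $\xi^2$, which are independent of $P$. Finally, the probability statement should be the union of the three events, with failure probability $\eta_{\delta_{n'}}+\eta_{\delta_n}+\eta_{\delta_N}$.
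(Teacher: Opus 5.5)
Your proof is correct and follows the paper's argument. You use the same three-term triangle inequality through Lemmas~\ref{lem:rates:policy}, \ref{lem:rates:pert} and~\ref{lem:rates:Phi-gen}; Minkowski's inequality in the normalized $\ell^2$ norm over $k$ replaces the paper's explicit squaring, expansion and Cauchy--Schwarz bound on the cross term, which amounts to the same computation.
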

\begin{proof}
For a fixed $k\in[N]$, by the triangle inequality, Lemma~\ref{lem:rates:policy} and Lemma~\ref{lem:rates:Phi-gen}, it holds that
\begin{align*}
  \big|\hat\Phi^a_n(\hat P^{\balpha};\xi_k)-t_a\big|
  &\leq\big|\hat\Phi^a_n(\hat P^{\balpha};\xi_k)-\Phi_a(\hat P^{\balpha};\xi_k)
  \big|+\big|\Phi_a(\hat P^{\balpha};\xi_k)-\Phi_a(P^{\balpha};\xi_k)\big|
  +\big|\Phi_a(P^{\balpha};\xi_k)-t_a\big|\\
  &\leq\delta_n+\delta'_{n'}+\big|\Phi_a(P^{\balpha};\xi_k)-t_a\big|\enspace.
\end{align*}
Squaring and averaging over $k=1,\cdots,N$, we get
\[
  \hat\cC^a_{n,N}(\balpha;t_a)\leq\frac{1}{N}\sum_{k=1}^N
  \big(\Phi_a(P^{\balpha};\xi_k)-t_a\big)^2+2(\delta_n+\delta'_{n'})\,\frac{1}{N}\sum_{k=1}^N
  \big|\Phi_a(P^{\balpha};\xi_k)-t_a\big|+(\delta_n+\delta'_{n'})^2 \enspace.
\]
By Lemma~\ref{lem:rates:pert} and the assumption on $\balpha$, we have
\[
  \frac{1}{N}\sum_{k=1}^N\big(\Phi_a(P^{\balpha};\xi_k)-t_a\big)^2\leq
  \cC^a_\Xi(P^{\balpha};t_a)+\delta_N\leq\big(\sqrt{\delta_a}-\delta_n-\delta'_{n'}\big)^2 \enspace.
\]
From Cauchy--Schwarz inequality,
$\frac{1}{N}\sum_k|\Phi_a(P^{\balpha};\xi_k)-t_a|\leq\sqrt{\delta_a}-\delta_n-\delta'_{n'}$.

Finally, combining the above, we get
\[
  \hat\cC^a_{n,N}(\balpha;t_a)\leq\big(\sqrt{\delta_a}-\delta_n-\delta'_{n'}\big)^2
  +2(\delta_n + \delta'_{n'})\big(\sqrt{\delta_a}-\delta_n-\delta'_{n'}\big)+(\delta_n+\delta'_{n'})^2=
  \delta_a \enspace.
\]
The proof is concluded.
\end{proof}

\begin{lemma}[Value continuity]
\label{lem:val-cont}
Under Assumption~\ref{ass:feas}, for every $0\leq\epsilon\leq\epsilon_0$, it
holds that
\[
  0\leq\nu(0)-\nu(\epsilon)\leq\epsilon\,C_0 ,\qquad\text{where}\qquad
  C_0\eqdef\frac{\nu(0)-\cU(P^{\balpha^0})}{\epsilon_0} \enspace.
\]
\end{lemma}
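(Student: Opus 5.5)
The plan is a convexity (perturbation) argument: the constraint functions are convex in $\balpha$ and the objective is linear in $\balpha$.

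\textbf{Lower bound.} The inequality $\nu(0)-\nu(\epsilon)\geq0$ follows from monotonicity. The feasible set defining $\nu(\epsilon)$, namely $\{\balpha\in\Delta_L:\cC^a_\Xi(P^{\balpha};t_a)\leq\delta_a-\epsilon\ \forall a\}$, is contained in the feasible set for $\epsilon=0$. Both sets are non-empty because they contain $\balpha^0$ (Assumption~\ref{ass:feas} and $\epsilon\leq\epsilon_0$). Both are compact, so the maxima are attained.

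\textbf{Upper bound: structure.} By Lemma~\ref{lem:pol-lin}, $\balpha\mapsto\Phi_a(P^{\balpha};\xi)$ is linear for each fixed $\xi$. Hence $\balpha\mapsto(\Phi_a(P^{\balpha};\xi)-t_a)^2$ is convex, and taking the expectation over $\xi\sim\Xi$ shows that $\balpha\mapsto\cC^a_\Xi(P^{\balpha};t_a)$ is convex on $\Delta_L$. Likewise, $\balpha\mapsto\cU(P^{\balpha})$ is linear.

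\textbf{Upper bound: construction.} Let $\balpha^\star$ attain $\nu(0)$ and set $\theta\eqdef\epsilon/\epsilon_0\in[0,1]$. Consider the convex combination
\[
\balpha_\theta\eqdef(1-\theta)\balpha^\star+\theta\balpha^0\in\Delta_L .
\]
By convexity, for every $a\in\cA$,
\[
\cC^a_\Xi(P^{\balpha_\theta};t_a)\leq(1-\theta)\delta_a+\theta\big(\delta_a-\epsilon_0\big)=\delta_a-\theta\epsilon_0=\delta_a-\epsilon ,
\]
where the middle step uses $\cC^a_\Xi(P^{\balpha^0};t_a)\leq\delta_a-\epsilon_0$, which is the definition of $\epsilon_0$. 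So $\balpha_\theta$ is feasible for $\nu(\epsilon)$. By linearity of the utility,
\[
\nu(\epsilon)\geq\cU(P^{\balpha_\theta})=\nu(0)-\theta\big(\nu(0)-\cU(P^{\balpha^0})\big)=\nu(0)-\epsilon\,C_0 ,
\]
which rearranges to $\nu(0)-\nu(\epsilon)\leq\epsilon\,C_0$.

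\textbf{Main obstacle and edge case.} The only delicate point is verifying convexity of $\cC^a_\Xi$ in $\balpha$, which requires linearity of $\Phi_a$ in the mixture weights; this is supplied by Lemma~\ref{lem:pol-lin}. The definition of $C_0$ needs $\epsilon_0>0$, which holds because the inequality in Assumption~\ref{ass:feas} is strict. Finally, $C_0\geq0$ because $\balpha^0$ is feasible at $\epsilon=0$, so $\cU(P^{\balpha^0})\leq\nu(0)$.
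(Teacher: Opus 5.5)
Your proposal is correct and follows the paper's proof step by step. It uses the same convex combination $(1-\epsilon/\epsilon_0)\balpha^\star+(\epsilon/\epsilon_0)\balpha^0$, convexity of $\balpha\mapsto\cC^a_\Xi(P^{\balpha};t_a)$ via Lemma~\ref{lem:pol-lin} for feasibility at level $\epsilon$, linearity of $\cU$ for the upper bound, and monotonicity of the feasible sets for the lower bound; your explicit checks that $\epsilon_0>0$ (by strict feasibility) and $C_0\geq 0$ are welcome details the paper leaves implicit here.
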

\begin{proof}
Let us fix $\epsilon\in[0,\epsilon_0]$ and set
\[
  \lambda_\epsilon\eqdef\frac{\epsilon}{\epsilon_0}\in[0,1] ,\qquad
  \balpha_\epsilon\eqdef(1-\lambda_\epsilon)\balpha^\star+\lambda_\epsilon
  \balpha^0 \enspace.
\]
The simplex $\Delta_L$ is convex, hence $\balpha_\epsilon\in\Delta_L$.
Moreover, $\balpha\mapsto\Phi_a(P^{\balpha};\xi)$ is linear for every fixed
$\xi$ (Lemma~\ref{lem:pol-lin}), hence $\balpha\mapsto(\Phi_a(P^{\balpha};
\xi)-t_a)^2$ is convex, and taking the expectation over $\xi\sim\Xi$
preserves convexity. Therefore, for every $a\in\cA$, it holds that
\[
  \cC^a_\Xi(P^{\balpha_\epsilon};t_a)\leq(1-\lambda_\epsilon)\,
  \cC^a_\Xi(P^{\balpha^\star};t_a)+\lambda_\epsilon\,
  \cC^a_\Xi(P^{\balpha^0};t_a)\leq(1-\lambda_\epsilon)\,\delta_a
  +\lambda_\epsilon\,(\delta_a-\epsilon_0)=\delta_a-\epsilon \enspace.
\]
Hence, $\balpha_\epsilon$ is feasible at level $\epsilon$, and thus
$\nu(\epsilon)\geq\cU(P^{\balpha_\epsilon})$. Using the linearity of
$\balpha\mapsto\cU(P^{\balpha})$ (Lemma~\ref{lem:pol-lin}), we obtain
\[
  \cU(P^{\balpha_\epsilon})=(1-\lambda_\epsilon)\,\cU(P^{\balpha^\star})
  +\lambda_\epsilon\,\cU(P^{\balpha^0})=\nu(0)-\lambda_\epsilon
  \big(\nu(0)-\cU(P^{\balpha^0})\big)=\nu(0)-\epsilon\,C_0 \enspace.
\]
Finally, combining the above yields $\nu(0)-\nu(\epsilon)\leq\epsilon C_0$. We
conclude the proof by noticing that $\nu(\epsilon)\leq\nu(0)$ due to the
monotonicity of the feasible sets.
\end{proof}

Finally, the formal version and the proof of the fourth statement of Theorem~\ref{thm:rates:main}.
\begin{theorem}[Utility]
\label{thm:rates:depl}
Let Assumptions~\ref{ass:v-bd},~\ref{ass:groups},~\ref{ass:feas},~\ref{ass:pol-exp} hold. Let
\[
  \epsilon_0\eqdef\min_{a\in\cA}\ens{\delta_a-\cC^a_\Xi(P^{\balpha^0};t_a)}\enspace,
  \qquad
  \delta'_{n'}\eqdef\frac{\delta_{n'}}{\ubar{p}\,\ubar v}\enspace,
  \qquad
  \kappa\eqdef\max_{a\in\cA}2\sqrt{\delta_a}\,\big(\delta_n+\delta'_{n'}\big)
  +\delta_N\enspace,
\]
where $\delta_n$ and $\delta_N$ are as in Theorem~\ref{thm:rates:stab}, and
suppose that $n,n',N$ are large enough so that $\kappa\leq\epsilon_0$. Then,
for all $\eta_{\delta_{n'}},\eta_{\delta_n},\eta_{\delta_N},
\eta_{\delta_n(\cU)}\in(0,1)$, with probability greater than
$1-\eta_{\delta_{n'}}-\eta_{\delta_n}-\eta_{\delta_N}-\eta_{\delta_n(\cU)}$
over $(\cD^{n'},\cD^n,\Xi^N)$, problem \eqref{eq:prob:emp} is
feasible, and every solution $\hat\balpha$ of \eqref{eq:prob:emp} satisfies
\[
  \cU(P^{\balpha^\star})-\cU(\hat P^{\hat\balpha})\leq C_0\kappa
  +2\delta_n(\cU)+\frac{\delta_{n'}}{\ubar v} ,
\]
where
\[
  C_0\eqdef\frac{\cU(P^{\balpha^\star})-\cU(P^{\balpha^0})}{\epsilon_0}
  \in\Big[0,\frac{1}{\epsilon_0}\Big]
  \qquad\text{and}\qquad
  \delta_n(\cU)=\sqrt{\frac{\log(2L/\eta_{\delta_n(\cU)})}{2n}} .
\]
\end{theorem}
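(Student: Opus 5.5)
Work on the intersection of the high-probability events from Lemma~\ref{lem:rates:policy} (over $\cD^{n'}$), Lemma~\ref{lem:rates:Phi-gen} (both the $\Phi$ part and the $\cU$ part, over $\cD^n$), and Lemma~\ref{lem:rates:pert} (over $\Xi^N$). A union bound gives total probability at least $1-\eta_{\delta_{n'}}-\eta_{\delta_n}-\eta_{\delta_N}-\eta_{\delta_n(\cU)}$. All these events hold uniformly in $\balpha\in\Delta_L$, so they apply in particular to the data-dependent $\hat\balpha$.

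\textbf{Step 1: feasibility via an inner population problem.} Set $\epsilon=\kappa\le\epsilon_0$. Let $\balpha_\kappa$ be a maximizer of $\nu(\kappa)$; it exists by compactness, and the feasible set is nonempty since $\balpha^0$ belongs to it. For every $a\in\cA$ it satisfies $\cC^a_\Xi(P^{\balpha_\kappa};t_a)\le\delta_a-\kappa$. Write $r=\delta_n+\delta'_{n'}$. Since $\kappa\ge 2\sqrt{\delta_a}\,r+\delta_N$,
\[
\delta_a-\kappa\le\delta_a-2\sqrt{\delta_a}\,r-\delta_N\le(\sqrt{\delta_a}-r)^2-\delta_N .
\]
We also need $r\le\sqrt{\delta_a}$. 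This follows from $\kappa\le\epsilon_0\le\delta_a$, which gives $2\sqrt{\delta_a}\,r\le\delta_a$ and hence $r\le\sqrt{\delta_a}/2$. Lemma~\ref{lem:feas-reverse} then yields $\hat\cC^a_{n,N}(\balpha_\kappa;t_a)\le\delta_a$ for all $a$. So $\balpha_\kappa$ is feasible for \eqref{eq:prob:emp}, and in particular \eqref{eq:prob:emp} is feasible.

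\textbf{Step 2: utility chain.} By optimality of $\hat\balpha$, $\hat\cU_n(\hat P^{\hat\balpha})\ge\hat\cU_n(\hat P^{\balpha_\kappa})$. Combining this with the uniform bound of Lemma~\ref{lem:rates:Phi-gen} on both sides gives
\[
\cU(\hat P^{\hat\balpha})\ge\cU(\hat P^{\balpha_\kappa})-2\delta_n(\cU).
\]
By Lemma~\ref{lem:rates:policy}, $\cU(\hat P^{\balpha_\kappa})\ge\cU(P^{\balpha_\kappa})-\delta_{n'}/\ubar v=\nu(\kappa)-\delta_{n'}/\ubar v$. Lemma~\ref{lem:val-cont} gives $\nu(\kappa)\ge\nu(0)-C_0\kappa$ with $\nu(0)=\cU(P^{\balpha^\star})$. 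Chaining these inequalities yields the claimed bound.

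\textbf{Step 3: bounds on $C_0$.} We have $C_0\ge0$ because $\balpha^0$ is feasible, so $\cU(P^{\balpha^\star})\ge\cU(P^{\balpha^0})$. We have $C_0\le1/\epsilon_0$ because normalized utilities lie in $[0,1]$.

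\textbf{Main obstacle.} The delicate point is the bookkeeping in Step 1. One must check that the margin $\kappa$ absorbs both the cross term and $\delta_N$ in Lemma~\ref{lem:feas-reverse}; it does, since the dropped $r^2$ term only helps. One must also confirm that the reverse-feasibility argument is applied on the fixed events, uniformly in $\balpha$. Conditioning is subtle here because $\balpha_\kappa$ depends only on population quantities, while Lemma~\ref{lem:rates:policy} is needed for the estimated dictionary $\hat P_\ell$.
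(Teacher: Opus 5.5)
Your proposal is correct and follows the paper's proof essentially step for step. You take the maximizer of $\nu(\kappa)$ and show it is feasible for the empirical problem via Lemma~\ref{lem:feas-reverse}, including the check $\delta_n+\delta'_{n'}\le\sqrt{\delta_a}/2$ from $\kappa\le\epsilon_0\le\delta_a$; you then use the same chain through Lemmas~\ref{lem:rates:Phi-gen}, \ref{lem:rates:policy} and~\ref{lem:val-cont}, and the same justification of $C_0\in[0,1/\epsilon_0]$.
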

\begin{proof}
Let $\tilde\balpha$ be a maximizer in the
definition of $\nu(\kappa)$, which exists since $\kappa\leq\epsilon_0$, so that
$\cU(P^{\tilde\balpha})=\nu(\kappa)$ and, for every $a\in\cA$,
\[
  \cC^a_\Xi(P^{\tilde\balpha};t_a)\leq\delta_a-\kappa\leq\delta_a
  -2\sqrt{\delta_a}\,(\delta_n+\delta'_{n'})-\delta_N\leq\big(\sqrt{\delta_a}-(\delta_n+\delta'_{n'})\big)^2-\delta_N \enspace,
\]
where the second inequality follows from the definition of $\kappa$. Moreover,
since $\cC^a_\Xi(P^{\balpha^0};t_a)\geq0$, we have $\epsilon_0\leq\delta_a$,
hence $2\sqrt{\delta_a}\,(\delta_n+\delta'_{n'})\leq\kappa\leq\epsilon_0\leq\delta_a$,
that is $\delta_n+\delta'_{n'}\leq\nicefrac{\sqrt{\delta_a}}{2}$. Therefore, applying Lemma~\ref{lem:feas-reverse}, we get $\hat\cC^a_{n,N}(\tilde\balpha;t_a)\leq\delta_a$ for all $a\in\cA$. Thus, $\tilde\balpha$ is feasible for~\eqref{eq:prob:emp}, which is thus non-empty and
admits a solution $\hat\balpha$.

Since $\hat\balpha$ is a maximizer of
\eqref{eq:prob:emp} and $\tilde\balpha$ is feasible for \eqref{eq:prob:emp}, we have
$\hat\cU_n(\hat P^{\hat\balpha})\geq\hat\cU_n(\hat P^{\tilde\balpha})$.
Combining this with Lemma~\ref{lem:rates:Phi-gen}, we get
\[
  \cU(\hat P^{\hat\balpha})\geq\hat\cU_n(\hat P^{\hat\balpha})-\delta_n(\cU)
  \geq\hat\cU_n(\hat P^{\tilde\balpha})-\delta_n(\cU)
  \geq\cU(\hat P^{\tilde\balpha})-2\delta_n(\cU) \enspace.
\]
Next, applying Lemma~\ref{lem:rates:policy}, we  obtain
\[
  \cU(\hat P^{\tilde\balpha})\geq\cU(P^{\tilde\balpha})-\frac{\delta_{n'}}{\ubar v}
  =\nu(\kappa)-\frac{\delta_{n'}}{\ubar v} \enspace.
\]
Finally, applying Lemma~\ref{lem:val-cont} with $\epsilon=\kappa\leq\epsilon_0$, we get
$\nu(\kappa)\geq\nu(0)-\kappa C_0=\cU(P^{\balpha^\star})-\kappa C_0$. Combining everything above, we  obtain
\[
  \cU(P^{\balpha^\star})-\cU(\hat P^{\hat\balpha})\leq C_0\kappa
  +2\delta_n(\cU)+\frac{\delta_{n'}}{\ubar v} ,
\]
where $0\leq C_0\leq \nicefrac{1}{\epsilon_0}$, since $\balpha^0$ is feasible for
\eqref{eq:prob:pop} and $\cU(P^{\balpha})\in[0,1]$ for every $\balpha\in\Delta_L$.

The proof is concluded.
\end{proof}
\section{Experiments: additional details}
\label{app:experiments}

\subsection{Implementation Details}
\label{app:impl}

\paragraph{Utility.} Policies rank the candidates of a user by decreasing score, and the uniform policy places every candidate at every position with
equal probability. The position weights are $v_j \propto 1/\log_2(j+1)$ for $j \le K = 20$ and $v_j = 0$ beyond, so that the utility $U(P,\bz)$ is the
NDCG@20 with binary relevance. The utility of category $a$ is $U^a(P,\bz) = \sum_{i \in \mathcal I_a} \rho_i(P;\bx)\, y_i / V(\bz)$, with the same normalisation $V(\bz)$ as $U$, so that $\sum_a U^a = U$.

\paragraph{Rankers.} The history of a user is encoded with PureSVD: $Q$ holds the top $d$ right singular vectors of the binary history matrix of the
training users, and $e_u = h_u Q$ for any user with history $h_u$. The features of a pair $(u,i)$ are $\langle e_u, q_i\rangle$, $e_u \odot q_i$,
$\log(1 + n_i)$, where $n_i$ is the number of training histories containing $i$, and the length of the history of $u$. The sensitive attribute is never a
feature. Rankers are LambdaMART models trained with LightGBM (150 trees, learning rate $0.1$, $31$ leaves, at least $40$ samples per leaf, row and
column subsampling $0.9$ and $0.8$), on all relevant candidates of the training users and $n_{\rm neg}$ non-relevant candidates per user drawn at
random. The group-reweighted rankers use the weights $w_\xi(g) = \xi/p_1$ for $g = 1$ and $(1-\xi)/(1-p_1)$ for $g = 0$, with $p_1$ the share of women among training users.

\paragraph{Problem~\eqref{eq:prob:emp}} $\Xi$ is uniform on $[\underline\xi, 1-\underline\xi]$ with $\underline\xi = 0.1$, so that
$\mu_\Xi = 1/2$ and $\sigma^2_\Xi = (1-2\underline\xi)^2/12$. We use $N = 20$ values of $\xi$, one drawn uniformly in each of $N$ intervals of equal width covering the support. The budgets are $\delta_a = \sigma^2_\Xi (\lambda\,\varepsilon_a)^2$, where $\varepsilon_a$ is twice the standard error of the gap $\widehat\gamma_a(P_0)$ on the estimation users. Since the perturbation variance of category $a$ is $\sigma^2_\Xi\,\gamma_a^2$, the constraint without target amounts to $|\gamma_a| \le \lambda\,\varepsilon_a$: with $\lambda = 1$, the tolerated gap is the estimation noise of the gap of $P_0$. The aggregate constraint of UP uses the same rule with the aggregate gap. The targets are $t_a = \kappa\,\widehat\Phi_a(P_0;\mu_\Xi)$, computed on the estimation users. Problem~\eqref{eq:prob:emp} is solved with CVXPY and the Clarabel solver.

\subsection{Synthetic Data}
\label{app:toy}

\paragraph{Ground truth.} Each of the $8000$ users belongs to group $G = 1$ (women) with probability $0.4$. The $300$ offers are split into three
categories of equal size. Users have a latent taste $u \sim \mathcal N(0, I_8)$ and an affinity $a_{u,c} \sim \mathcal N(0,1)$ for each category $c$, both independent of $G$, and offers have an embedding $q_i \sim \mathcal N(0, I_8/8)$. The true interest of user $u$ in offer $i$ of
category $c(i)$ is $y^\star_{ui} \sim \mathrm{Bernoulli}\big(\mathrm{sigmoid}(\langle u, q_i\rangle + 1.5\, a_{u,c(i)} - 3)\big)$. Since the interests do not depend on $G$, an oracle ranking by $y^\star$ has no gap in any category.

\paragraph{Historical bias.} The recorded interactions keep each true interest, except those of women in managerial offers and of men in administrative offers, which are kept with probability $1-\beta$ only. The two biases have the same strength, so that the aggregate gap remains small
while the gaps of the two categories have opposite signs.

\paragraph{Histories, candidates and splits.} For each user, the catalogue is split at random, independently of $G$, into a history pool ($60\%$ of the offers) and a candidate pool. The history is the set of recorded interactions in the history pool, and the candidates are the offers of the candidate pool. Users are split at random into training ($50\%$), estimation ($25\%$) and test ($25\%$) users. The relevant candidates are the recorded interactions for training users, and the true interests for estimation and test users. Users with fewer than $3$ history items or no relevant candidate are removed.

\paragraph{Settings.} The dictionary is described in Section~\ref{exp:toy}, with $d = 16$ and $n_{\rm neg} = 50$. We use $\beta \in \{0, 0.25, 0.5, 0.75\}$, $\lambda \in \{1, 2, 3, 5, 8\}$ and $\kappa \in \{1, 0.9, \dots, 0.4\}$, and repeat each setting with $3$ seeds, which change the data, the splits and the rankers.

\paragraph{Deployment on new populations.} \label{app:deploy} To illustrate
the uniform deployment guarantee (Theorem~1.3), we deploy the mixtures
selected for $\beta = 0.5$ and $\lambda = 5$ on new populations of $4000$
users, drawn with the same catalogue and a share of women
$p' \in \{0.1, 0.3, 0.5, 0.7, 0.9\}$, and labelled with their true interests.
Figure~\ref{fig:deploy} shows that the utility of $P_0$ over all offers does
not depend on $p'$, whereas the utility of managerial and administrative
offers moves by about $20\%$ in opposite directions. CUP reduces this
dependence as $\kappa$ decreases, at the cost of utility shown in
Figure~\ref{fig:toy}(d). For CUP, the distance between the utility of each
category and its target stays below
$\lambda\varepsilon_a(\sigma_\Xi + |p' - \mu_\Xi|)$, the leading term of the
bound of Theorem~1.3, in $85$ of the $90$ cases (three categories, five
shares, three seeds and two values of $\kappa$).

\begin{figure}[t]
\centering
\includegraphics[width=\linewidth]{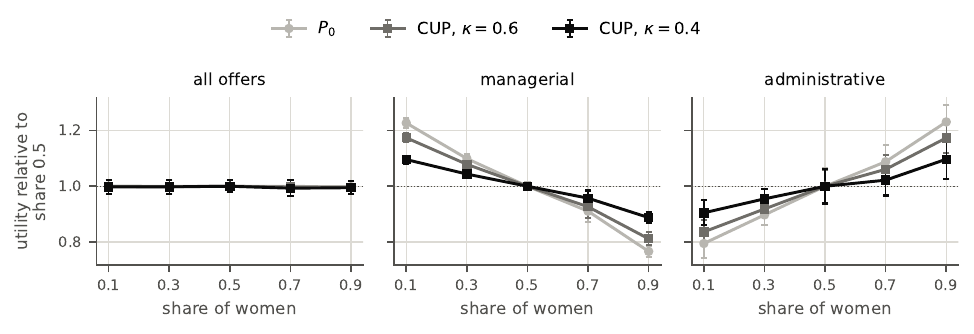}
\caption{Synthetic data, deployment on new users ($\beta = 0.5$,
$\lambda = 5$). Utility of the mixtures selected on the estimation users
(share of women $0.4$), as a function of the share of women among new users,
relative to its mean value at $0.5$. Means and standard deviations over 3
seeds.}
\label{fig:deploy}
\end{figure}

\subsection{MovieLens-1M}
\label{app:ml1m}

\paragraph{Data.} Every rating counts as an interaction. We keep the movies that belong to exactly one of the six genres, and the users with at least
$10$ history items and at least one relevant item that is not in their history. Table~\ref{tab:ml1m-stats} gives the resulting statistics. The splits of users change with the seed, and so do the rankers.

\begin{table}[h]
\caption{MovieLens-1M after preprocessing: $5018$ users, of which $28\%$ women, and $1280$ movies. On average, a user has $61$ history items (median $39$), $16$ relevant items (median $10$) and $1219$ candidates. Share of relevant items: share of the relevant items of each group that belong to the genre.}
\label{tab:ml1m-stats}
\centering
\begin{tabular}{lrrcc}
\toprule
genre & movies & relevant items & \multicolumn{2}{c}{share of relevant items} \\
 & & & women & men \\
\midrule
Sci-Fi & 138 & 13953 & 0.14 & 0.19 \\
Adventure & 108 & 8155 & 0.08 & 0.11 \\
Crime & 182 & 11615 & 0.12 & 0.15 \\
Romance & 424 & 24360 & 0.40 & 0.28 \\
Children's & 159 & 10307 & 0.15 & 0.12 \\
Horror & 269 & 11134 & 0.11 & 0.15 \\
\midrule
total & 1280 & 79524 & & \\
\bottomrule
\end{tabular}
\end{table}

\paragraph{Dictionary.} The $L = 19$ policies are listed in Table~\ref{tab:ml1m-dict}, with $d = 64$ and $n_{\rm neg} = 100$. A cell is a
pair (group, genre), and the under-represented group of a genre is the group for which this genre has the smallest share of its relevant items, among
training users. The cell-based rankers only reweight the relevant candidates, so that $G$ enters through the weights and is never a feature. We use $\kappa \in \{1, 0.95, 0.9, 0.85, 0.8\}$ and $\lambda \in \{1, 1.5, 2, 3, 5, 8\}$.

\begin{table}[h]
\caption{Dictionary on MovieLens-1M. All policies but the first three
are LambdaMART models trained on the same rows.}
\label{tab:ml1m-dict}
\centering
\begin{tabular}{lp{0.62\linewidth}}
\toprule
policy & description \\
\midrule
uniform & every candidate at every position with equal probability \\
popularity & $\log(1+n_i)$, the same ranking for every user \\
PureSVD & $\langle e_u, q_i\rangle$ \\
$P_0$ & unweighted \\
$\xi$-reweighted (5) & weights $w_\xi(g)$, $\xi \in \{0.1, 0.3, 0.5, 0.7, 0.9\}$ \\
one group (2) & trained on the users of one group, deployed on all users \\
cell-balanced & each relevant item weighted by the inverse number of relevant
items of its cell \\
under-represented (1) & relevant items of the under-represented group of every
genre weighted by $5$ \\
under-represented, genre $a$ (6) & the same, for genre $a$ only \\
\bottomrule
\end{tabular}
\end{table}

\paragraph{ITFR and MF-BPR.} \label{app:itfr} We use the code of \citet{wang2024}, with minimal changes that leave the training algorithm untouched (device handling, and calls deprecated in recent versions of NumPy and PyTorch). The user groups are the two genders and the item groups the six genres. The training interactions are the histories of all users and $90\%$ of the relevant items of the training users; the remaining $10\%$ form the validation set. The relevant items of estimation and test users are never used. We keep the hyperparameters of the authors for MovieLens-1M: embeddings of dimension $64$, Adam with learning rate $10^{-3}$, no $\ell_2$ regularisation, batches of $1024$, one negative per positive, $\eta = 0.05$, $\gamma = 0.5$, $\rho = 0.02$ and $\tau = 0.2$. The model of the epoch with the best validation NDCG@20 within $100$ epochs is kept. Candidates are scored by the cosine between user and item embeddings, as in ITFR, and ranked deterministically like the rankers of the dictionary. MF-BPR is trained with the same code and hyperparameters, without the sharpness-aware perturbation and with group weights equal to the share of each group in the batch, which gives the BPR loss; it keeps the normalisation of the scores. 

\paragraph{CatFAIR.} \label{app:catfair} We re-implement in PyTorch the MF model of the code of \citet{kheya2025}, on the same training interactions and validation set as ITFR. The model has $40$ factors, user and item biases and a global bias, initialised as in their code, and is trained with the BPR loss, one negative per positive drawn uniformly, Adam with learning rate $10^{-3}$, weight decay $0.02$ and batches of $1024$. For the users of each batch, a differentiable top-$50$ over all items (hard in the forward pass, $\mathrm{sigmoid}((s - s_{50})/0.01)$ in the backward pass) gives the share of each genre in their list; the regularizer is the sum over genres of the absolute difference between the mean shares of women and men (eq.~10 of their paper), passed through $\mathrm{sigmoid}(0.1\,(g - 0.5))$
and weighted by $\alpha$ against the BPR loss, as in their code. Training stops when the validation NDCG@20 has not improved by $0.0005$ for $10$ epochs, within $100$ epochs. We use their grid $\alpha \in \{0, 0.1, \dots, 0.6\}$, where $\alpha = 0.6$, and $\alpha = 0$ gives the backbone. Candidates are scored by the inner product plus biases. The variant of the released code, which divides the share of
each genre by its share in the catalogue, gives comparable results.

\subsection{Additional Results on MovieLens-1M}
\label{app:ml1m-more}

\paragraph{Feasibility.} Table~\ref{tab:feas} gives the number of splits on which $(\widehat{\mathcal P})$ is feasible for CUP. UP is feasible on every
split except for $\kappa = 1$ and $\lambda = 1$ ($8/10$), and the versions without target are feasible on every split. With $\lambda = 2$, the targets are met on every split for $\kappa \le 0.9$.

\begin{table}[h]
\caption{Number of splits (out of 10) on which CUP is feasible.}
\label{tab:feas}
\centering
\begin{tabular}{lccccc}
\toprule
 & \multicolumn{5}{c}{$\kappa$} \\
$\lambda$ & $1$ & $0.95$ & $0.9$ & $0.85$ & $0.8$ \\
\midrule
$1$   & 0  & 0  & 0  & 0  & 1  \\
$1.5$ & 1  & 3  & 5  & 7  & 7  \\
$2$   & 6  & 9  & 10 & 10 & 10 \\
$\ge 3$ & 10 & 10 & 10 & 10 & 10 \\
\bottomrule
\end{tabular}
\end{table}

\paragraph{Computation time.} Table~\ref{tab:time} details the times of
Section~\ref{sec:experiments}. Times are measured on a MacBook Pro with an Apple M2 chip and 16\,GB of memory, on CPU only. The dictionary times are averaged over $3$ splits run with an empty cache.
Training times are medians, validation included: over $20$
trainings for ITFR (about $23$\,s per epoch), and over the trainings of
splits $1$ to $9$ for CatFAIR ($9$ for $\alpha = 0$, $54$ for $\alpha > 0$,
about $3$\,s per epoch).
\begin{table}[h]
\caption{Computation time on MovieLens-1M, per split.}
\label{tab:time}
\centering
\begin{tabular}{lr}
\toprule
step & time \\
\midrule
\multicolumn{2}{l}{\emph{UP and CUP}} \\
PureSVD and features & $0.2$\,s \\
fitting the $16$ LightGBM rankers & $50$\,s \\
scoring the candidates of estimation and test users & $81$\,s \\
utilities of the $19$ policies & $1.6$\,s \\
one SOCP, UP / CUP & $2.5$ / $7.4$\,ms \\
the $72$ SOCPs of the grid & $0.4$\,s \\
\midrule
\multicolumn{2}{l}{\emph{Baselines, one training}} \\
ITFR ($100$ epochs) & $38$\,min \\
CatFAIR, $\alpha = 0$ & $26$\,s \\
CatFAIR, $\alpha > 0$ & $2.3$\,min \\
\bottomrule
\end{tabular}
\end{table}

\end{document}